\documentclass{article}

\usepackage{microtype}
\usepackage{graphicx}
\usepackage{subcaption}
\usepackage{booktabs} %

\usepackage{hyperref}

\usepackage[accepted]{drafts/icml2026/icml2026}

\usepackage{amsmath}
\usepackage{amssymb}
\usepackage{mathtools}
\usepackage{amsthm}
\usepackage{multirow}
\usepackage{booktabs}
\usepackage{enumitem} %
\usepackage{minted} %

\usepackage[usenames,dvipsnames,table]{xcolor}
\usepackage{packages/colors} %
\usepackage{packages/colab} %
\usepackage{packages/math} %
\usepackage{packages/refs} %
\usepackage{packages/symbols} %
\usepackage{packages/algo} %
\usepackage{wrapfig}
\usepackage{fontawesome5}

\usepackage[capitalize]{cleveref}
\crefname{appsec}{appendix}{appendices}
\Crefname{appsec}{Appendix}{Appendices}

\newcommand{\OursShort}{\textbf{Lo}FlexMDM}
\newcommand{\xhdr}[1]{{\noindent\bfseries #1}}
\icmltitlerunning{Insertion Based Sequence Generation with Learnable Order Dynamics}

\usepackage{etoc}
\begin{document}
\etocdepthtag.toc{mtchapter}              %
\etocsettagdepth{mtchapter}{subsection}   %
\etocsettagdepth{mtappendix}{none}        %

\twocolumn[
  \icmltitle{Insertion Based Sequence Generation with Learnable Order Dynamics}

  \icmlsetsymbol{equal}{*}

  \begin{icmlauthorlist}
    \icmlauthor{Dhruvesh Patel}{equal,umass}
    \icmlauthor{Benjamin Rozonoyer}{equal,umass}
    \icmlauthor{Gaurav Pandey}{ibm}
    \icmlauthor{Tahira Naseem}{ibm}
    \icmlauthor{Ramón Fernandez Astudillo}{ibm}
    \icmlauthor{Andrew McCallum}{umass}
  \end{icmlauthorlist}

  \icmlaffiliation{umass}{University of Massachusetts Amherst~}
  \icmlaffiliation{ibm}{IBM Research}

  \icmlcorrespondingauthor{Dhruvesh Patel}{dhruveshpate@umass.edu}
  \icmlkeywords{Machine Learning, ICML}

  \vskip 0.3in
]
\printAffiliationsAndNotice{\icmlEqualContribution}  %

\begin{abstract}
	Existing insertion-based masked diffusion models that generate sequences by interleaving token insertion with unmasking use fixed schedules that are not dependent on the data.
	For structured sequences like graphs and molecules, learning data-dependent generation orders can improve generation quality by reducing uncertainty over the action space.
	We propose \OursShort{}, an insertion-based masked diffusion model with learnable order dynamics that learns data-dependent insertion and unmasking rates.
	We generalize the discrete flow matching framework to work with variable-length sequences, propose a tractable schedule parameterization and a training objective for joint training of the generator and the target order dynamics.
	On \DeNovo{} and fragment-constrained molecule generation, \OursShort{} improves sample quality over FlexMDM by up to 17.5\% and 6.7\%, respectively. These results show that learning the target generation order can improve insertion-based diffusion models without giving up tractable training.
	We open source the code at \url{https://github.com/dhruvdcoder/LoFlexMDM}.
\vspace{-3mm}
\end{abstract}
\begin{figure*}[htbp]
    \centering
    \includegraphics[width=0.98\linewidth]{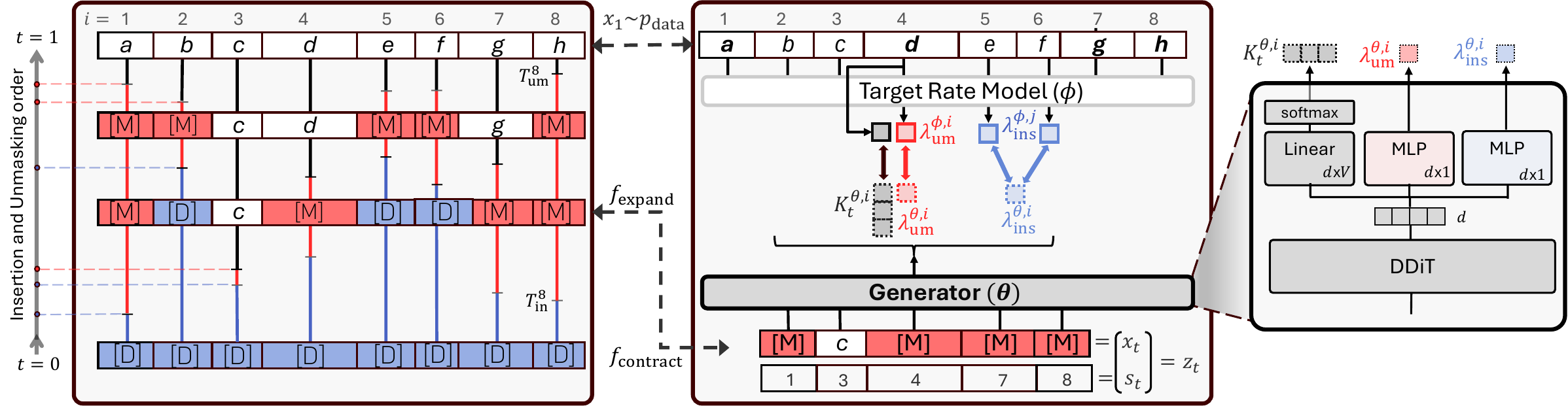}
    \caption{\small
        \textbf{Left:} 
    The left subfigure illustrates how the endpoint-dependent target process grows the final sequence $x_1$ from the empty sequence (all $\drop$ tokens) by sampling the \textcolor{ins}{insertion} time $\Tins^i \sim f^{\phi}_{\Tins^i}(t; x_1)$ for the insertion transition $\drop \to \mask$, followed by the \textcolor{unmask}{unmasking} time $\Tumask^i \sim f^{\phi}_{\Tumask^i|\Tins^i}(t; x_1)$ for the unmasking transition $\mask \to x_1^i$, for each position $i$.
    The target process induces a target generation order: $j$ is generated after $i$ if $\Tumask^j > \Tumask^i$.
    \textbf{Middle:} The middle subfigure (top), shows the auxiliary neural network $\phi$ that takes in a clean sequence $x_1$ and outputs per-token target \textcolor{ins}{insertion} rates $\lambda_{\ins}^{\phi}$ and \textcolor{unmask}{unmasking} rates $\lambda_{\unmask}^{\phi}$, which induce the densities $f^\phi$.
    The map $\contract$ removes the $\drop$ tokens from the intermediate $z_t$ to obtain a partial sequence $x_t$.
    The generator network (bottom right) takes $x_t$ and produces insertion $\lambda_{\ins}^{\theta}$ and unmasking $\lambda_{\unmask}^{\theta}\cdot K^\theta$ rates which are trained to match the endpoint-dependent target rates $\lambda_{\ins}^{\phi}$ and $\lambda_{\unmask}^{\phi}$ using Eq. (11) in Section 3.2.
    \textbf{Right:} The rightmost subfigure shows the generator network which consists of a transformer backbone, a linear LM head that produces token probabilities, and two rate heads that produce insertion and unmasking rates.}
    \label{fig:insertion-unmasking-time-distribution}
    \vspace{-4mm}
\end{figure*}

\section{Introduction}
\label{sec:intorduction}
\vspace{-1.2mm}
Masked diffusion models (MDMs)~\citep{sahoo2024diffusion,shi2024simplified} generate sequences by iteratively unmasking tokens whose positions are expressed as absolute indices in a fixed-length string.
However, many structured sequences like string representations of graphs and molecules are naturally variable-length with dependencies that are better expressed through relative positions.
For example, consider a star-shaped graph with one junction node, and a path that goes through the junction node. All the edges on this path are easy to predict locally, except the edge going outward from the junction, which requires looking ahead to determine the correct arm to choose (see \cref{sec:motivation-insertion} for a concrete example).
Therefore, a natural way to generate such a path is to start from both the endpoints and insert edges one by one inward toward the junction such that by the end the outward edge from the junction also becomes trivially predictable~\citep{patel2025insertionlanguagemodelssequence}.

Insertion-based diffusion models~\citep{patel2026ctmc,ding2026beyond} that generate by inserting tokens at any available gap between existing tokens offer a principled approach to generating variable-length sequences. 
However, by limiting insertions to one token per gap, these models stop short of realizing the full benefits of parallel generation.
More importantly, single-token insertions force greedy local decisions and reduce the model's ability to capture correlations among token groups within the same gap.
A model that inserts multiple mask tokens into a gap and unmasks them in parallel avoids both limitations.

\citet{kimAnyOrderFlexibleLength2025} propose FlexMDM, a variable-length MDM that grows a sequence by inserting multiple mask tokens in any available gap while simultaneously unmasking available masked tokens.
In FlexMDM, each token is first inserted as a mask and later unmasked into a symbol. This two-stage process expands the set of possible generation orders.
Since the target insertion and unmasking rates are fixed (independent of the data sample), 
the model must learn to generate every ground truth sequence in all possible orders in this expanded space.
However, when some orders are more suitable than others, learning to generate in all possible orders, which spreads the probability across many potentially suboptimal generation orders, is wasteful.

We propose {\OursShort{}}, an insertion-based masked diffusion model with \textbf{L}earnable \textbf{o}rder dynamics.
\OursShort{} keeps FlexMDM's two-step insertion and unmasking process, but replaces fixed target schedules with learnable, example-dependent rates, which implicitly define the \emph{order dynamics} as a distribution over generation orders (\cref{fig:insertion-unmasking-time-distribution} left).
To make these learned schedules trainable, we introduce three technical contributions.
First, we extend the Discrete Flow Matching (DFM) framework~\citep{lipmanFlowMatchingGuide2024} to projected partial sequences (\cref{sec:background}) and derive the ELBO for joint training of the generator and the target schedules (\cref{sec:training}). %
Second, we propose a tractable parameterization of the target schedules using Kumaraswamy CDFs~\citep{kumaraswamyGeneralizedProbabilityDensity1980a}, which allows us to sample the event times in parallel while also admitting a closed form likelihood (\cref{sec:parameterization}).
This parameterization allows joint training of the generator and target schedules with a REINFORCE leave-one-out estimator, without simulating full trajectories (\cref{sec:gradient-computation}).

Across graph traversal and molecule generation, \OursShort{} learns generation orders that match the structure of the task.
These learned orders improve sample quality over FlexMDM by up to 17.5 percentage points on \denovo{} and 6.7 points on fragment-constrained molecule generation.
\vspace{-2mm}
\section{Preliminaries: Discrete Flow Matching}\label{sec:background} %

Let $\sX$ be a countable space in which our data lives (e.g., the space of sequences of length $L$). The aim is to construct a generative process that samples from a desired distribution $p_1=p_\textnormal{data}$ over this space.
Discrete Flow Matching~\citep{campbellGenerativeFlowsDiscrete2024,gatDiscreteFlowMatching2024} constructs a continuous time Markov chain $\{X_t\}$ taking values in $\sX$, with time marginals $p_t(x)\coloneqq  \sP(X_t = x)$, which transforms an initial sample $X_0\sim p_0$ to a final sample $X_1\sim p_1=p_\textnormal{data}$, where $p_0$ is easy to sample from. 
For small time steps $h$, the CTMC transitions follow  $x_{t+h} \sim p_{t+h|t}(\cdot\mid x_t)\coloneqq \sP(X_{t+h} = \cdot | X_t = x_t)$, where the transition probability is given by 
\begingroup
\setlength{\abovedisplayskip}{5pt}
\setlength{\belowdisplayskip}{5pt}
\begin{align*}
    p_{t+h|t}(y\mid x)=\indic{x}{y}+h\,R_t(x,y)+o(h), 
\end{align*}
\endgroup
and $\indic{x}{y}$ is the indicator function. 
The initial distribution $p_0$ and the rate $R_t$ of the CTMC determine the generative process completely. \footnote{Please refer to \cref{app:dfm} for an extended discussion.}
Therefore, to obtain the desired CTMC, one first constructs a \emph{probability flow} $p_t$ such that the desired terminal distribution condition is satisfied: $p_{t=1} = p_{\textnormal{data}}$, and then constructs a (non-unique) generator rate $R_t$ that satisfies the Kolmogorov Forward Equation:
\begingroup
\setlength{\abovedisplayskip}{3pt}
\setlength{\belowdisplayskip}{2pt}
\begin{align}
    \partial_t p_t(x)=\sum_{y\in\sX} R_t(y,x)\,p_t(y).
    \label{eq:kfe_marginal}
\end{align}
\endgroup
Such a rate is said to \emph{generate} the marginal probability path $p_t$.
The general template is to choose a family of densities $p_{t|C}$, where $C\in \sC$ is an indexing or conditioning variable, typically an endpoint index $C=(X_0, X_1)$, and then to match the unknown rates $R_t^\theta(x, \cdot)$ with the target conditional rates $R_t(x, \cdot\mid C)$ that generate $p_{t|C}$ (in the sense of \cref{eq:kfe_marginal}) by minimizing the Bregman divergence:
\begingroup
\setlength{\abovedisplayskip}{5pt}
\setlength{\belowdisplayskip}{2pt}
\begin{align*}
    \mathcal L(\theta) = \E_{X_t, C} D\,\!\bigl(R_t(X_t,\cdot\mid C),\,R_t^\theta(X_t,\cdot)\bigr),
\end{align*}
\endgroup
where the expectation is over $C\sim p(C),~X_t\sim p_{t|C}$. In the case of $C=(X_0, X_1)$, where $X_1\sim p_\textnormal{data}$, this maximizes a lower bound on the data log-likelihood ~\citep{shaul2025flow}.

\begin{wrapfigure}{r}{0.40\linewidth}
    \centering
    \vspace{-5mm}
    \includegraphics[width=1.0\linewidth]{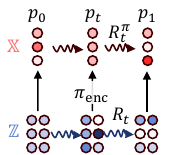}
    \caption{Projected Discrete Flow Matching}
    \label{fig:projected_dfm}
    \vspace{-5mm}
\end{wrapfigure}
\looseness=-1
\textbf{Projected DFM:}~%
The framework of DFM can be generalized to construct a conditional rate on an auxiliary space $\sZ$, where it may be easier to define the dynamics, and then \emph{project} the rate onto the space of interest $\sX$ creating weakly intertwined CTMCs as shown in the schematic in \cref{fig:projected_dfm}.
Let $\{Z_t\}$ be a CTMC taking values in $\sZ$, and define $\pi_{\text{enc}}(x\mid z)$ to be an \emph{encoder kernel} from $\sZ$ to $\sX$, then the auxiliary dynamics induce a (law-dependent) projected CTMC on $\sX$ with rate
\begingroup
\setlength{\abovedisplayskip}{3pt}
\setlength{\belowdisplayskip}{2pt}
\begin{align*}
    R_t^\pi(x,y)
    =
    \sum_{z,w\in\sZ} \pi_{\text{enc}}(y\mid w)\,\pi_{\text{dec},t}(z\mid x)\,R_t(z,w),
\end{align*}
\endgroup
where $z, w\in \sZ$, $R_t$ is the rate for the CTMC $\{Z_t\}$, and $\pi_{\text{dec},t}(z\mid x) \propto p_t(z) \pi_{\text{enc}}(x\mid z)$ is the Bayes decoder.
The same construction applies to conditional rates $R_t(z,w\mid C)$, yielding projected conditional rates $R_t^\pi(x,y\mid C)$.
Even though the target path is defined through the auxiliary process, we can still learn the  generator $R_t^\theta$ over $\sX$ alone by minimizing the Bregman divergence:
\footnote{See \cref{app:dfm:auxiliary_variables} for a detailed discussion.}
\begingroup
\setlength{\abovedisplayskip}{4pt}
\setlength{\belowdisplayskip}{0pt}
\begin{align}\label{eq:projected-rate-matching-loss-abstract}
    \mathcal L(\theta) = \E_{X_t, C} D\,\!\bigl(R_t^\pi(X_t,\cdot\mid C),\,R_t^\theta(X_t,\cdot)\bigr).
\end{align}
\endgroup

\looseness=-1
\xhdr{Positional Independence Assumption.} For product spaces $\sX^L$ like the space of sequences, the size of the general rate matrix grows exponentially with the length $L$. To avoid this, sparsity is introduced into the rate matrix by assuming per-position independence of the target conditional probability $p_{t|C}(x_t | c)$, i.e., $p_{t|C}(x_t | c) = \prod_{i=1}^L p_{t|C}^i(x_t^i | c)$, where the superscript $i$ denotes the $i$-th position, which results in the corresponding rate matrix being a mixture of per-position rates~\cite{campbellContinuousTimeFramework2022}. We will make a similar assumption for projected rate matching for variable length sequences.

\section{Variable Length Sequence Generation}\label{sec:variable-length-sequence-generation}
Let $\sX \coloneqq \bigcup_{k=1}^L \vocab^k$ be the space of variable length sequences. 
Since there are multiple possible alignments of a partial sequence $x_t$ to a clean sequence $x_1$, it is not easy to construct conditional marginals $p_{t|C}(x_t | x_1)$ as is typically done in masked DFM.
However, one can use an auxiliary space that includes the alignment information and perform projected rate matching. 
Similarly to \citet{kimAnyOrderFlexibleLength2025}, we define the auxiliary space of partial sequences as $\sZ \coloneqq \bigcup_{k\in \natset{L}} \vocab^k \times \sS_{k}$, where $\sS_{k} \coloneqq \{(s^1, \dots, s^k)\in \natset{L}^k ~ | ~ s^1 < \dots < s^k\}$ is the space of ordered tuples of positions and $L$ is the maximum length. An example element of $\sZ$ is shown on the right of \cref{fig:insertion-unmasking-time-distribution} as the input $(x_t, s_t)$ to the generator.
Additionally, we note the following isomorphism.
Let $\drop$ be a special token outside the vocabulary $\vocab$ that is used to indicate a dropped position. Then $\bar \sZ \coloneqq (\sV\cup \{\drop\})^{L}$ is isomorphic to $\sZ$ through the mapping $\contract: \bar \sZ \to \sZ$ that removes the $\drop$ tokens from  $\bar z$ to form $x$, and moves the exact position information to $s$ to produce $z = (x, s)$. 
For example, as shown in \cref{fig:insertion-unmasking-time-distribution}, for $L=8$ and $\bar z=(\mask, \drop, c, \mask, \drop, f, \mask, \mask)$, $\contract(\bar z) = ((\mask, c, \mask, f, \mask, \mask), (1, 3, 4, 6, 7, 8))$.
From this point on, we will use $z$ to mean an element of $\sZ$ or $\bar \sZ$ interchangeably. \TODO{Change the figure to not use text but just characters.}
We also define $\rmdrop: \bar \sZ \to \sX$ by deleting each $\drop$ slot; for the example above, $\rmdrop(\bar z) = (\mask, c, \mask, f, \mask, \mask)$.
Note that the generator operates only on $\sX$, and the $\drop$ token is not part of the input or the output of the generator.

\looseness=-1
\xhdr{Mixture paths in the auxiliary space.} 
We construct mixture paths on $\bar\sZ$ by setting the conditioning variable as $(z_1, z_0)$, where $z_1=\left(x_1\odot \drop \overset{L-\text{len}(x_1)}{\dots}\drop~, ~~\natset{L}\right)$ and $z_0=\drop\overset{L}{\dots}\drop$, with $\odot$ denoting the sequence concatenation operation. 
The goal is to define per-position target rates such that the transition $\drop \to \mask \to z_1^i$ occurs for each position $i$ where $z_1^i\neq \drop$ so that we can express a tractable distribution over the orderings of insertion and unmasking events for the entire sequence, which can be learned along with the generator parameters.
For the positions $i$ where $z_1^i = \drop$, the target exit rate is identically zero leaving the position pinned at $\drop$.

\subsection{Learnable Order Dynamics}\label{sec:learnable-order-dynamics}
Any regular CTMC over a countable state space can be represented using a Poisson process and an embedded state transition kernel. 
Therefore, we define the per-position target rates as
\begingroup
\setlength{\abovedisplayskip}{5pt}
\setlength{\belowdisplayskip}{5pt}
\begin{align*}
        R^i_t(z, w^i |\, (z_1, z_0)) = \lambda_t^i(z; (z_1, z_0)) \cdot K^i_t(z, w^i \mid (z_1, z_0)),
    \end{align*}
\endgroup
where $\lambda_t^i(z; (z_1, z_0))$ is the state and time dependent arrival rate of the Poisson process, and $K^i_t(z, w^i \mid (z_1, z_0))$ is the state transition kernel---i.e, when an arrival occurs at the $i$-th position, the state changes according to the categorical distribution $K^i_t$. 
In order for $R_t(z, w)$ to be a valid rate, $\lambda_t^i > 0$ and $K^i_t(z, z^i) = 0$.
This characterization allows us to express the order dynamics by controlling the hazard rates, without changing the terminal state distributions of the process.
For our case of two events per position, {\color{ins} insertion} and {\color{unmask} unmasking}, we denote the respective event times as $\Tins^i\coloneqq T^i_1$  and $\Tumask^i\coloneqq T^i_2$. 
The left side of \Cref{fig:insertion-unmasking-time-distribution} provides an intuitive illustration of the induced order dynamics.
\TODO{Add more details to the caption.}
\cref{prop:target-conditional-hazard-rate} shows that the target rates that generate the desired conditional probability flow can be expressed using two increasing right continuous functions.
Note that since $z_0=\drop\overset{L}{\dots}\drop$ is fixed, we will hide the dependence on $z_0$ from the notation for brevity.
\begingroup
\setlength{\abovedisplayskip}{5pt}
\setlength{\belowdisplayskip}{2pt}
\begin{proposition}[Target conditional rates]\label{prop:target-conditional-hazard-rate}
    Let $\sP(\Tins^i \leq t) = F_{\ins}^i(t)$ and $\sP(\Tumask^i \leq t \mid \Tins^i=s) = \delta(t \geq s) \frac{F_{\unmask}^i(t) - F_{\unmask}^i(s)}{1 - F_{\unmask}^i(s)}$, where $F_{\ins}^i(t)$ and $F_{\unmask}^i(t)$ are right continuous and non-decreasing functions on $[0,1]$ with
    $F_{\ins}^i(0), F_{\unmask}^i(0)=(0,0)$, and $F_{\ins}^i(1), F_{\unmask}^i(1)=(1,1)$.
    Then the rate
    \begingroup
    \setlength{\abovedisplayskip}{4pt}
    \setlength{\belowdisplayskip}{0pt}
    \begin{align*}
        \lambda_t^i(z;\,z_1) &= \begin{cases}
            \frac{\dot F_{\ins}^i(t; z_1)}{1 - F_{\ins}^i(t; z_1)} & \text{if } z^i = \drop\not=z_1^i, \\
            \frac{\dot F_{\unmask}^i(t; z_1)}{1 - F_{\unmask}^i(t; z_1)} & \text{if } z^i = \mask, \\
            0 & \text{otherwise}.
            \end{cases}
    \end{align*}
    \endgroup
    \begin{align*}
            K^i_t(z, w^i \mid z_1) &= \begin{cases}
                \indic{\mask}{w^i} & \text{if } w^i = \drop, \\
                \indic{z_1^i}{w^i} & \text{if } w^i = \mask, \\
                \text{undefined} & \text{otherwise}.
            \end{cases}. 
    \end{align*}
    generates the marginals (i.e., satisfies the KFE):
    \begin{align*}
        &p_t^{i}(z^i | z_1) \nonumber\\
        &=\begin{cases} 
            \begin{aligned}[t]
            &{\sP(\Tins^i > t)}\,\indic{\drop}{z^i} \\
            &\quad + {\sP(\Tins^i \leq t < \Tumask^i)}\,\indic{\mask}{z^i} \\
            &\quad + {\sP(\Tumask^i \leq t)}\,\indic{z_1^i}{z^i}
            \end{aligned}& \text{if } z_1^i \not= \drop, \\
            \indic{\drop}{z^i} & \text{if } z_1^i = \drop, \\
            \end{cases}
    \end{align*}
    and these marginals satisfy the boundary conditions $p_1^{i}(z^i | z_1) = \indic{z_1^i}{z^i}$ and $p_0^{i}(z^i | z_1) = \indic{\drop}{z^i}$.
    \end{proposition}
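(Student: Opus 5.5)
The plan is to reduce everything to a per-position, three-state CTMC on $\{\drop,\mask,z_1^i\}$ and verify the Kolmogorov forward equation coordinatewise. Under the positional independence assumption, the joint conditional marginal factorizes as $p_t(z\mid z_1)=\prod_i p_t^i(z^i\mid z_1)$. The joint rate is a sum of per-position rates that change only coordinate $i$, and each per-position rate depends on $z$ only through $z^i$. It is then standard (as in \citet{campbellContinuousTimeFramework2022}) that the product of per-position marginals satisfies the joint KFE as soon as each factor satisfies its one-dimensional KFE with rate $R^i_t(z^i,w^i)=\lambda^i_t K^i_t$. This is because $\partial_t\prod_i p^i_t=\sum_i(\partial_t p^i_t)\prod_{j\ne i}p^j_t$. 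So I will only check the single-position claim.

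For positions with $z_1^i=\drop$, the exit rate is zero in every state reachable from $z_0^i=\drop$. The marginal $\indic{\drop}{z^i}$ is therefore constant in time, and both boundary conditions are trivial.

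For $z_1^i\ne\drop$, write $F=F_{\ins}^i$ and $G=F_{\unmask}^i$, and assume differentiability where $\dot F,\dot G$ appear (otherwise interpret derivatives in the distributional sense). I will compute the three marginal masses explicitly:
\begin{align*}
a(t)&=\sP(\Tins^i>t)=1-F(t),\\
c(t)&=\sP(\Tumask^i\le t)=\int_0^t \frac{G(t)-G(s)}{1-G(s)}\,dF(s),\\
b(t)&=1-a(t)-c(t).
\end{align*}
The KFE for this chain reads $\dot a=-h_F a$, $\dot b=h_F a-h_G b$, and $\dot c=h_G b$, with $h_F=\dot F/(1-F)$ and $h_G=\dot G/(1-G)$. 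The first equation is immediate: $-\dot F=-h_F(1-F)$. Since the total mass is conserved, the second equation follows from the other two. The real work is therefore the third equation.

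Differentiating $c$ under the integral gives $\dot c(t)=\dot G(t)\int_0^t \frac{dF(s)}{1-G(s)}$. The boundary term vanishes because the integrand is zero at $s=t$. I then need $b(t)(1-G(t))=(1-G(t))\int_0^t \frac{1-G(t)}{1-G(s)}\,dF(s)$, which I will verify by computing
\[
b(t)=\sP(\Tins^i\le t<\Tumask^i)=\int_0^t\Bigl(1-\tfrac{G(t)-G(s)}{1-G(s)}\Bigr)dF(s)=\int_0^t\frac{1-G(t)}{1-G(s)}\,dF(s).
\]
Hence $h_G\,b=\dot G\int_0^t \frac{dF(s)}{1-G(s)}=\dot c$.

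I expect the main obstacle to be rigor at the edges rather than algebra. First, the hazard $\dot G/(1-G)$ blows up as $G\to1$ near $t=1$, so the KFE should be claimed on $[0,1)$ with boundary values obtained by limits. Second, jumps of $F$ or $G$ require reading $dF$ as a Stieltjes measure. Third, the definition of $K^i_t$ must be read so that an arrival in state $\drop$ moves to $\mask$ and an arrival in state $\mask$ moves to $z_1^i$; I will state this interpretation of the displayed kernel explicitly.

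For the boundary conditions, at $t=0$ we have $F(0)=0$, so $a=1$. At $t=1$ we have $F(1)=G(1)=1$, so $a=0$, and $b(1)=\int_0^1\frac{1-G(1)}{1-G(s)}\,dF(s)=0$ for $s<1$; the contribution at $s=1$ is handled by right-continuity, giving a single point of measure zero or conventionally immediate unmasking. Therefore $c(1)=1$, which gives $p_1^i=\indic{z_1^i}{z^i}$.
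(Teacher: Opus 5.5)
Your proposal is correct and follows essentially the same route as the paper. You differentiate the three per-position masses, apply Leibniz to $\sP(\Tumask^i\le t)$ with the boundary term vanishing, and note that the common integral $\int_0^t \dot F_{\ins}^i(s)/(1-F_{\unmask}^i(s))\,\mathrm{d}s$ cancels in $f_{\Tumask^i}(t)/\sP(\Tins^i\le t<\Tumask^i)$, leaving $\dot F_{\unmask}^i/(1-F_{\unmask}^i)$. Your additions are sound refinements that the paper leaves implicit: the lift to the joint KFE via the product form, the explicit boundary-condition check, and reading the displayed $K^i_t$ with the case split on $z^i$ rather than $w^i$.
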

\endgroup
\subsubsection{Parameterization}\label{sec:parameterization}
\begin{wrapfigure}{r}{0.35\linewidth}
    \vspace{-5mm}
    \centering
    \includegraphics[trim=0.4cm 0 0 0, clip, width=1.0\linewidth]{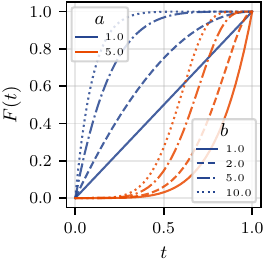}
    \vspace{-5mm}
    \caption{\\Kumaraswamy CDF shapes for different parameter values.}
    \label{fig:kumaraswamy-cdf-effect-of-ab}
    \vspace{-2mm}
\end{wrapfigure}
By learning the parameters $\phi$ governing $F_{*}^{\phi,i}: [0,1]\times \sZ \to [0,1]$ of \cref{prop:target-conditional-hazard-rate} we can learn the target order dynamics. 
The choice of the functional form of $F_{*}$ depends on the analytical tractability of the resulting $p^{\phi}_t(z | z_1)$.
Specifically, we need to be able to sample $z_t \sim p^{\phi}_t(z | z_1)$ (ideally using inverse CDF sampling) in order to compute the projected rate matching loss (\cref{eq:projected-rate-matching-loss-abstract}) and also be able to compute the likelihood $p^{\phi}_t(z | z_1)$ for updating $\phi$, ideally in closed form to avoid numerical integration.
Note that since the likelihood is independent per-position, if we satisfy the desiderata stated above, we obtain an efficient training procedure as discussed in \cref{sec:training}.
As discussed in Appendix~\ref{app:likelihood-under-phi}, the likelihood $p^{\phi}_t(z | z_1)$ requires computing the following integral:
\begingroup
\setlength{\abovedisplayskip}{3pt}
\setlength{\belowdisplayskip}{5pt}
\begin{align*}
    \sP(\Tins^i \leq t < \Tumask^i) = (1 - F^i_{\unmask}(t))\int_0^t \frac{\dot F^i_{\ins}(s)}{1 - F^i_{\unmask}(s)} \dd s.
\end{align*}
\endgroup
The Kumaraswamy CDF~\citep{kumaraswamyGeneralizedProbabilityDensity1980a} (\cref{fig:kumaraswamy-cdf-effect-of-ab}): 
\begingroup
\setlength{\abovedisplayskip}{3pt}
\setlength{\belowdisplayskip}{3pt}
\begin{align}
    F_{*}^{\phi,i}(t; z_1) = 1 - (1 - t^{a_{*}})^{b_{*}}, \quad * \in \{\ins, \unmask\}
\end{align}
\endgroup
is an ideal candidate for the functional form of $F_{*}$ as it allows inverse CDF sampling, can produce internal modes, and for $a^i_{\ins}=a^i_{\unmask}=a$, allows us to obtain closed form expressions for the likelihood (see Appendix~\ref{app:kumaraswamy} for more details).
Therefore, we parameterize $b^{\phi,i}_{\ins}(z_1)$ and $b^{\phi,i}_{\unmask}(z_1)$ by a transformer with weights represented by $\phi$, and set $a^{i}_{\ins}=a^{i}_{\unmask}=a$ to be a constant, which yields the following closed form expressions for the hazard rates:
\begingroup
\setlength{\abovedisplayskip}{5pt}
\setlength{\belowdisplayskip}{3pt}
\begin{align}\label{eq:final-hazard-rates}
    \lambda_{\ins}^i(t) = \bins\, \frac{at^{a-1}}{1-t^a}
    ,\qquad
    \lambda_{\unmask}^i(t) = \bun\, \frac{at^{a-1}}{1-t^a}.
\end{align}
\endgroup
Interestingly, in this final form, we can see that the shape function $\frac{at^{a-1}}{1-t^a}$ is the same for both insertion and unmasking, and only the hazard rate multipliers $\bins$ and $\bun$ control the overall rate.
Under the time-change $\tau = -\log(1-t^a)$, it becomes and exponential race with the parameters $\bins$ and $\bun$ with precedence constraints (insertion before unmasking).

\subsection{Training}\label{sec:training}

The training objective, illustrated on the right side of \Cref{fig:insertion-unmasking-time-distribution} can be derived using projected rate matching (Appendix~\ref{app:dfm:projected_rate_matching}), and is given by the unmasking and insertion loss as follows:
\begingroup
\setlength{\abovedisplayskip}{5pt}
\setlength{\belowdisplayskip}{3pt}
\begin{align}
    \mathcal{L}^{\phi,\theta}_{\unmask} = \E_{\phi} \Bigg[&\sum_{i=1}^{\len(x_t)} \indic{\mask}{x_t^i} \Big({ -\lambda_{\unmask, t}^{\phi, s_t^i}(z_1) \log K^{\theta,i}_t(z_1^{s_t^i} | x_t)} \nonumber\\
    & +D_{\psi}(\lambda_{\unmask, t}^{\phi, s_t^i}(z_1), \lambda_{\unmask, t}^{\theta,i}(x_t))\Big)\Bigg], \label{eq:loss}\\
    \mathcal{L}^{\phi,\theta}_{\ins} = \E_{\phi} \Bigg[&\sum_{i=0}^{\len(x_t)} 
    D_{\psi}\left(\sum_{j \in \gap_i(s_t)} \lambda_{\ins, t}^{\phi, j}(z_1)\,,\,\lambda_{\ins, t}^{\theta,i}(x_t)\right)\Bigg],\nonumber
\end{align}
\endgroup
where the expectation is over $t\sim \mathcal{U}(0,1)$, $z_1\sim p_{\text{data}}$, $z_t\sim p_{t|z_1}^{\phi}$, and $x_t = \rmdrop(z_t)$. The Bregman divergence $D_{\psi}$ is computed using the negative entropy function $\psi(r) = \sum_k r_k \log r_k$.
We provide a step-by-step derivation of the training objective in Appendix~\ref{app:loss-fn-derivation}.
\vspace{-1mm}
\begin{algorithm}
    \caption{Training for \OursShort{}}
    \label{alg:lflexmdm_training}
    \begin{algorithmic}[1]
    \REQUIRE Dataset $\mathcal{D}$, generator parameters $\theta$, target rate parameters $\phi$, learning rate $\eta$
    \WHILE{not converged}
        \STATE Sample $z_1 \sim p_{\text{data}}$ and $t \sim \mathcal{U}(0, 1)$
        \STATE Compute $(a^{\phi,i}, b^{\phi,i}) \gets \phi(z_1)$ for all $i \in \natset{L}$
        \FOR{$k=1$ to $2$}
            \STATE Sample $\Tins^{i,(k)} \sim F^{\phi,i}_{\ins}$, 
            \STATE $\Tumask^{i,(k)} \gets \textsc{TruncSample}(\Tins^{i,(k)}, F^{\phi,i}_{\unmask})$
            \STATE $z_t^{i,(k)} \gets \begin{cases} \drop & \Tins^{i,(k)} > t \\ \mask & \Tins^{i,(k)} \leq t < \Tumask^{i,(k)} \\ z_1^i & \text{otherwise} \end{cases}$
            \STATE $x_t^{(k)}, \s_t^{(k)} \gets \contract(z_t^{(k)})$
            \STATE $\mathcal{L}_k \gets \mathcal{L}^{\phi,\theta}(x_t^{(k)}, \s_t^{(k)}, z_1)$
        \ENDFOR
        \STATE Update $(\theta, \phi) \gets (\theta, \phi) - \eta \nabla^{\text{auto-grad}}_{(\theta,\phi)} \mathcal{L}$ \MyCommentShort{\cref{eq:full_gradient_estimator}}
    \ENDWHILE
    \end{algorithmic}
\end{algorithm}
\vspace{-2mm}

Furthermore, \cref{prop:projected_rate_matching_terminal_kl} in Appendix~\ref{app:dfm:projected_rate_matching} shows that using the standard argument of using KL-divergence between CTMC path measures, followed by the data processing inequality, this loss maximizes a lower bound on the data log-likelihood under the generator.
\begingroup
\makeatletter
\def\th@remark{%
  \thm@headfont{\itshape}%
  \normalfont
  \thm@preskip=4pt\relax
  \thm@postskip=4pt\relax
}
\makeatother
\begin{remark}
For the case of fixed $\phi$ such that $a^{i, \phi}_{*}=b^{i, \phi}_{*}=1$ for all $i$, the Bregman divergence $D(\lambda_{\unmask}^{\phi}, \lambda_{\unmask}^{\theta})$ becomes zero, and we obtain the loss equivalent to FlexMDM \citep{kimAnyOrderFlexibleLength2025} written in terms of rates instead of posterior probabilities. 
Therefore, we call our method \OursShort{}, where L stands for learnable order dynamics.
\end{remark}
\begin{remark}[Schedule regularization]
Since both the generator and target rates are trainable, they can take extreme values creating numerical degeneracy in the training. To discourage such degenerate schedules, we add a schedule regularization term as described in Appendix~\ref{sec:schedule-regularization}.
\end{remark}
\endgroup
\vspace{-3mm}
\begin{algorithm}
    \caption{Tau-Leaping Sampling for \OursShort{}}
    \label{alg:lflexmdm_sampling}
    \begin{algorithmic}[1]
    \STATE \textbf{Input:} Time steps $0 = t_0 < t_1 < \cdots < t_N = 1$, nucleus $p$, confidence function $c$
    \STATE Initialize $x_0 = \emptyset$ (empty sequence)
    \FOR{$n = 0$ to $N-1$}
        \STATE $\tau \gets t_{n+1} - t_n$~~~and~~~~~~$x_{n+1} \gets x_n$
        \STATE Compute $\lambda_{\ins,t_n}^{\theta,i}(x_n)$, $\lambda_{\unmask,t_n}^{\theta,i}(x_n)$, and $K^{\theta,i}(\cdot | x_n)$
        
        \FOR{each gap position $i = 0$ to $\len(x_n)$}
            \STATE $N_{\ins}^i \sim \text{Poisson}(\lambda_{\ins,t_n}^{\theta,i}(x_n) \cdot \tau)$
            \IF{$N_{\ins}^i > 0$}
                \STATE Insert $N_{\ins}^i$ masks at gap position $i$ in $x_{n+1}$
            \ENDIF
        \ENDFOR
        \FOR{each masked position $i$ in $x_{n+1}$}
            \FOR{each token $y \in \vocab$}
                \STATE $N_{\unmask}^{i,y} \sim \text{Poisson}(\lambda_{\unmask,t_n}^{\theta,i}(x_n) \cdot K^{\theta,i}(y | x_n) \cdot \tau)$
            \ENDFOR
            \STATE $u_i \gets \indicone{\sum_{y \in \vocab} N_{\unmask}^{i,y} > 0}$ \MyCommentShort{candidate unmask locations}
        \ENDFOR
        \IF{\texttt{confidence} enabled}
            \STATE $u \gets \textsc{SELECT}(\sum_i u_i;\, c(K^\theta, \lambda^\theta))$ \MyCommentShort{select $|\{i:u_i{=}1\}|$ locations with largest confidence $c_i$}
        \ENDIF
        \FOR{each masked position $i$ in $x_{n+1}$ with $u_i=1$}
            \STATE $x_{n+1}^i \overset{\text{nucleus}(p)}{\sim} K^{\theta,i}(\cdot | x_n)$ 
        \ENDFOR
    \ENDFOR
    \RETURN $x_N$
    \end{algorithmic}
    \end{algorithm}
    \vspace{-4mm}

\subsubsection{Gradient computation}\label{sec:gradient-computation}
Since the expectation depends on the target rate network parameters $\phi$, we resort to score based gradient estimation for $\phi$ and use REINFORCE leave-one-out \citep{kool2019buy} to reduce variance with minimal impact on training efficiency.  
The estimator for $\nabla_{(\phi,\theta)} \E \left[\mathcal{L}^{\phi,\theta}\right]$ is
\begingroup
\setlength{\abovedisplayskip}{4pt}
\setlength{\belowdisplayskip}{4pt}
\begin{align} \label{eq:full_gradient_estimator}
        &\frac{1}{2} \left[\mathcal{L}^{\phi,\theta}(z^{(1)}) - \mathcal{L}^{\phi,\theta}(z^{(2)})\right] \left[\nabla_\phi \log \frac{p_{t|z_1}^{\phi}(z^{(1)})}{p_{t|z_1}^{\phi}(z^{(2)})} \right]\nonumber\\
        &~~~~\quad+ \frac{1}{2}\left[\nabla_\phi \mathcal{L}^{\phi,\theta}(z^{(1)}) + \nabla_\phi \mathcal{L}^{\phi,\theta}(z^{(2)})\right]\nonumber\\
        &~~~~\frac{1}{2}\left[\nabla_\theta \mathcal{L}^{\phi,\theta}(z^{(1)}) + \nabla_\theta \mathcal{L}^{\phi,\theta}(z^{(2)})\right],
\end{align}
\endgroup
where $z^{(1)}, z^{(2)} \sim p_{t|z_1}^{\phi}$ and $z_1 \sim p_{\text{data}}$ (see Appendix~\ref{app:gradient} for the details).
Intuitively, for fixed $z_1$, the two partial states $z_t^{(1)}$ and $z_t^{(2)}$ differ only in event timing.
The first term in \cref{alg:lflexmdm_training} uses $\mathcal{L}_1-\mathcal{L}_2$ as an advantage signal from the generator.
This term upweights the partial state that $\theta$ can complete better, thereby increasing the log-likelihood of that state (order) under $\phi$.
We perform the gradient updates using a single backward pass of autograd using the stop-gradient (sg) operator as shown in \cref{alg:lflexmdm_training}.

\subsection{Sampling}
As shown in \cref{fig:insertion-unmasking-time-distribution} on the bottom right, the generator network predicts the rates $\lambda_{\ins,t}^{\theta,i}(x_t)$ for insertion at gap position $i$ and $\lambda_{\unmask,t}^{\theta,i}(x_t)~K^{\theta,i}(\cdot | x_t)$ for unmasking at masked position $i$.
We use a simple $\tau$-leaping sampler~\citep{gillespie2001approximate} that allows us to sample multiple events simultaneously. Additionally, the guarantee for the correctness of adaptive sampling from FlexMDM~\citep{kimAnyOrderFlexibleLength2025} extends to \OursShort{}. The complete sampling algorithm is shown in \cref{alg:lflexmdm_sampling}.

\begin{figure*}[t]
    \centering
    \begin{minipage}[t]{0.35\textwidth}
        \vspace{0pt}
        \centering
        \renewcommand{\arraystretch}{0.5}
        \captionsetup{width=\linewidth}
        \captionof{table}{\small Exact Match (\%) results for star graph traversal tasks. {\small \textcolor{lflexmdm}{\faSnowflake}/\textcolor{flexmdm_orange}{\faFire} =frozen/trainable; \checkmark/\texttimes\ = conf. enabled/disabled.}}
        \label{tab:star_results}
        \vspace{-2mm}
        {
        \setlength{\tabcolsep}{2pt}%
        \begin{tabular}{@{}r@{\hspace{2pt}}lccccc}
        \toprule
        & \textbf{Model} & {\small${b_\unmask}$} & {\small$b_{\text{in}}$} & {\small{conf.}} & \textbf{medium} & \textbf{hard} \\
        \midrule
        {\tiny\textcolor{gray}{1}} & ARM & & & {\small N/A} & 75.0 & 23.0 \\
        \midrule
        {\tiny\textcolor{gray}{2}} & MDM & & & {\small \checkmark} & 36.5 & 21.0 \\
        \midrule
        {\tiny\textcolor{gray}{3}} & \multirow{2}{*}{FlexMDM} & {\small\textcolor{lflexmdm}{\faSnowflake}} & {\small\textcolor{lflexmdm}{\faSnowflake}} & {\small\texttimes} & 89.6 & 6.0 \\
        {\tiny\textcolor{gray}{4}} & & {\small\textcolor{lflexmdm}{\faSnowflake}} & {\small\textcolor{lflexmdm}{\faSnowflake}} & {\small\checkmark} & 91.3 & 7.4 \\
        \midrule
        {\tiny\textcolor{gray}{5}} & \multirow{3}{*}{\shortstack[l]{\OursShort{}\\{\small (Separate)}}} & {\small\textcolor{flexmdm_orange}{\faFire}} & {\small\textcolor{flexmdm_orange}{\faFire}} & {\small\texttimes} & 92.3 & 38.0 \\
        {\tiny\textcolor{gray}{6}} & & {\small\textcolor{lflexmdm}{\faSnowflake}} & {\small\textcolor{flexmdm_orange}{\faFire}} & {\small\texttimes} & 93.2 & 87.9 \\
        {\tiny\textcolor{gray}{7}} & & {\small\textcolor{lflexmdm}{\faSnowflake}} & {\small\textcolor{flexmdm_orange}{\faFire}} & {\small\checkmark} & 93.0 & 88.1 \\
        \midrule
        {\tiny\textcolor{gray}{8}} & \multirow{2}{*}{\shortstack[l]{\OursShort{}\\{\small (Shared)}}} & {\small\textcolor{lflexmdm}{\faSnowflake}} & {\small\textcolor{flexmdm_orange}{\faFire}} & {\small\texttimes} & 93.6 & 34.2 \\
        {\tiny\textcolor{gray}{9}} & & {\small\textcolor{lflexmdm}{\faSnowflake}} & {\small\textcolor{flexmdm_orange}{\faFire}} & {\small\checkmark} & 93.2 & 69.7 \\
        \bottomrule
        \end{tabular}%
        }
    \end{minipage}\hfill
    \begin{minipage}[t]{0.62\textwidth}
        \vspace{0pt}
        \centering
        \refstepcounter{figure}\label{fig:star-generation-order}%
        \begin{subfigure}[t]{0.42\linewidth}
            \centering
            \includegraphics[width=\linewidth]{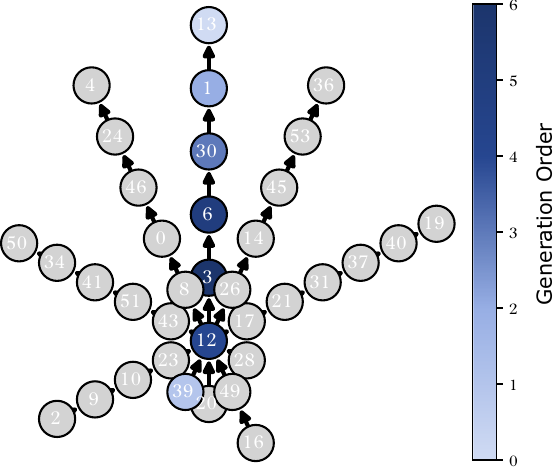}
            \caption{An example of generation trajectory of \OursShort{}. The generation starts from the end points of the arm and moves towards the junction.}
            \label{fig:lflexmdm_generation_graph}
        \end{subfigure}\hfill
        \begin{subfigure}[t]{0.52\linewidth}
            \centering
            \includegraphics[width=\linewidth]{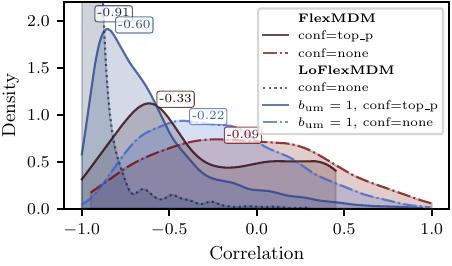}
            \caption{Correlation between generation order and distance from the junction node, both normalized by path length. Only examples that achieve 100\% exact match are considered.}
            \label{fig:order_vs_distance_correlation}
        \end{subfigure}
    \end{minipage}
    \vspace{-5mm}
\end{figure*}

\section{Experiments}\label{sec:experiments}
\looseness=-1
\xhdr{Model.} For all our experiments, we use DDiT~\citep{lou2023discrete} with RoPE, and adaptive layer-norm (AdaLN) for conditioning on the time variable \cite{peebles2023scalable}. 
We implement the target and generator rates 
with MLP scalar heads on top of the transformer. 
For FlexMDM (baseline), we follow the setup in \citet{kimAnyOrderFlexibleLength2025}, which also uses DDiT architecture, but has a length prediction head for insertion instead of a rate prediction head.
We experiment with a ``separate'' setting where the target rates are predicted by a separate auxiliary transformer, and with a memory-efficient ``shared'' setting where all heads share the same backbone transformer, which is updated by both the $\nabla_\theta$ and $\nabla_\phi$ components of the \cref{eq:full_gradient_estimator} gradient estimator.
\footnote{We use xLM library (\url{https://github.com/dhruvdcoder/xlm-core}) for implementing the training and evaluation harness. See \cref{sec:experiments:star:details} for more details about the experiment setup.}

\subsection{Graph Traversal}\label{sec:experiments:graph-traversal}
\looseness=-1
\xhdr{Setup.} We use the star graphs dataset from \citet{patel2025insertionlanguagemodelssequence}.
It consists of star shaped graphs with one junction node and multiple arms coming in and out of the junction node.
The task is to produce a path from the given starting and terminal nodes.
The input graph is stringified as edge pairs arranged in a random order (the order is random but remains fixed for a given graph). The input string is this stringified graph followed by the starting and terminal nodes, and the model needs to generate the path (edges in the correct order) connecting the starting and terminal nodes.
We use the medium and hard difficulty settings of the dataset as both of these use graphs with variable arm lengths. The graphs in the hard setting vary significantly in the number of arms as well as the arm lengths, making it a challenging task.
However, if the edges of the output path are generated starting from the end points moving towards the junction node, then the task becomes trivial even for a local model, i.e., a model that does not plan ahead.

\subsubsection{Design choices}\label{sec:design_choices}
We use the graph traversal to explore the design choices for parameterizing the target hazard rates $\lambda_t^{\phi,i}(z;\, z_1)$.

\looseness=-1
\xhdr{Flexibility \textit{vs} Stability}:~ As noted in \cref{sec:parameterization}, the hazard rates have have two free parameters for each position, $\bins^i$ and $\bun^i$.
We first examine the effect of flexibility of the target rate model $\lambda_t^{\phi,i}(z;\, z_1)$. 
Interestingly, the performance of the generator improves when the \textit{unmasking} target rate $b_\unmask$ is fixed (\textcolor{lflexmdm}{\faSnowflake}) as opposed to trainable (\textcolor{flexmdm_orange}{\faFire}), as seen from the slight improvement in medium difficulty and a significant improvement in hard difficulty of \OursShort{} (\cref{tab:star_results}, line {\small\color{gray} 5} \textit{vs} line {\small\color{gray} 6}).
We find that the additional flexibility in learning $\bun$ destabilizes training.
Appendix~\ref{sec:additional-results:flexibility-vs-stability} discusses this observation in more detail.
Note that fixing $\bun$ still leaves enough room for learning of insertion order, i.e., if $h_1 < h_2$, then for any $t$, $\sP(\Tumask \leq t \mid \Tins = h_1) < \sP(\Tumask \leq t \mid \Tins = h_2)$. The preference for unmasking orders will show up in $K^{\theta,i}_t$ values, which can be used for confidence-based position selection during decoding.
Based on this observation, we fix $\bun=1$ in the subsequent experiments.

\looseness=-1
\xhdr{Shared \textit{vs} Separate Backbone}:~ Additionally, we explore the use of a shared DDiT backbone for the target rate model and generator. 
For the hard version of the star graph traversal task, using a shared backbone leads to considerable drop in performance.

\subsubsection{Analysis: Generation Order}\label{sec:experiments:star-generation-order}
We investigate whether the introduction of learnable target rates leads to generations in near optimal order.
The optimal \emph{local} order for star graphs starts from the end points of the arm and moves towards the junction.
We analyze examples of the complete history of the generation under \OursShort{} and FlexMDM. As shown in the example in \Cref{fig:lflexmdm_generation_graph}, \OursShort{} indeed learns to generate in the optimal order.
To inspect this further, we compute the correlation between the generation order and distance from the junction node. Since the arm lengths vary significantly for the hard version of the task, we normalize both the generation order and distance from the junction node by the path length.
\Cref{fig:order_vs_distance_correlation} shows the correlation between the generation order (normalized by path length) and normalized distance from the junction node.
The ideal correlation for optimal local order should be -1. Interestingly, we observe that \OursShort{} with trainable $\bun$ (line {\small\color{gray} 5} in \cref{tab:star_results}) achieves mean correlation of $-0.91$, when compared to $-0.22$ for the case when we fix $\bun=1$ (line {\small\color{gray} 6}) and do not use confidence-based position selection during inference (line {\small\color{gray} 7} in \cref{tab:star_results}). Using confidence-based position selection for the fixed $\bun=1$ case achieves a mean correlation of $-0.60$. This supports our initial observation in \cref{sec:design_choices} that fixing $\bun=1$ stabilizes the training while still providing enough room for learning an ideal insertion order, which can be elicited even in the steps which use confidence-based position selection for unmasking.
Interestingly, using confidence-based position selection for FlexMDM changes the correlation from $-0.09$ (light red) to $-0.33$ (dark red), lower than the respective correlation for \OursShort{} with fixed $\bun=1$ (light blue to dark blue).
In Appendix~\ref{sec:additional-results:star-graph-correlation}, we repeat this analysis without the exact-match filter to show that filtering on exact match does not alter the conclusion of the order analysis: correlations are slightly weaker without the filter, but the relative ordering across settings is unchanged.

\begin{figure}
    \centering
    \begin{subfigure}[t]{0.23\textwidth}
        \centering\hspace*{0.05cm}
        \includegraphics[trim=0cm 0.4cm 0cm 0cm, clip, width=0.98\linewidth]{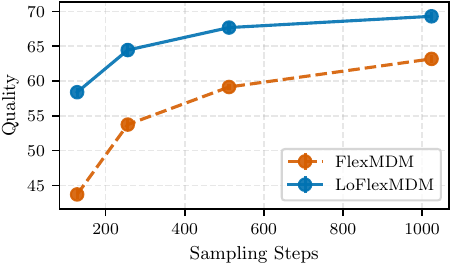}
    \end{subfigure}
    \hfill
    \begin{subfigure}[t]{0.23\textwidth}
        \centering\hspace*{0.05cm}
        \includegraphics[trim=0cm 0.4cm 0cm 0cm, clip, width=0.98\linewidth]{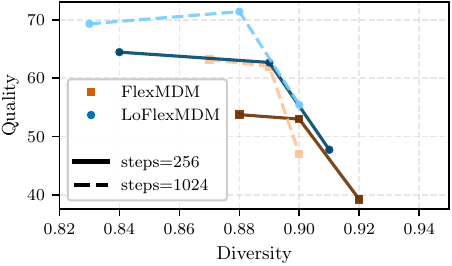}
    \end{subfigure}
    \begin{subfigure}[t]{0.23\textwidth}
        \centering
        \includegraphics[width=\linewidth]{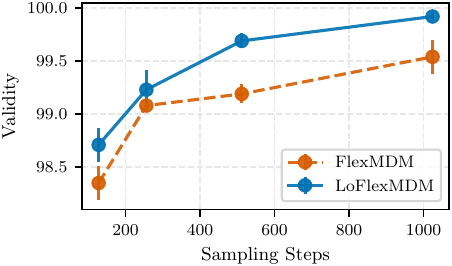}
    \end{subfigure}
    \hfill
    \begin{subfigure}[t]{0.23\textwidth}
        \centering
        \includegraphics[width=\linewidth]{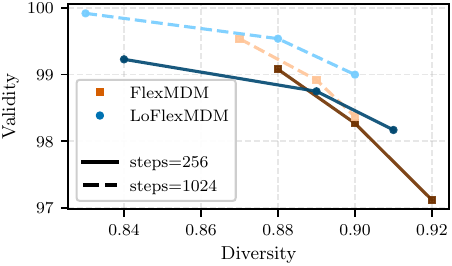}
    \end{subfigure}
    \caption{\textbf{Left}: Quality \textit{vs} sampling steps for \denovo{} generation for \OursShort{} and FlexMDM. \textbf{Right}: Quality \textit{vs} diversity for \denovo{} generation for \OursShort{} and FlexMDM.}
    \label{fig:molgen_analysis}
    \vspace{-5mm}
\end{figure}
\begin{table*}[t]
    \centering
    \renewcommand{\arraystretch}{0.7} 
        \caption{Results for \denovo{} small molecule generation. For both FlexMDM and \OursShort{}, we use 1024 sampling steps.\\ {\small\textcolor{lflexmdm}{\faSnowflake} indicates frozen, \textcolor{flexmdm_orange}{\faFire} indicates trainable; \checkmark/\texttimes\ indicates whether confidence-based position selection for unmasking is enabled, and $p$ indicates the truncation threshold for nucleus sampling~\cite{Holtzman2020The}. $\dagger$ \citet{noutahi2024gotta} $\ddagger$ \citet{lee2025genmol}}}
    \label{tab:molgen_denovo_results}
    \vspace{-2mm}
    {\setlength{\tabcolsep}{8pt}%
    \begin{tabular*}{\linewidth}{@{\extracolsep{\fill}}lccccccc>{\columncolor{softgray}}c}
    \toprule
    \textbf{Model} & {\small$b_\unmask$} & {\small$b_{\text{in}}$} & {\small conf.} & {\small$p$} & \textbf{Validity (\%)} & \textbf{Diversity} & \textbf{Uniqueness (\%)} & \textbf{Quality (\%)} \\
    \midrule
    {$\text{SAFE-GPT}^{\dagger}$} & & & & & 94.0 $\scriptstyle{\pm 0.4}$ & 0.879 $\scriptstyle{\pm 0.001}$ & \textbf{100.0} $\scriptstyle{\pm 0.0}$ & 54.7 $\scriptstyle{\pm 0.3}$ \\
    \midrule
    {$\text{MDM}^{\ddagger}$} & & & & & 96.7 $\scriptstyle{\pm 0.3}$ & 0.896 $\scriptstyle{\pm 0.001}$ & \underline{99.3} $\scriptstyle{\pm 0.2}$ & 53.8 $\scriptstyle{\pm 1.7}$ \\
    \midrule
    \multirow{2}{*}{FlexMDM} & {\small\textcolor{lflexmdm}{\faSnowflake}} & {\small\textcolor{lflexmdm}{\faSnowflake}} & {\small\texttimes} & & 98.9 $\scriptstyle{\pm 0.1}$ & 0.890 $\scriptstyle{\pm 0.000}$ & 99.6 $\scriptstyle{\pm 0.1}$ & 62.0 $\scriptstyle{\pm 0.7}$ \\
    & {\small\textcolor{lflexmdm}{\faSnowflake}} & {\small\textcolor{lflexmdm}{\faSnowflake}} & {\small\checkmark} & & 67.8 $\scriptstyle{\pm 0.3}$ & \textbf{0.940} $\scriptstyle{\pm 0.000}$ & 61.7 $\scriptstyle{\pm 0.7}$ & 5.5 $\scriptstyle{\pm 0.4}$ \\
    \midrule
    \multirow{4}{*}{\shortstack[l]{\OursShort{}\\{\small (Aux. Size=medium)}}} & {\small\textcolor{lflexmdm}{\faSnowflake}} & {\small\textcolor{flexmdm_orange}{\faFire}} & {\small\checkmark} & 1.0 & 99.0 $\scriptstyle{\pm 0.0}$ & 0.900 $\scriptstyle{\pm 0.000}$ & 99.3 $\scriptstyle{\pm 0.2}$ & 55.4 $\scriptstyle{\pm 0.3}$ \\
    & {\small\textcolor{lflexmdm}{\faSnowflake}} & {\small\textcolor{flexmdm_orange}{\faFire}} & {\small\checkmark} & 0.5 & \textbf{99.9} $\scriptstyle{\pm 0.0}$ & 0.830 $\scriptstyle{\pm 0.000}$ & 93.2 $\scriptstyle{\pm 0.6}$ & 69.3 $\scriptstyle{\pm 0.7}$ \\
    & {\small\textcolor{lflexmdm}{\faSnowflake}} & {\small\textcolor{flexmdm_orange}{\faFire}} & {\small\texttimes} & 1.0 & 99.1 $\scriptstyle{\pm 0.1}$ & 0.900 $\scriptstyle{\pm 0.000}$ & 99.6 $\scriptstyle{\pm 0.1}$ & 55.1 $\scriptstyle{\pm 0.6}$ \\
    & {\small\textcolor{lflexmdm}{\faSnowflake}} & {\small\textcolor{flexmdm_orange}{\faFire}} & {\small\texttimes} & 0.5 & 99.7 $\scriptstyle{\pm 0.1}$ & 0.850 $\scriptstyle{\pm 0.000}$ & 99.5 $\scriptstyle{\pm 0.1}$ & \textbf{79.5} $\scriptstyle{\pm 0.7}$ \\
    \midrule
    \multirow{4}{*}{\shortstack[l]{\OursShort{}\\{\small (Aux. Size=small)}}} & {\small\textcolor{lflexmdm}{\faSnowflake}} & {\small\textcolor{flexmdm_orange}{\faFire}} & {\small\checkmark} & 1.0 & 99.1 $\scriptstyle{\pm 0.1}$ & 0.900 $\scriptstyle{\pm 0.000}$ & 99.4 $\scriptstyle{\pm 0.1}$ & 55.5 $\scriptstyle{\pm 0.7}$ \\
    & {\small\textcolor{lflexmdm}{\faSnowflake}} & {\small\textcolor{flexmdm_orange}{\faFire}} & {\small\checkmark} & 0.5 & 99.7 $\scriptstyle{\pm 0.1}$ & 0.820 $\scriptstyle{\pm 0.000}$ & 93.6 $\scriptstyle{\pm 0.5}$ & 73.3 $\scriptstyle{\pm 0.6}$ \\
    & {\small\textcolor{lflexmdm}{\faSnowflake}} & {\small\textcolor{flexmdm_orange}{\faFire}} & {\small\texttimes} & 1.0 & 99.2 $\scriptstyle{\pm 0.1}$ & 0.900 $\scriptstyle{\pm 0.000}$ & 99.6 $\scriptstyle{\pm 0.2}$ & 55.4 $\scriptstyle{\pm 0.7}$ \\
    & {\small\textcolor{lflexmdm}{\faSnowflake}} & {\small\textcolor{flexmdm_orange}{\faFire}} & {\small\texttimes} & 0.5 & 99.7 $\scriptstyle{\pm 0.1}$ & 0.850 $\scriptstyle{\pm 0.000}$ & 99.4 $\scriptstyle{\pm 0.1}$ & 79.3 $\scriptstyle{\pm 0.5}$ \\
    \midrule
    \multirow{4}{*}{\shortstack[l]{\OursShort{}\\{\small (Aux. Size=xtiny)}}} & {\small\textcolor{lflexmdm}{\faSnowflake}} & {\small\textcolor{flexmdm_orange}{\faFire}} & {\small\checkmark} & 1.0 & 99.0 $\scriptstyle{\pm 0.1}$ & 0.900 $\scriptstyle{\pm 0.000}$ & 99.7 $\scriptstyle{\pm 0.0}$ & 55.8 $\scriptstyle{\pm 0.7}$ \\
    & {\small\textcolor{lflexmdm}{\faSnowflake}} & {\small\textcolor{flexmdm_orange}{\faFire}} & {\small\checkmark} & 0.5 & 99.7 $\scriptstyle{\pm 0.1}$ & 0.830 $\scriptstyle{\pm 0.000}$ & 93.4 $\scriptstyle{\pm 0.3}$ & 72.2 $\scriptstyle{\pm 0.6}$ \\
    & {\small\textcolor{lflexmdm}{\faSnowflake}} & {\small\textcolor{flexmdm_orange}{\faFire}} & {\small\texttimes} & 1.0 & 99.0 $\scriptstyle{\pm 0.1}$ & 0.900 $\scriptstyle{\pm 0.000}$ & \underline{99.8} $\scriptstyle{\pm 0.1}$ & 55.1 $\scriptstyle{\pm 0.7}$ \\
    & {\small\textcolor{lflexmdm}{\faSnowflake}} & {\small\textcolor{flexmdm_orange}{\faFire}} & {\small\texttimes} & 0.5 & 99.7 $\scriptstyle{\pm 0.1}$ & 0.850 $\scriptstyle{\pm 0.000}$ & 99.6 $\scriptstyle{\pm 0.2}$ & \underline{79.4} $\scriptstyle{\pm 0.6}$ \\
    \bottomrule
    \end{tabular*}}
    \vspace{-4mm}
\end{table*}

\subsection{Small Molecule Generation}\label{sec:experiments:small-molecule-generation}
\looseness=-1
\xhdr{Setup.}  
We train all the models on the SAFE dataset \cite{noutahi2024gotta}, which consists of molecules from ZINC \citep{irwin2012zinc} and UniChem \citep{chambersUniChemUnifiedChemical2013}.
Following \citet{lee2025genmol}, we train for 50k steps with a global batch size of 2048. 
Based on the results of the graph traversal experiments, we fix the unmasking rate $b_\unmask^\phi=b_\unmask^\theta=1$ and only keep the target insertion rate $b_\ins^\phi$ trainable.
We evaluate on \denovo{} generation by sampling 1000 molecules unconditionally from the trained models with sampling budget of 1024 steps and repeat the evaluation 5 times with different random seeds.
We compute the metrics of \textbf{validity}, \textbf{uniqueness}, \textbf{diversity}, and \textbf{quality}.
Appendix~\ref{app:experiments:molgen:denovo} provides more details about the metrics, sampling protocol and baseline settings.
\Cref{tab:molgen_denovo_results} shows the results, and we provide a detailed analysis below.

\looseness=-1
\xhdr{Fixed window vs. variable length}:~ For MDM, following \citet{lee2025genmol} we sample the lengths of the masked chunks from the ZINC250k distribution even for \denovo{} generation. However, for FlexMDM and \OursShort{}, we perform completely unconstrained generation starting from a blank canvas. This highlights the advantage of working with a variable-length model.

\looseness=-1
\xhdr{Diversity-quality trade-off}:~
As seen in \cref{tab:molgen_denovo_results}, \OursShort{} improves quality over FlexMDM ($79.5\%$ vs.\ $62.0\%$ under each model's best decoding settings), with a slight drop in diversity ($0.850$ vs.\ $0.890$).
Validity is also slightly higher ($99.7\%$ vs.\ $98.9\%$).
This trade-off is expected: learning structurally good generation orders reduces randomness.
We analyze the learned ordering explicitly in \cref{sec:experiments:molecules-generation-order}.
\Cref{fig:molgen_analysis} (right) plots quality and validity against diversity as we vary the nucleus-sampling cutoff $p$ (\Cref{alg:lflexmdm_sampling}, line~23).
\OursShort{} quality rises from $\sim$55\% at $p{=}1.0$ to $\sim$80\% at $p{=}0.5$ as randomness decreases.
FlexMDM stays near $\sim$62--63\% over the same range.
In the same vein, uniqueness drops slightly but validity improves when we use confidence-based position selection during decoding (lines~19--21) rather than sampling uniformly from eligible positions (line~17), making decoding more greedy.

\looseness=-1
\xhdr{Effect of inference budget}:~ Next we fix the sampling parameters to their respective best values and vary the sampling steps. As seen in \Cref{fig:molgen_analysis} (left), \textbf{\OursShort{} shows consistent and significant improvement over FlexMDM in quality and validity at all sampling budgets}.
Results with additional nucleus-sampling $p$ values, as well as a shared-backbone implementation, are provided in \cref{tab:full_molgen_denovo_results} in Appendix~\ref{sec:additional-results}.

\subsubsection{Analysis: Generation Order}\label{sec:experiments:molecules-generation-order}
Unlike star graphs (\cref{sec:experiments:star-generation-order}), where all the datapoints share a global ordering heuristic, which is independent of the token identities, molecular strings do not have such a global ordering heuristic.
We therefore perform a different analysis for molecules: classify each token in the final string into one of seven semantic categories and measure when each category tends to appear during generation.

We analyze the 1000 unconditional \denovo{} samples used for \Cref{tab:molgen_denovo_results}.
Each token in the final Bracket SAFE string is assigned to one of seven categories: fragment separator (\texttt{.}), attachment bracket (\texttt{<}/\texttt{>}), attachment label (the digit inside \texttt{<>}), ring closure (digits outside \texttt{<>}), aromatic atom, aliphatic atom, or bond/branch token (\texttt{=}, \texttt{\#}, \texttt{(}, \texttt{)}).
For each token we record the generation step at which it is finalized and min--max normalize that step within each molecule to $[0,1]$, so that $0$ denotes the earliest tokens and $1$ the latest.
As seen in \Cref{fig:safe-gen-order-timing}, FlexMDM shows roughly uniform timing across categories, with mean generation times for different categories between $0.65$ and $0.71$.
\OursShort{} instead generates the structure first (e.g., ring closures and fragment separators), and then fills in the chemistry (e.g., aliphatic atoms, attachment labels, and bond/branch tokens).
\OursShort{} also separates the existence of an attachment point ($\mu{\approx}0.61$) from its identity ($\mu{\approx}0.80$), committing to where fragments connect before deciding which fragments connect.
These observations are consistent with the qualitative examples shown in \Cref{fig:safe-gen-trajectories}, where we provide generation trajectories for FlexMDM and \OursShort{} with token type marked by different colors (see caption for the details).
Appendix~\ref{sec:additional-results:molecules-generation-order-examples} shows an additional example.

\begin{figure}[t]
  \centering
  \includegraphics[width=0.95\linewidth]{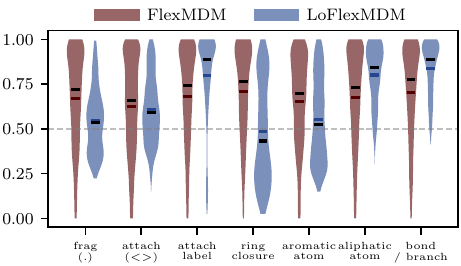}
  \caption{%
  \small
  Normalised generation time by SAFE token category for FlexMDM (left) and \OursShort{} (right).
  For each violin, the coloured horizontal line marks the mean and the black line marks the median.}
  \label{fig:safe-gen-order-timing}
  \vspace{-5mm}
\end{figure}

\begin{figure*}[t]
  \centering
  {\small
  \smallskip\noindent
  \clswatch{colM}{mask}~
  \clswatch{colF}{frag sep}~
  \clswatch{colAB}{attach bracket}~
  \clswatch{colAL}{attach label}~
  \clswatch{colRC}{ring closure}~
  \clswatch{colAr}{aromatic atom}~
  \clswatch{colAa}{aliphatic atom}~
  \clswatch{colBB}{bond/branch}~
  \clswatch{colSp}{special}
  }
  \begin{subfigure}[t]{0.45\textwidth}
    \centering
    {\small\textbf{FlexMDM}}\par\vspace{2pt}
    \resizebox{0.9\linewidth}{!}{
      \begin{minipage}[t]{\linewidth}
    {\tiny\ttfamily\raggedright\input{drafts/icml2026/snippets/traj_flexmdm_pred22.tex}\par}
    \end{minipage}
    }
    \includegraphics[width=0.65\linewidth,trim=8 55 8 130,clip]{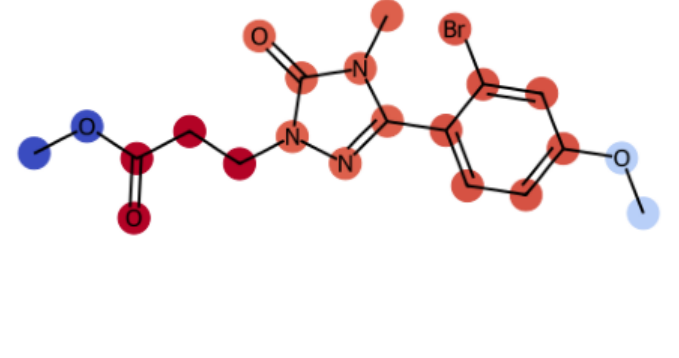}
  \end{subfigure}\hfill
  \begin{subfigure}[t]{0.45\textwidth}
    \centering
    {\small\textbf{\OursShort{}}}\par\vspace{2pt}
    \resizebox{0.9\linewidth}{!}{
      \begin{minipage}[t]{\linewidth}
    {\tiny\ttfamily\raggedright\input{drafts/icml2026/snippets/traj_lflexmdm_pred22.tex}\par}
    \end{minipage}
    }
    \includegraphics[width=0.65\linewidth,trim=8 55 8 130,clip]{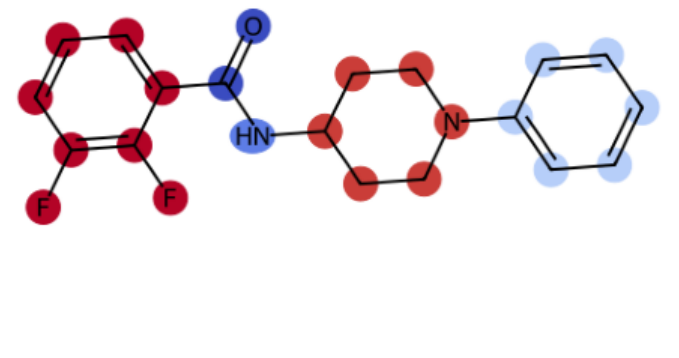}
  \end{subfigure}
  \vspace{-10mm}
  \caption{%
  \small \textbf{Example \denovo{} generation trajectory.}
  FlexMDM (left) and \OursShort{} (right) on the same molecule.
  Every line is a subsampled generation step; grey {\color{colM}\texttt{\_}} marks a still-masked position.
  Tokens are coloured by semantic category (legend above).
  The molecule graph below each trajectory colours atoms by normalised generation time
  (\textcolor{blue}{blue} = early, \textcolor{red}{red} = late).
  \OursShort{} first resolves ring closures and fragment separators, then places attachment brackets and aromatic atoms, and fills in aliphatic atoms, attachment labels, and bond/branch tokens last.
  FlexMDM shows no such systematic pattern.
  An additional example is in Appendix~\ref{sec:additional-results:molecules-generation-order-examples}.}
  \label{fig:safe-gen-trajectories}
  \vspace{-5mm}
\end{figure*}

\subsection{Constrained Generation}\label{sec:experiments:constrained-generation}

\begin{table}[t!]
    \centering
    \renewcommand{\arraystretch}{0.6}
    \caption{\small \textbf{Fragment-constrained molecule generation results.} Means over 5 runs (Std. in Appendix~\ref{app:tab:frag:full}). Tasks are abbreviated as LD (linker design), ME (motif extension), SD (scaffold decoration), and SG (superstructure generation). The best results are highlighted in bold and the second-best mean in each column is underlined. {\small ($\dagger$ \citet{noutahi2024gotta} $\ddagger$; \citet{lee2025genmol})}}
    \label{tab:frag:nodist}
    \vspace{-2mm}
    {\setlength{\tabcolsep}{6pt}%
    \begin{tabular}{@{}llccc>{\columncolor{softgray}}c}
    \toprule
    \textbf{Model} & \textbf{Task} & \textbf{Val.} & \textbf{Div.} & \textbf{Uniq.} & \textbf{Qual.} \\
    \midrule
    $\text{ARM}^{\dagger}$ & LD & 76.6 & 0.545 & \underline{82.5} & 21.7 \\
    & ME & 96.1 & 0.562 & 66.8 & 18.6 \\
    & SD & 97.7 & 0.575 & 74.7 & 10.0 \\
    & SG & 95.7 & 0.573 & \underline{83.0} & 14.3 \\
    \midrule
    $\text{MDM}^{\ddagger}$ & LD & \textbf{100.0} & 0.547 & \textbf{83.7} & 21.9 \\
    & ME & 82.9 & \underline{0.617} & 77.5 & 30.1 \\
    & SD & 96.6 & 0.591 & \underline{82.7} & 31.8 \\
    & SG & 97.5 & \underline{0.599} & \textbf{83.6} & 34.8 \\
    \midrule
    FlexMDM & LD & \underline{99.7} & \textbf{0.599} & 63.7 & \underline{50.8} \\
    & ME & \underline{99.7} & \textbf{0.623} & \textbf{79.9} & \underline{46.9} \\
    & SD & \underline{99.6} & \textbf{0.615} & \textbf{84.3} & \underline{39.0} \\
    & SG & \underline{99.7} & \textbf{0.616} & 74.5 & \underline{35.8} \\
    \midrule
    \OursShort{} & LD & 99.6 & \underline{0.576} & 64.4 & \textbf{51.7} \\
    & ME & \textbf{99.9} & 0.608 & \underline{79.2} & \textbf{53.6} \\
    & SD & \textbf{99.8} & \underline{0.601} & 82.6 & \textbf{40.5} \\
    & SG & \textbf{100.0} & 0.593 & 72.6 & \textbf{37.0} \\
    \bottomrule
    \end{tabular}}
    \vspace{-7mm}
\end{table}

Following \citet{lee2025genmol}, we evaluate on four fragment-constrained benchmarks from the SAFE-GPT protocol: linker design, motif extension, scaffold decoration, and superstructure generation.
In each task, the model completes a bracket SAFE string from a fixed fragment prefix that specifies part of the target molecule.
We use the same \denovo{} checkpoints from \cref{sec:experiments:small-molecule-generation}, without any task-specific finetuning, and use the best sampling parameters for each model.
Additional details about the evaluation protocol are provided in Appendix~\ref{app:experiments:molgen:constrained}.
As seen in \Cref{tab:frag:nodist}, validity is near saturation for FlexMDM and \OursShort{} on all four tasks, so quality is again the most informative metric.
\OursShort{} achieves the best quality on every task.
The gain over FlexMDM is largest on motif extension ($53.6\%$ vs.\ $46.9\%$).
The same pattern holds on linker design ($51.7\%$ vs.\ $50.8\%$), scaffold decoration ($40.5\%$ vs.\ $39.0\%$), and superstructure generation ($37.0\%$ vs.\ $35.8\%$).
As in \denovo{} generation, \OursShort{} trades a small amount of diversity for higher quality.
These results show that learned order dynamics also help under fragment constraints. %

\section{Discussion}\label{sec:discussion}
\textbf{Summary of results.}
Our experiments conclusively show that learned order dynamics improve the variable-length unmasking model in structured generation tasks. On star graph traversal, \OursShort{} learns to generate paths from the endpoints toward the junction, and this order leads to large gains on the hard setting. On molecules, the learned order is less global but still structured: \OursShort{} tends to resolve ring closures, fragment separators, and attachment structure before filling in labels, aliphatic atoms, and bond or branch tokens. These learned schedules significantly improve the quality of generated samples in both \denovo{} and fragment-constrained generation.

\looseness=-1
\xhdr{Limitations and outlook.}
Using a shared backbone is more attractive for scaling to larger models for domains like text; however, this setting in our experiments shows comparatively smaller gains.
Bootstrapping orders from the generator alone remains a promising direction for future work.
Narrowing to a small number of orders reduces the uncertainty for the generator over the action space, which leads to faster generation. However, we still observe idle steps in the generation process (time progresses but the sequence does not change), suggesting significant room for speed improvement in variable-length unmasking models.
For molecules specifically, SAFE representation, due to its permutation invariance in the fragment ordering, allows large equivalence classes of strings to represent the same molecule, which exacerbates the problem of idle steps. Using more compact string representations for molecules could show even greater gains in speed and quality.

\section{Related Work}
For discrete diffusion, there exist different \emph{fixed} noising  processes like uniform noising, masking, or a mixture of both~\citep{sahoo2024diffusion,schiffsimple,zhengContinuouslyAugmentedDiscrete2025}.
For masked diffusion, \citet{kimTrainWorstPlan2025} show that the order of denoising is crucial for the generative abilities of diffusion models, and the typical noising process that uses a uniform masking schedule may not be able to exploit the structure present in discrete sequences, thereby limiting the generator's ability. 
\citet{wangLearningOrderAutoregressiveModels2025} looks at learning of generation order in any-order models~\citep{hoogeboom2022autoregressivediffusionmodels,Uria2013ADA} where the position information is absolute and a single token is generated at a time.

The useful desiderata of \textit{i)} modeling distributions over variable-length sequences and \textit{ii)} the ability to insert tokens have inspired a number of recent works to move away from fixed length text diffusion towards insertion and substitution based models in which positions during the generation are treated as relative rather than absolute. 
\citet{patel2025insertionlanguagemodelssequence} propose Insertion Language Model (ILM) that generates sequences by jointly modeling the distribution over the insertion location and the vocabulary.
\citet{havasiEditFlowsFlow2025} propose EditFlows that uses the projected DFM to construct a generator CTMC that models the rates of insertion, deletion, and substitution operations. Both these models can only insert one token per gap at a time.
 FlexMDM~\citep{kimAnyOrderFlexibleLength2025} can insert multiple mask tokens per gap.
But the flexibility comes at a cost.
The action space of insertion-based generation is larger than that of unmasking based models, and there have been limited attempts to explore learning the generation order in insertion-based models. 
Most of the exiting works focus on models that insert one token at a time and require simulating the generation trajectory for learning the generation order~\citep{guInsertionbasedDecodingAutomatically2019,brantley-etal-2019-non}.
Our approach is general enough to be applied to all three settings: fixed-length unmasking, variable-length unmasking, and insertion only models, and combines well with multi-token prediction. We provide additional discussion in \cref{sec:extended related work}.

\vspace{-1mm}
\section{Conclusion}\label{sec:conclusion}
\vspace{-1mm}
We introduce \OursShort{}, a insertion-based masked diffusion model, that learns data-dependent insertion and unmasking rates.
We demonstrate that the model successfully learns meaningful task-specific and data-dependent generation orders for graph traversal and molecule generation.
The experiments on \denovo{} and fragment-constrained molecule generation demonstrate that learned order dynamics lead to significant improvements in the validity and quality of generated samples compared to the baseline that does not use learned order dynamics.

\section*{Acknowledgements}
Dhruvesh and Benjamin acknowledge support from IBM under IBM Research Collaboration Agreement No.\ W1668553 and from the National Science Foundation under grant IIS-2106391.
All the authors are grateful to the anonymous reviewers for their helpful suggestions to improve the paper.

\section*{Impact Statement}

This paper presents work whose goal is to advance the field of Machine
Learning. There are many potential societal consequences of our work, none
which we feel must be specifically highlighted here.

\bibliographystyle{drafts/icml2026/icml2026}
\bibliography{references}

\begin{thebibliography}{42}
\providecommand{\natexlab}[1]{#1}
\providecommand{\url}[1]{\texttt{#1}}
\expandafter\ifx\csname urlstyle\endcsname\relax
  \providecommand{\doi}[1]{doi: #1}\else
  \providecommand{\doi}{doi: \begingroup \urlstyle{rm}\Url}\fi

\bibitem[Albergo et~al.(2025)Albergo, Boffi, and Vanden-Eijnden]{albergo2023stochastic}
Albergo, M., Boffi, N.~M., and Vanden-Eijnden, E.
\newblock Stochastic interpolants: A unifying framework for flows and diffusions.
\newblock \emph{Journal of Machine Learning Research}, 26\penalty0 (209):\penalty0 1--80, 2025.
\newblock URL \url{http://jmlr.org/papers/v26/23-1605.html}.

\bibitem[Bickerton et~al.(2012)Bickerton, Paolini, Besnard, Muresan, and Hopkins]{bickerton2012quantifying}
Bickerton, G.~R., Paolini, G.~V., Besnard, J., Muresan, S., and Hopkins, A.~L.
\newblock Quantifying the chemical beauty of drugs.
\newblock \emph{Nature chemistry}, 4\penalty0 (2):\penalty0 90--98, 2012.

\bibitem[Brantley et~al.(2019)Brantley, Cho, Daum{\'e}, and Welleck]{brantley-etal-2019-non}
Brantley, K., Cho, K., Daum{\'e}, H., and Welleck, S.
\newblock Non-monotonic sequential text generation.
\newblock In Axelrod, A., Yang, D., Cunha, R., Shaikh, S., and Waseem, Z. (eds.), \emph{Proceedings of the 2019 Workshop on Widening NLP}, pp.\  57--59, Florence, Italy, August 2019. Association for Computational Linguistics.
\newblock URL \url{https://aclanthology.org/W19-3620/}.

\bibitem[Campbell et~al.(2022)Campbell, Benton, De~Bortoli, Rainforth, Deligiannidis, and Doucet]{campbellContinuousTimeFramework2022}
Campbell, A., Benton, J., De~Bortoli, V., Rainforth, T., Deligiannidis, G., and Doucet, A.
\newblock A continuous time framework for discrete denoising models.
\newblock In Koyejo, S., Mohamed, S., Agarwal, A., Belgrave, D., Cho, K., and Oh, A. (eds.), \emph{Advances in Neural Information Processing Systems}, volume~35, pp.\  28266--28279. Curran Associates, Inc., 2022.

\bibitem[Campbell et~al.(2024)Campbell, Yim, Barzilay, Rainforth, and Jaakkola]{campbellGenerativeFlowsDiscrete2024}
Campbell, A., Yim, J., Barzilay, R., Rainforth, T., and Jaakkola, T.
\newblock Generative {{Flows}} on {{Discrete State-Spaces}}: {{Enabling Multimodal Flows}} with {{Applications}} to {{Protein Co-Design}}.
\newblock In \emph{Proceedings of the 41st {{International Conference}} on {{Machine Learning}}}, pp.\  5453--5512. PMLR, July 2024.
\newblock URL \url{https://proceedings.mlr.press/v235/campbell24a.html}.

\bibitem[Chambers et~al.(2013)Chambers, Davies, Gaulton, Hersey, Velankar, Petryszak, Hastings, Bellis, McGlinchey, and Overington]{chambersUniChemUnifiedChemical2013}
Chambers, J., Davies, M., Gaulton, A., Hersey, A., Velankar, S., Petryszak, R., Hastings, J., Bellis, L., McGlinchey, S., and Overington, J.~P.
\newblock {{UniChem}}: A unified chemical structure cross-referencing and identifier tracking system.
\newblock \emph{Journal of Cheminformatics}, 5\penalty0 (1):\penalty0 3, January 2013.
\newblock ISSN 1758-2946.
\newblock \doi{10.1186/1758-2946-5-3}.
\newblock URL \url{https://doi.org/10.1186/1758-2946-5-3}.

\bibitem[Degen et~al.(2008)Degen, Wegscheid-Gerlach, Zaliani, and Rarey]{degen2008art}
Degen, J., Wegscheid-Gerlach, C., Zaliani, A., and Rarey, M.
\newblock On the art of compiling and using `drug-like' chemical fragment spaces.
\newblock \emph{ChemMedChem}, 3\penalty0 (10):\penalty0 1503--1507, 2008.
\newblock \doi{10.1002/cmdc.200800178}.

\bibitem[Ding et~al.(2026)Ding, Ding, Chen, Wang, Xu, Feng, Bai, Han, Yan, Yuan, and Sun]{ding2026beyond}
Ding, F., Ding, D., Chen, S., Wang, K., Xu, P., Feng, Z., Bai, H., Han, K., Yan, Y., Yuan, B., and Sun, J.
\newblock Beyond masks: Efficient, flexible diffusion language models via deletion-insertion processes.
\newblock In \emph{The Fourteenth International Conference on Learning Representations}, 2026.
\newblock URL \url{https://openreview.net/forum?id=VbvXjs5f72}.

\bibitem[Ertl \& Schuffenhauer(2009)Ertl and Schuffenhauer]{ertl2009estimation}
Ertl, P. and Schuffenhauer, A.
\newblock Estimation of synthetic accessibility score of drug-like molecules based on molecular complexity and fragment contributions.
\newblock \emph{Journal of cheminformatics}, 1\penalty0 (1):\penalty0 8, 2009.

\bibitem[Gat et~al.(2024)Gat, Remez, Shaul, Kreuk, Chen, Synnaeve, Adi, and Lipman]{gatDiscreteFlowMatching2024}
Gat, I., Remez, T., Shaul, N., Kreuk, F., Chen, R. T.~Q., Synnaeve, G., Adi, Y., and Lipman, Y.
\newblock Discrete flow matching.
\newblock In \emph{The Thirty-eighth Annual Conference on Neural Information Processing Systems}, 2024.
\newblock URL \url{https://openreview.net/forum?id=GTDKo3Sv9p}.

\bibitem[Gillespie(2001)]{gillespie2001approximate}
Gillespie, D.~T.
\newblock Approximate accelerated stochastic simulation of chemically reacting systems.
\newblock \emph{The Journal of chemical physics}, 115\penalty0 (4):\penalty0 1716--1733, 2001.

\bibitem[Gu et~al.(2019)Gu, Liu, and Cho]{guInsertionbasedDecodingAutomatically2019}
Gu, J., Liu, Q., and Cho, K.
\newblock Insertion-based {{Decoding}} with {{Automatically Inferred Generation Order}}.
\newblock \emph{Transactions of the Association for Computational Linguistics}, 7:\penalty0 661--676, 2019.
\newblock \doi{10.1162/tacl_a_00292}.
\newblock URL \url{https://aclanthology.org/Q19-1042/}.

\bibitem[Havasi et~al.(2026)Havasi, Karrer, Gat, and Chen]{havasiEditFlowsFlow2025}
Havasi, M., Karrer, B., Gat, I., and Chen, R. T.~Q.
\newblock Edit flows: Variable length discrete flow matching with sequence-level edit operations.
\newblock In \emph{The Thirty-ninth Annual Conference on Neural Information Processing Systems}, 2026.
\newblock URL \url{https://openreview.net/forum?id=FXWwYz1p8a}.

\bibitem[Holderrieth et~al.(2025)Holderrieth, Havasi, Yim, Shaul, Gat, Jaakkola, Karrer, Chen, and Lipman]{holderrieth2025generator}
Holderrieth, P., Havasi, M., Yim, J., Shaul, N., Gat, I., Jaakkola, T., Karrer, B., Chen, R. T.~Q., and Lipman, Y.
\newblock Generator matching: Generative modeling with arbitrary markov processes.
\newblock In \emph{The Thirteenth International Conference on Learning Representations}, 2025.
\newblock URL \url{https://openreview.net/forum?id=RuP17cJtZo}.

\bibitem[Holtzman et~al.(2020)Holtzman, Buys, Du, Forbes, and Choi]{Holtzman2020The}
Holtzman, A., Buys, J., Du, L., Forbes, M., and Choi, Y.
\newblock The curious case of neural text degeneration.
\newblock In \emph{International Conference on Learning Representations}, 2020.
\newblock URL \url{https://openreview.net/forum?id=rygGQyrFvH}.

\bibitem[Hoogeboom et~al.(2022)Hoogeboom, Gritsenko, Bastings, Poole, van~den Berg, and Salimans]{hoogeboom2022autoregressivediffusionmodels}
Hoogeboom, E., Gritsenko, A.~A., Bastings, J., Poole, B., van~den Berg, R., and Salimans, T.
\newblock Autoregressive diffusion models.
\newblock In \emph{International Conference on Learning Representations}, 2022.
\newblock URL \url{https://openreview.net/forum?id=Lm8T39vLDTE}.

\bibitem[Irwin \& Shoichet(2005)Irwin and Shoichet]{irwinZINCFreeDatabase2005}
Irwin, J.~J. and Shoichet, B.~K.
\newblock {{ZINC}} - {{A Free Database}} of {{Commercially Available Compounds}} for {{Virtual Screening}}.
\newblock \emph{Journal of Chemical Information and Modeling}, 45\penalty0 (1):\penalty0 177--182, January 2005.
\newblock ISSN 1549-9596.
\newblock \doi{10.1021/ci049714+}.
\newblock URL \url{https://doi.org/10.1021/ci049714+}.

\bibitem[Irwin et~al.(2012)Irwin, Sterling, Mysinger, Bolstad, and Coleman]{irwin2012zinc}
Irwin, J.~J., Sterling, T., Mysinger, M.~M., Bolstad, E.~S., and Coleman, R.~G.
\newblock Zinc: a free tool to discover chemistry for biology.
\newblock \emph{Journal of chemical information and modeling}, 52\penalty0 (7):\penalty0 1757--1768, 2012.

\bibitem[Jin et~al.(2020)Jin, Barzilay, and Jaakkola]{jin2020multi}
Jin, W., Barzilay, D., and Jaakkola, T.
\newblock Multi-objective molecule generation using interpretable substructures.
\newblock In III, H.~D. and Singh, A. (eds.), \emph{Proceedings of the 37th International Conference on Machine Learning}, volume 119 of \emph{Proceedings of Machine Learning Research}, pp.\  4849--4859. PMLR, 13--18 Jul 2020.
\newblock URL \url{https://proceedings.mlr.press/v119/jin20b.html}.

\bibitem[Kim et~al.(2026{\natexlab{a}})Kim, Jeon, Kim, Jeung, and No]{kim2026rainbow}
Kim, B., Jeon, D., Kim, D., Jeung, W., and No, A.
\newblock Rainbow padding: Mitigating early termination in instruction-tuned diffusion {LLM}s.
\newblock In \emph{The Fourteenth International Conference on Learning Representations}, 2026{\natexlab{a}}.
\newblock URL \url{https://openreview.net/forum?id=cznTlh7Msz}.

\bibitem[Kim et~al.(2025)Kim, Shah, Kontonis, Kakade, and Chen]{kimTrainWorstPlan2025}
Kim, J., Shah, K., Kontonis, V., Kakade, S.~M., and Chen, S.
\newblock Train for the {{Worst}}, {{Plan}} for the {{Best}}: {{Understanding Token Ordering}} in {{Masked Diffusions}}.
\newblock In \emph{Forty-Second {{International Conference}} on {{Machine Learning}}}, June 2025.
\newblock URL \url{https://openreview.net/forum?id=DjJmre5IkP}.

\bibitem[Kim et~al.(2026{\natexlab{b}})Kim, Kit, Domingo-Enrich, Du, Kakade, Ngotiaoco, Chen, and Albergo]{kimAnyOrderFlexibleLength2025}
Kim, J., Kit, L.~C., Domingo-Enrich, C., Du, Y., Kakade, S.~M., Ngotiaoco, T., Chen, S., and Albergo, M.~S.
\newblock Any-order flexible length masked diffusion.
\newblock In \emph{The Fourteenth International Conference on Learning Representations}, 2026{\natexlab{b}}.
\newblock URL \url{https://openreview.net/forum?id=ttuNnMRI6H}.

\bibitem[Kool et~al.(2019)Kool, van Hoof, and Welling]{kool2019buy}
Kool, W., van Hoof, H., and Welling, M.
\newblock Buy 4 {REINFORCE} samples, get a baseline for free!, 2019.
\newblock URL \url{https://openreview.net/forum?id=r1lgTGL5DE}.

\bibitem[Kumaraswamy(1980)]{kumaraswamyGeneralizedProbabilityDensity1980a}
Kumaraswamy, P.
\newblock A generalized probability density function for double-bounded random processes.
\newblock \emph{Journal of Hydrology}, 46\penalty0 (1):\penalty0 79--88, 1980.
\newblock ISSN 0022-1694.
\newblock \doi{10.1016/0022-1694(80)90036-0}.
\newblock URL \url{https://www.sciencedirect.com/science/article/pii/0022169480900360}.

\bibitem[Lee et~al.(2025)Lee, Kreis, Veccham, Liu, Reidenbach, Peng, Paliwal, Nie, and Vahdat]{lee2025genmol}
Lee, S., Kreis, K., Veccham, S.~P., Liu, M., Reidenbach, D., Peng, Y., Paliwal, S.~G., Nie, W., and Vahdat, A.
\newblock Genmol: A drug discovery generalist with discrete diffusion.
\newblock In \emph{Forty-second International Conference on Machine Learning}, 2025.
\newblock URL \url{https://openreview.net/forum?id=KM7pXWG1xj}.

\bibitem[Lipman et~al.(2023)Lipman, Chen, Ben-Hamu, Nickel, and Le]{lipman2022flow}
Lipman, Y., Chen, R. T.~Q., Ben-Hamu, H., Nickel, M., and Le, M.
\newblock Flow matching for generative modeling.
\newblock In \emph{The Eleventh International Conference on Learning Representations}, 2023.
\newblock URL \url{https://openreview.net/forum?id=PqvMRDCJT9t}.

\bibitem[Lipman et~al.(2024)Lipman, Havasi, Holderrieth, Shaul, Le, Karrer, Chen, {Lopez-Paz}, {Ben-Hamu}, and Gat]{lipmanFlowMatchingGuide2024}
Lipman, Y., Havasi, M., Holderrieth, P., Shaul, N., Le, M., Karrer, B., Chen, R. T.~Q., {Lopez-Paz}, D., {Ben-Hamu}, H., and Gat, I.
\newblock Flow {{Matching Guide}} and {{Code}}, December 2024.
\newblock URL \url{http://arxiv.org/abs/2412.06264}.

\bibitem[Lou et~al.(2024)Lou, Meng, and Ermon]{lou2023discrete}
Lou, A., Meng, C., and Ermon, S.
\newblock Discrete diffusion modeling by estimating the ratios of the data distribution.
\newblock In Salakhutdinov, R., Kolter, Z., Heller, K., Weller, A., Oliver, N., Scarlett, J., and Berkenkamp, F. (eds.), \emph{Proceedings of the 41st International Conference on Machine Learning}, volume 235 of \emph{Proceedings of Machine Learning Research}, pp.\  32819--32848. PMLR, 21--27 Jul 2024.
\newblock URL \url{https://proceedings.mlr.press/v235/lou24a.html}.

\bibitem[Noutahi et~al.(2024)Noutahi, Gabellini, Craig, Lim, and Tossou]{noutahi2024gotta}
Noutahi, E., Gabellini, C., Craig, M., Lim, J.~S., and Tossou, P.
\newblock Gotta be safe: a new framework for molecular design.
\newblock \emph{Digital Discovery}, 3\penalty0 (4):\penalty0 796--804, 2024.
\newblock \doi{10.1039/D4DD00019F}.

\bibitem[Patel et~al.(2025{\natexlab{a}})Patel, Naseem, Pandey, Sultan, McCallum, and Astudillo]{patel2025improved}
Patel, D., Naseem, T., Pandey, G., Sultan, M.~A., McCallum, A., and Astudillo, R.~F.
\newblock Improved sampling from masked diffusion models with position contrastive guidance.
\newblock In \emph{NeurIPS 2025 Workshop on Structured Probabilistic Inference {\&} Generative Modeling}, 2025{\natexlab{a}}.
\newblock URL \url{https://openreview.net/forum?id=e0WmOrWbtc}.

\bibitem[Patel et~al.(2025{\natexlab{b}})Patel, Sahoo, Amballa, Naseem, Rudner, and McCallum]{patel2025insertionlanguagemodelssequence}
Patel, D., Sahoo, A., Amballa, A., Naseem, T., Rudner, T. G.~J., and McCallum, A.
\newblock Insertion language models: Sequence generation with arbitrary-position insertions, 2025{\natexlab{b}}.
\newblock URL \url{https://arxiv.org/abs/2505.05755}.

\bibitem[Patel et~al.(2026{\natexlab{a}})Patel, Maram, Chintha, Rozonoyer, and McCallum]{patel-etal-2026-xlm}
Patel, D., Maram, D.~P., Chintha, S.~S., Rozonoyer, B., and McCallum, A.
\newblock x{LM}: A python package for non-autoregressive language models.
\newblock In Croce, D., Leidner, J., and Moosavi, N.~S. (eds.), \emph{Proceedings of the 19th Conference of the {E}uropean Chapter of the {A}ssociation for {C}omputational {L}inguistics (Volume 3: System Demonstrations)}, pp.\  445--456, Rabat, Marocco, March 2026{\natexlab{a}}. Association for Computational Linguistics.
\newblock ISBN 979-8-89176-382-1.
\newblock \doi{10.18653/v1/2026.eacl-demo.31}.
\newblock URL \url{https://aclanthology.org/2026.eacl-demo.31/}.

\bibitem[Patel et~al.(2026{\natexlab{b}})Patel, Rozonoyer, Das, Naseem, Rudner, and McCallum]{patel2026ctmc}
Patel, D., Rozonoyer, B., Das, S., Naseem, T., Rudner, T. G.~J., and McCallum, A.
\newblock A continuous time markov chain framework for insertion language models.
\newblock In \emph{The 29th International Conference on Artificial Intelligence and Statistics}, 2026{\natexlab{b}}.
\newblock URL \url{https://openreview.net/forum?id=nCyV21FmUI}.

\bibitem[Peebles \& Xie(2023)Peebles and Xie]{peebles2023scalable}
Peebles, W. and Xie, S.
\newblock Scalable diffusion models with transformers.
\newblock In \emph{Proceedings of the IEEE/CVF international conference on computer vision}, pp.\  4195--4205, 2023.

\bibitem[Preuer et~al.(2018)Preuer, Renz, Unterthiner, Hochreiter, and Klambauer]{preuerFrechetChemNetDistance2018}
Preuer, K., Renz, P., Unterthiner, T., Hochreiter, S., and Klambauer, G.
\newblock Fr{\'e}chet {{ChemNet Distance}}: {{A Metric}} for {{Generative Models}} for {{Molecules}} in {{Drug Discovery}}.
\newblock \emph{Journal of Chemical Information and Modeling}, 58\penalty0 (9):\penalty0 1736--1741, September 2018.
\newblock ISSN 1549-9596.
\newblock \doi{10.1021/acs.jcim.8b00234}.
\newblock URL \url{https://doi.org/10.1021/acs.jcim.8b00234}.

\bibitem[Sahoo et~al.(2024)Sahoo, Gokaslan, De~Sa, and Kuleshov]{sahoo2024diffusion}
Sahoo, S., Gokaslan, A., De~Sa, C.~M., and Kuleshov, V.
\newblock Diffusion models with learned adaptive noise.
\newblock \emph{Advances in Neural Information Processing Systems}, 37:\penalty0 105730--105779, 2024.

\bibitem[Schiff et~al.(2025)Schiff, Sahoo, Phung, Wang, Boshar, Dalla-torre, de~Almeida, Rush, PIERROT, and Kuleshov]{schiffsimple}
Schiff, Y., Sahoo, S.~S., Phung, H., Wang, G., Boshar, S., Dalla-torre, H., de~Almeida, B.~P., Rush, A.~M., PIERROT, T., and Kuleshov, V.
\newblock Simple guidance mechanisms for discrete diffusion models.
\newblock In \emph{The Thirteenth International Conference on Learning Representations}, 2025.
\newblock URL \url{https://openreview.net/forum?id=i5MrJ6g5G1}.

\bibitem[Shaul et~al.(2025)Shaul, Gat, Havasi, Severo, Sriram, Holderrieth, Karrer, Lipman, and Chen]{shaul2025flow}
Shaul, N., Gat, I., Havasi, M., Severo, D., Sriram, A., Holderrieth, P., Karrer, B., Lipman, Y., and Chen, R. T.~Q.
\newblock Flow matching with general discrete paths: A kinetic-optimal perspective.
\newblock In \emph{The Thirteenth International Conference on Learning Representations}, 2025.
\newblock URL \url{https://openreview.net/forum?id=tcvMzR2NrP}.

\bibitem[Shi et~al.(2024)Shi, Han, Wang, Doucet, and Titsias]{shi2024simplified}
Shi, J., Han, K., Wang, Z., Doucet, A., and Titsias, M.
\newblock Simplified and generalized masked diffusion for discrete data.
\newblock \emph{Advances in neural information processing systems}, 37:\penalty0 103131--103167, 2024.

\bibitem[Uria et~al.(2014)Uria, Murray, and Larochelle]{Uria2013ADA}
Uria, B., Murray, I., and Larochelle, H.
\newblock A deep and tractable density estimator.
\newblock In Xing, E.~P. and Jebara, T. (eds.), \emph{Proceedings of the 31st International Conference on Machine Learning}, volume~32 of \emph{Proceedings of Machine Learning Research}, pp.\  467--475, Bejing, China, 22--24 Jun 2014. PMLR.
\newblock URL \url{https://proceedings.mlr.press/v32/uria14.html}.

\bibitem[Wang et~al.(2025)Wang, Shi, Heess, Gretton, and Titsias]{wangLearningOrderAutoregressiveModels2025}
Wang, Z., Shi, J., Heess, N., Gretton, A., and Titsias, M.
\newblock Learning-{{Order Autoregressive Models}} with {{Application}} to {{Molecular Graph Generation}}.
\newblock In \emph{Forty-Second {{International Conference}} on {{Machine Learning}}}, June 2025.
\newblock URL \url{https://openreview.net/forum?id=EY6pXIDi3G}.

\bibitem[Zheng et~al.(2026)Zheng, Gong, ZHANG, Chen, Gu, Zhou, Jaitly, and Zhang]{zhengContinuouslyAugmentedDiscrete2025}
Zheng, H., Gong, S., ZHANG, R., Chen, T., Gu, J., Zhou, M., Jaitly, N., and Zhang, Y.
\newblock Continuously augmented discrete diffusion model for categorical generative modeling.
\newblock In \emph{The Fourteenth International Conference on Learning Representations}, 2026.
\newblock URL \url{https://openreview.net/forum?id=JNAZ3e7Bwt}.

\end{thebibliography}

\newpage
\appendix
\onecolumn
\crefalias{section}{appendix}
\crefalias{subsection}{appendix}
\crefalias{subsubsection}{appendix}
\crefname{appendix}{appendix}{appendices}   %
\Crefname{appendix}{Appendix}{Appendices}

\makeatletter
\def\addcontentsline#1#2#3{%
  \addtocontents{#1}{\protect\contentsline{#2}{#3}{\thepage}{\@currentHref}{}}}
\makeatother

\etocdepthtag.toc{mtappendix}             %
\etocsettagdepth{mtchapter}{none}         %
\etocsettagdepth{mtappendix}{subsubsection} %
\etocsettocstyle{\section*{ Appendix for Insertion Based Sequence Generation with Learnable Order Dynamics}}{}  %
\begingroup\raggedbottom
\tableofcontents
\vfill\null
\endgroup

\section{Summary of Notation}\label{app:notation}
\cref{tab:notation} summarizes the notation used throughout the paper.
\begin{table}[H]
\centering
\begin{tabular}{l p{10cm}}
\toprule
\textbf{Notation} & \textbf{Description} \\
\midrule
\multicolumn{2}{c}{\textbf{General}} \\
\midrule
$\sX$, $\sZ$ & Sets (blackboard font) \\
$\natset{n}$ & Set of natural numbers $\{1, 2, \ldots, n\}$ \\
$x_t$ & Particular state at time $t$ of a stochastic process $X_t$\\
$x_t^i, s^i, w^i,$ etc. & $i$-th component of $x, s, w$ respectively,i.e. superscript $i$ denotes the $i$-th component in a sequence \\
$\indic{x}{y}=\indic{\{x\}}{y}$ & Indicator function that is 1 if $x = y$ and 0 otherwise \\
$p_t(x)$ & The marginal law of CTMC $X_t$ at time $t$, i.e., $p_t(x) = \sP(X_t = x)$ \\
$p_{t|s}(x|y)$ & The conditional law of CTMC $X_t$ at time $t$ given the value at time $s$, i.e., $p_{t|s}(x|y) = \sP(X_t = x | X_s = y)$ \\
$p_{t|C}(x|c)$ & The conditional law of CTMC $X_t$ at time $t$ given the conditioning variable $C=c$, i.e., $p_{t|C}(x|c) = \sP(X_t = x | C = c)$ \\
$R_t(x, y)$ & The rate of the transition from $x$ to $y$ at time $t$, i.e., $R_t(x, y) = \lim_{h\to 0} \frac{\sP(X_{t+h} = y | X_t = x) - \indic{x}{y}}{h}$ \\
$R_t(x, y|c)$ & The conditional rate of the transition from $x$ to $y$ at time $t$ given the conditioning variable $C=c$. $R_t(x, y|c)$ \emph{generates} $p_{t|C}(x|c)$ in the sense of \cref{definition:rt_generates_pt}. \\
\midrule
\multicolumn{2}{c}{\textbf{Specific variables}} \\
\midrule
$\drop$ & Drop token \\
$\mask$ & Mask token \\
$\vocab$ & Vocabulary of tokens including the $\mask$ token but excluding the $\drop$ token\\
$\vocab^n$ & Set of token sequences of length $n$, i.e., $\vocab^n = \vocab \times \overset{n \text{ times}}{\cdots} \times \vocab$ \\ 
$\sX$ & Space of variable length sequences, i.e., $\sX = \bigcup_{k=1}^L \vocab^k$ \\
$\sZ$ & Auxiliary space of partial sequences, i.e., $\sZ = \bigcup_{k\in \natset{L}} \vocab^k \times \sS_{k}$, where $\sS_{k} \coloneqq \{(s^1, \dots, s^k)\in \natset{L}^k ~ | ~ s^1 < \dots < s^k\}$ is the space of ordered tuples of positions\\
$\bar \sZ$ & Auxiliary space of partial sequences with the $\drop$ token added to the vocabulary, i.e., $\bar \sZ = (\sV\cup \{\drop\})^{L}$, with isomorphism $\contract: \bar \sZ \to \sZ$ that removes the $\drop$ tokens from the $\bar z$ to form $x$, and moves the exact position information to $\s$ to produce $z = (x, \s)$\\
$R^i_t(z, w^i \mid (z_1, z_0))$ & The per-position target rate of the transition from $z$ to $w^i$ at time $t$ given the conditioning variable $(z_1, z_0)$, i.e., $R^i_t(z, w^i \mid (z_1, z_0)) = \lambda_t^i(z; (z_1, z_0)) \cdot K^i_t(z, w^i \mid (z_1, z_0))$\\
$\lambda_t^i(z; (z_1, z_0))$ & The state and time dependent arrival rate of the Poisson process at the $i$-th position at time $t$ given the conditioning variable $(z_1, z_0)$\\
$K^i_t(z, w^i \mid (z_1, z_0))$ & The state transition kernel at the $i$-th position at time $t$ given the conditioning variable $(z_1, z_0)$, i.e., when an arrival occurs at the $i$-th position, the state changes according to the categorical distribution $K^i_t$\\
$\Tins^i\coloneqq T^i_1$  & The event time of the insertion event at the $i$-th position\\
$\Tumask^i\coloneqq T^i_2$ & The event time of the unmasking event at the $i$-th position\\
\bottomrule
\end{tabular}
\caption{Summary of notation used throughout the paper.}
\label{tab:notation}
\end{table}
\section{Motivation for Insertion-based Generation}\label{sec:motivation-insertion}

\begin{wrapfigure}{r}{0.50\linewidth}
    \centering
    \vspace{-1mm}
    \includegraphics[width=0.95\linewidth]{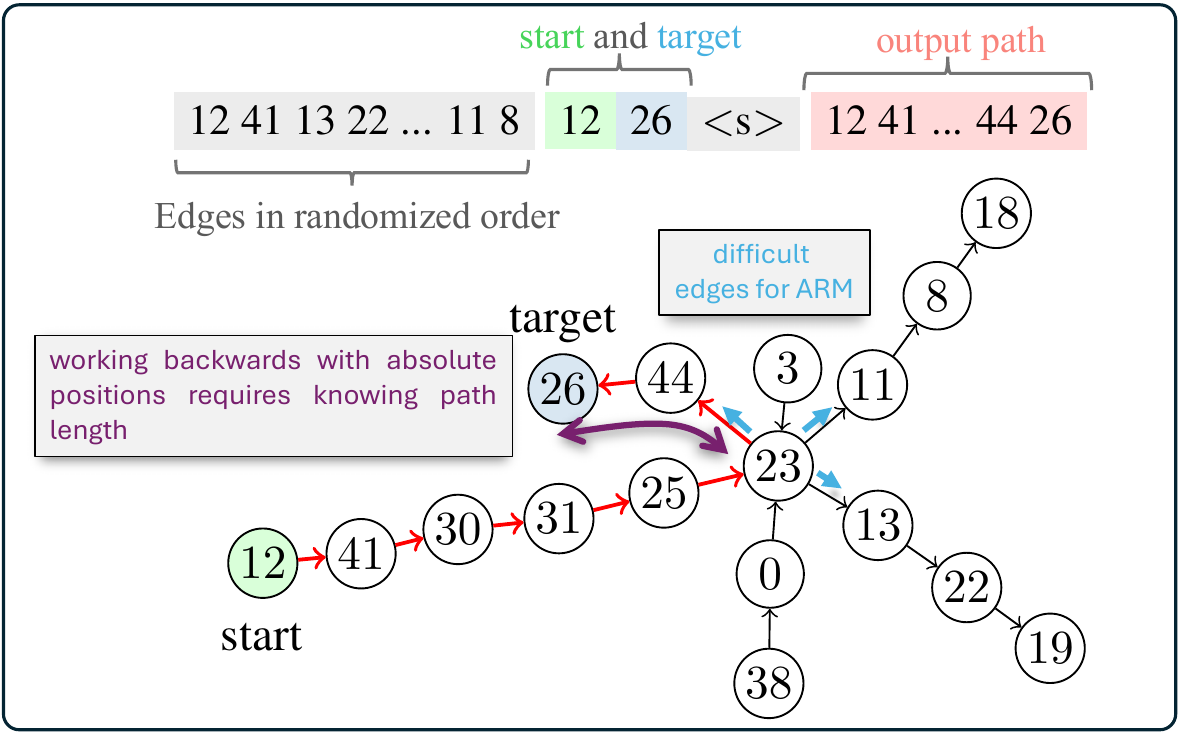}
    \caption{An example of a star graph with variable arm lengths from the hard subset of the star graph traversal task from \citet{patel2025insertionlanguagemodelssequence}. The top section shows the string representation of the graph, where the input sequence consists of edges in a random order, followed by the starting and terminal nodes. The output part is the path (edges in the correct order) connecting the starting and terminal nodes. \textit{The figure is adapted from \citet{patel2025insertionlanguagemodelssequence}} with permission.}
    \label{fig:motivation-insertion}
    \vspace{-5mm}
\end{wrapfigure}
In this section, we provide a motivating example for the type of sequence generation tasks where viewing the sequence elements in a relative order and generating them through insertions with appropriate insertion order is more natural than using a left-to-right autoregressive generator or an arbitrary order generator using absolute positions.
\Cref{fig:motivation-insertion} shows an example of a star graph with variable arm lengths. 
A left-to-right autoregressive model is poorly matched to this task.
Most edges along each arm are easy to predict locally, so teacher forcing provides little signal at the junction, where the model must choose the correct continuation without explicit lookahead.
An arbitrary-order model that generates using absolute positions faces a different limitation.
The path is most naturally built from each arm's endpoint inward toward the junction, but unmasking at fixed indices in that order requires predicting the path length precisely in advance, another challenging problem in itself.

\section{Background: Discrete Flow Matching}\label{app:dfm}
\subsection{Markov Processes on Discrete Spaces}
Let $\sX$ be countable set (e.g. the vocabulary $\vocab$ or the space of sequences of length $L$ with tokens from $\vocab$), and $\{X_t\}$ be a continuous time stochastic process on $\sX$.
\begin{align}
    p_{t+h|t}(x_{t+h} | x_t) \coloneqq \sP(X_{t+h} = x_{t+h} | X_t = x_t) = \indic{x_t}{x_{t+h}} + h\, R_t(x_t, x_{t+h}) + o(h),
\end{align}
where $R_t(x, y)$ is the \emph{rate} of the transition from $x$ to $y$ at time $t$, such that $R_t(x, x) = - \sum_{y\neq x} R_t(x, y)$. Equivalently, $R_t(x,y) = \lim_{h\to 0} \frac{\sP(X_{t+h} = y | X_t = x) - \indic{x}{y}}{h}$.

The Kolmogorov Forward Equation (KFE) is the central continuity equation for CTMCs. It connects the marginal probabilities to the rate:
\begin{align}
    \partial_t p_t(x) = \sum_y R_t(y, x)~ p_t(y).\label{eq:general_kfe_marginal_form}
\end{align}

\begin{definition}[Time marginals generated by CTMC]\label{definition:rt_generates_pt} We note that the following statements are equivalent~\citep{campbellGenerativeFlowsDiscrete2024}:
\begin{itemize}
    \item $R_t(\cdot, \cdot)$ \textbf{generates} $p_{t}$
    \item $R_t(\cdot, \cdot)$ and $p_t$ satisfy the KFE in the marginal form (\cref{eq:general_kfe_marginal_form})
    \item A Markov process $X_t$ with rates $R_t(\cdot, \cdot)$ and initial distribution $p_0$ has $p_t$ as its marginal distribution at time $t$
\end{itemize}
\end{definition}

\subsubsection{Indexed Collections of CTMCs}
\begin{definition}[Conditional Rates]\label{definition:conditional_rates} 
    For a general variable $C \in \mathcal C$ with distribution $\pi$,
    \begin{itemize}
     \item denote $C$ dependent rates as $R_t(x, y | C)$
     \item denote by $\left\{\{X_t^c\}\right\}_{c\in \mathcal C}$ a collection of CTMCs on $\sX$ with rates $R_t(x, y | C)$ and initial distribution $p_0(x | C)$
     \item $p_{t|C}(x|c)\coloneqq \sP(X_t^c = x)$ denote the marginals of the CTMC $\{X_t^c\}$
     \end{itemize}
\end{definition}

In the special case of endpoint indexed collection, we have $C=(X_0, X_1)$, $X_1 \sim p_\text{data}$ and $X_0 \sim \pi(x_0 | x_1)$, where $\pi$ is the coupling distribution~\citep{gatDiscreteFlowMatching2024}.

\begin{proposition}[Mixture of CTMCs]\label{prop:mixture_of_ctmcs}
The mixture rate $R_t(x, \cdot) = \E_{C|X_t=x}[R_t(x, \cdot | C)]$ \emph{generates} the marginals $p_t(x) = \E_C[p_{t|C}(x|C)]$. 
\end{proposition}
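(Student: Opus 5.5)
The plan is to verify the Kolmogorov Forward Equation directly for the candidate mixture rate and marginal. Write $p_t(x)=\E_C[p_{t|C}(x\mid C)]=\sum_c \pi(c)\,p_{t|C}(x\mid c)$. Define $R_t(x,y)=\E_{C\mid X_t=x}[R_t(x,y\mid C)]$, where the posterior is given by Bayes' rule: $\sP(C=c\mid X_t=x)=\pi(c)\,p_{t|C}(x\mid c)/p_t(x)$ whenever $p_t(x)>0$. For $p_t(x)=0$ the value of $R_t(x,\cdot)$ is arbitrary, since it is multiplied by zero below. This gives the key identity
\begin{align*}
R_t(x,y)\,p_t(x)=\sum_c \pi(c)\,R_t(x,y\mid c)\,p_{t|C}(x\mid c).
\end{align*}

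Next, differentiate the marginal in $t$ and exchange the derivative with the sum over $c$:
\begin{align*}
\partial_t p_t(x)=\sum_c \pi(c)\,\partial_t p_{t|C}(x\mid c).
\end{align*}
The exchange is justified by dominated convergence, assuming the conditional derivatives are uniformly bounded or summable in $c$, which is trivial when $\mathcal C$ is finite. By \cref{definition:rt_generates_pt}, each conditional rate $R_t(\cdot,\cdot\mid c)$ generates $p_{t|C}(\cdot\mid c)$. We can therefore substitute the conditional KFE:
\begin{align*}
\partial_t p_t(x)=\sum_c\pi(c)\sum_y R_t(y,x\mid c)\,p_{t|C}(y\mid c).
\end{align*}

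Finally, swap the sums over $c$ and $y$ and apply the identity above with the roles of $x$ and $y$ exchanged. This yields $\sum_y R_t(y,x)\,p_t(y)$, which is exactly \cref{eq:general_kfe_marginal_form}.

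Two points remain. First, $R_t$ must be a valid rate. Off-diagonal entries are nonnegative because they are averages of nonnegative entries. Rows sum to zero because the posterior expectation is linear and each conditional row sums to zero. Second, the initial condition must match: the mixture CTMC is started from $p_0=\E_C[p_{0|C}]$. Uniqueness of solutions to the KFE then identifies the process marginals with $p_t$, by the third equivalence in \cref{definition:rt_generates_pt}.

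The main obstacle is purely technical. It consists of justifying the interchange of differentiation, summation over $y$ and integration over $C$ for countable state spaces, and handling states with $p_t(x)=0$. I would assume regularity (bounded total exit rates and a finite or integrable index set), as is standard in \citep{campbellGenerativeFlowsDiscrete2024,lipmanFlowMatchingGuide2024}, and remark that the identity $R_t p_t=\E_C[R_t(\cdot\mid C)p_{t|C}]$ carries the whole argument.
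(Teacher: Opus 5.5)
Your proof is correct and is essentially the paper's argument. The paper defers to Campbell et al.\ and Lipman et al.\ (Thm.~14), and it repeats the same computation in its proof of \cref{prop:R_t_pi_generates_p_t}: differentiate the mixture marginal, substitute the conditional KFE, then recollect the sum via Bayes' rule in the form $R_t(y,x)\,p_t(y)=\sum_c \pi(c)\,R_t(y,x\mid c)\,p_{t|C}(y\mid c)$. Your extra checks (validity of the mixed rate, the matching initial condition, and KFE uniqueness under bounded exit rates) go slightly beyond what the paper writes out, but they are the ``mild regularity conditions'' it alludes to.
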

The proof assumes mild regularity conditions and follows from a straightforward application of the KFE (see \citet{campbellGenerativeFlowsDiscrete2024} or~\citet{lipmanFlowMatchingGuide2024} Thm. 14 for details). 
This result shows there exists a rate matrix $R_t(\cdot, \cdot) = \E_C[R_t(\cdot, \cdot | C)]$ and the corresponding CTMC $\{X_t\}$ such that $R_t$ \emph{generates} $p_t(x) = \E_C[p_{t|C}(x|C)]$. Therefore, in order to sample from the data distribution $p_1$, we can follow:
\begin{enumerate}
    \item Pick the indexing set $C=(X_0, X_1)$
    \item Use $p_1=p_\text{data}$ and pick the coupling distribution $\pi(x_0 | x_1)$, typically independent i.e. $\pi(x_0 | x_1) = p_0(x_0) $ for some easy to sample $p_0$.
    \item Construct a collection of CTMCs $\{X_t^c\}_{c\in \mathcal C}$ with rates $R_t(x, y | C)$ such that it \emph{generates} $p_{t|C}$, which satisfies $p_{1|C}(x | x_1, x_0)=\indic{x_1}{x}$ and $p_{0|C}(x | x_0, x_0)=\indic{x_0}{x}$.
    \item The marginal $p_t(x)=\E_C[p_{t|C}(x|C)]$ satisfies $p_1(x)=p_{\textnormal{data}}(x)$ and \cref{prop:mixture_of_ctmcs} ensures that the CTMC with the  rate $R_t(\cdot, \cdot) = \E_C[R_t(\cdot, \cdot | C)]$ \emph{generates} the marginals $p_t(x) = \E_C[p_{t|C}(x|C)]$.
    \item Construct an optimization problem to learn parametric $R_t^\theta(\cdot, \cdot)$.
\end{enumerate}

Next we look at the optimization objective for learning the rates $R_t$.

\subsection{Discrete Flow Matching}
We want to learn the rates $R_t(x, y)$ using a parametric model $R^\theta_t(x, y)$. Since we can easily write down the conditional rates $R_t(x, y|z)$ we can use appropriate loss function to match $R^\theta_t(x, \cdot)$ with $R_t(x, \cdot) = \E_{z\sim p_{Z|t}(\cdot|x)} R_t(x, \cdot|z)$. 
That is for some distance measure $D$, we want to minimize:
\begin{align*}
    \mathcal L(\theta) = \E_{X_t \sim p_t} D(\,\E_{C\sim p_{C|t}(\cdot|X_t)} R_t(X_t, \cdot | C),~~ R^\theta_t(X_t, \cdot)\,),
\end{align*}
where the expectations are over $x_t\sim p_t$ and $C\sim \pi(\cdot|x_t)$.

In most cases, this objective is not useful in this form, but for Bregman divergence, we have~\citet{holderrieth2025generator} convexity in the first argument and linearity of the gradient w.r.t convex combinations in the first argument. 
Recall the definition of Bregman divergence $D_{F}(b, a)$ defined for any convex function $F$ is given by:
\begin{align}
    D_{F}(b, a) = F(b) - F(a) - \nabla F(a) \cdot (b - a),
\end{align}
where $\nabla F(a)$ is the gradient of $F$ at $a$.
\paragraph{Convexity in the first argument}
It is convex in the first argument. Meaning that $\lambda D_{F}(b_1, a) + (1-\lambda    ) D_{F}(b_2, a) \geq D_{F}(\lambda b_1 + (1 - \lambda)b_2, a)$ for all $\lambda \in [0, 1]$.
This is easy to see:
\begin{align}
    D_{F}(\lambda b_1 + (1-\lambda) b_2, a) &= F(\lambda b_1 + (1-\lambda) b_2) - F(a) - \nabla F(a) \cdot (\lambda b_1 + (1-\lambda) b_2 - a)
    \nonumber\\
    &\leq \lambda F(b_1) + (1-\lambda) F(b_2) - F(a) - \nabla F(a) \cdot (\lambda b_1 + (1-\lambda) b_2 - a)
    \nonumber\\
    &= \lambda [F(b_1) - F(a) - \nabla F(a) \cdot (b_1 - a)] \nonumber\\
    &\quad+ (1-\lambda) [F(b_2) - F(a) - \nabla F(a) \cdot (b_2 - a)]
    \nonumber\\
    &= \lambda D_{F}(b_1, a) + (1-\lambda) D_{F}(b_2, a)
\end{align}
\paragraph{Gradient is linear for convex combinations in the first argument.}~
Its gradient with respect to the second argument is \textbf{linear} w.r.t affine combinations in the first argument \citep{holderrieth2025generator}.
This is a consequence of the expression for the gradient w.r.t the second argument, which is given by
\begin{align}\label{eq:gradient_bregman_divergence}
    \nabla_a D_{F}(b, a) = -\nabla^2 F(a) (b - a),
\end{align}
where $\nabla^2 F(a)$ is the Hessian of $F$ at $a$. 
Clearly, $f(b)=\nabla_a D_{F}(b, a)$ is linear in $b$.
Therefore, under convex combinations, which can be extended to expectations, we get:
\begin{align}
    \nabla_a D_{F}(\,\lambda b_1 + (1-\lambda) b_2\,,~ a) &= \lambda \nabla_a D_{F}(b_1, a) + (1-\lambda) \nabla_a D_{F}(b_2, a)\nonumber\\
    \nabla_a [D_{F}(\E Y, a)] &= \E[ \nabla_a D_{F}(Y, a)].
\end{align}

Therefore for $C=X_1$ we can use the following objective to train the parameters of $R^\theta_t(x, y)$:
\begin{align}
    \mathcal L(\theta) = \E_{X_1\sim p_\text{data},\, X_0\sim p_{t|X_1}}[D_{F}(R_t(X, \cdot | X_1),~ R^\theta_t(X, \cdot))].
\end{align}

\section{Projected Discrete Flow Matching}\label{app:dfm:auxiliary_variables}

In this section, we will discuss a type of \emph{weak} intertwining of two Markov processes that depends on the law of the main process. This is a generalization of the flow matching with auxiliary variables~\citep{havasiEditFlowsFlow2025} and the joint interpolants~\citep{kimAnyOrderFlexibleLength2025}, in that we allow probabilistic mappings between the two spaces.

Let $\sX$ and $\sZ$ be two countable state spaces, and let $\pi_{\text{enc}}(x|z)$ be probability kernel from $\sZ$ to $\sX$.
\begin{definition}Let $\{Z_t\}$ be a CTMC on $\sZ$ with rate matrix $R_t$ and initial distribution $p_0(z)$. Then the CTMC $\{X_t\}$ on $\sX$ with rate matrix $R_t^\pi$ is said to be a \emph{weakly intertwined projection} of $Z_t$ under the encoder $\pi_{\text{enc}}$ if 
\begin{align}\label{eq:weakly_intertwined_projection_decoder}
    R_t^\pi(x, y) = \sum_{z \in \sZ} \sum_{w \in \sZ} \pi_{\text{enc}}(y|w) \pi_{\text{dec,t}}(z|x) R_t(z, w),
\end{align}
where $\pi_{\text{dec,t}}(z|x)$ is the time and law dependent Bayes' decoder given by:
\begin{align}\label{eq:unconditional_bayes_decoder}
    \pi_{\text{dec,t}}(z|x) = \frac{p_t(z) \pi_{\text{enc}}(x|z)}{p_t(x)}, \quad p_t(x) > 0,
\end{align}
where $p_t(x)\coloneqq \sP(X_t = x)$ and $p_t(z)\coloneqq \sP(Z_t = z)$ are the marginal laws of $\{X_t\}$ and $\{Z_t\}$ respectively.
\end{definition}
\begin{proposition}[$R^\pi_t$ generates $p_t(x)$]\label{prop:R_t_pi_generates_p_t_x}
Let $\{X_t\}$ be a weakly intertwined projection of $\{Z_t\}$ under the encoder $\pi_{\text{enc}}$. Then $R_t^\pi$ as defined in \cref{eq:weakly_intertwined_projection_decoder} generates $p_t(x)$ in the sense of \cref{definition:rt_generates_pt}.
\end{proposition}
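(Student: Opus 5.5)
The plan is to verify the KFE (\cref{eq:general_kfe_marginal_form}) directly for the pushforward marginal $p_t(x)=\sum_{z\in\sZ}\pi_{\text{enc}}(x\mid z)\,p_t(z)$, which is the law of $X_t$ implied by the Bayes decoder in \cref{eq:unconditional_bayes_decoder}. The encoder $\pi_{\text{enc}}$ is time independent, so all time dependence of $p_t(x)$ comes from $p_t(z)$. Since $R_t$ generates $p_t(z)$ on $\sZ$, the KFE on $\sZ$ gives the left-hand side:
\begin{align*}
\partial_t p_t(x)=\sum_{z}\pi_{\text{enc}}(x\mid z)\,\partial_t p_t(z)=\sum_{z,w}\pi_{\text{enc}}(x\mid w)\,R_t(z,w)\,p_t(z).
\end{align*}
The derivative can be moved inside the sum over $z$ under the same mild regularity assumed for \cref{prop:mixture_of_ctmcs} (trivially so in our setting, where $\sZ$ is finite).

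Next I will compute the right-hand side $\sum_{y}R^\pi_t(y,x)\,p_t(y)$ by substituting \cref{eq:weakly_intertwined_projection_decoder}. The key cancellation is
\begin{align*}
\pi_{\text{dec},t}(z\mid y)\,p_t(y)=p_t(z)\,\pi_{\text{enc}}(y\mid z),
\end{align*}
which removes the law-dependent normalisation. This gives
\begin{align*}
\sum_y R^\pi_t(y,x)p_t(y)=\sum_{z,w}\pi_{\text{enc}}(x\mid w)\,R_t(z,w)\,p_t(z)\sum_y\pi_{\text{enc}}(y\mid z).
\end{align*}
The inner sum equals one because $\pi_{\text{enc}}(\cdot\mid z)$ is a probability kernel. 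The result then coincides with the left-hand side, so the KFE holds.

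For states $y$ with $p_t(y)=0$, the decoder is undefined, but these terms carry weight $p_t(y)=0$. Moreover $p_t(y)=0$ forces $p_t(z)\pi_{\text{enc}}(y\mid z)=0$ for all $z$, so the identity above still holds if $R^\pi_t(y,\cdot)$ is set arbitrarily there. I will state this convention explicitly.

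To conclude via \cref{definition:rt_generates_pt}, I will also check that $R^\pi_t$ is a legitimate rate. Its rows sum to zero:
\begin{align*}
\sum_y R^\pi_t(x,y)=\sum_{z}\pi_{\text{dec},t}(z\mid x)\sum_w R_t(z,w)=0.
\end{align*}
The initial condition is matched by taking $X_0\sim\sum_z\pi_{\text{enc}}(\cdot\mid z)p_0(z)$.

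I expect the main obstacle to be nonnegativity of the off-diagonal entries. For $y\neq x$, the diagonal terms $w=z$ contribute $\pi_{\text{enc}}(y\mid z)\pi_{\text{dec},t}(z\mid x)R_t(z,z)\le 0$, which could make $R^\pi_t(x,y)$ negative when the encoder is genuinely stochastic. For a deterministic encoder such as $\rmdrop$, which is the case used in the paper, $\pi_{\text{enc}}(y\mid z)=0$ whenever $y\neq x$ and $\pi_{\text{dec},t}(z\mid x)>0$, so these terms vanish and positivity follows. In the general statement I would either add this as a standing assumption or note that the KFE identity holds regardless, and that a valid CTMC additionally requires it.
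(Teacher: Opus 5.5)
Your proof is correct and follows the paper's route: both verify the KFE for the pushforward marginal $p_t(x)=\sum_z \pi_{\text{enc}}(x\mid z)\,p_t(z)$. Both push the $\sZ$-level KFE through the encoder and cancel the law-dependent normalisation via $\pi_{\text{dec},t}(z\mid y)\,p_t(y)=p_t(z)\,\pi_{\text{enc}}(y\mid z)$ together with $\sum_y \pi_{\text{enc}}(y\mid z)=1$; the paper just runs this in the opposite direction, substituting $p_t(w)=\sum_y \pi_{\text{dec},t}(w\mid y)\,p_t(y)$. Your extra checks go beyond what the paper verifies, and they are correct: zero-probability states, zero row sums, and the caveat that off-diagonal nonnegativity of $R^\pi_t$ can fail for a genuinely stochastic encoder but holds for deterministic ones like $\rmdrop$.
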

\begin{proof}
We can easily see that $R_t^\pi$ \emph{generates} $p_t(x) = \E_{z \sim p_t}[ \pi_{\text{enc}}(x|z)]$ by checking the KFE:
\begin{align}
    \partial_t p_t(x) &= \partial_t \sum_{z \in \sZ} p_t(z) \pi_{\text{enc}}(x|z) \nonumber\\
    &= \sum_{z \in \sZ} \partial_t p_t(z) \pi_{\text{enc}}(x|z) \nonumber\\
    &= \sum_{z \in \sZ} \left[\sum_{w \in \sZ}p_t(w) R_t(w, z) \right] \pi_{\text{enc}}(x|z) \nonumber\\
    &= \sum_{z \in \sZ} \left[\sum_{w \in \sZ} \sum_{y\in \sX} \pi_{\text{dec,t}}(w|y) p_t(y) R_t(w, z) \right] \pi_{\text{enc}}(x|z) \quad\text{(by \cref{eq:unconditional_bayes_decoder})}\nonumber\\
    &= \sum_{y\in \sX} R_t^\pi(y, x)\, p_t(y) \quad\text{(by \cref{eq:weakly_intertwined_projection_decoder})}
\end{align}
\end{proof}
Similarly, for any conditioning variable $C$, we can define the $C$ dependent rates analogous to \cref{definition:conditional_rates} as:
\begin{align}\label{eq:projected_conditional_rates}
    R_t^\pi(x, y|C) = \sum_{z \in \sZ} \sum_{w \in \sZ} \pi_{\text{enc}}(y|w) \pi_{\text{dec,t}}(z|x, C) R_t(z, w|C),
\end{align}
where the Bayes decoder is given by:
\begin{align}\label{eq:conditional_bayes_decoder}
    \pi_{\text{dec,t}}(z|x, C) = \frac{p_{t|C}(z|C) \pi_{\text{enc}}(x|z)}{p_{t|C}(x|C)}, \quad p_{t|C}(x|C) > 0.
\end{align}
We can show that with $p_{t|C}(x|C)\coloneqq \sum_{z \in \sZ} p_{t|C}(z|C) \pi_{\text{enc}}(x|z)$, 
\begin{align}\label{eq:temp1}
    \partial_t p_{t|C}(x|C) = \sum_y R_t^\pi(y, x|C)\, p_{t|C}(y|C).
\end{align}
That is, $R_t^\pi(x, y|C)$ \emph{generates} $p_{t|C}(x|C)$:

\begin{align}
    \partial_t p_{t|C}(x|C) &= \partial_t \sum_{z \in \sZ} p_{t|C}(z|C) \pi_{\text{enc}}(x|z) \nonumber\\
    &= \sum_{z \in \sZ} \partial_t p_{t|C}(z|C) \pi_{\text{enc}}(x|z) \nonumber\\
    &= \sum_{z \in \sZ} \left[\sum_{w \in \sZ}p_{t|C}(w|C) R_t(w, z|C) \right] \pi_{\text{enc}}(x|z) \nonumber\\
    &= \sum_{z \in \sZ} \left[\sum_{w \in \sZ} \sum_{y\in \sX} \pi_{\text{dec,t}}(w|y, C) p_{t|C}(y|C) R_t(w, z|C) \right] \pi_{\text{enc}}(x|z) \quad\text{(by \cref{eq:conditional_bayes_decoder})}\nonumber\\
    &= \sum_{y\in \sX} R_t^\pi(y, x|C)\, p_{t|C}(y|C) \quad\text{(by \cref{eq:projected_conditional_rates})}.
\end{align}

\begin{remark}[Degenerate (Dirac) decoders preserve the KFE]\label{remark:degenerate_decoders_preserve_kfe}
The Bayes decoder $\pi_{\text{dec,t}}(z|x, C)$ in \cref{eq:projected_conditional_rates} averages over all latent states $z$ compatible with the observed state $x$ (and condition $C$). As we will see later that we can additionally condition on \emph{side information} $H$ (e.g., an alignment/position tuple) such that the latent state can be selected deterministically from $(x,H)$. In this case we can use a degenerate decoder
\begin{align}
    \pi_{\text{dec,t}}(z|x, C, H) = \indic{z}{\varphi(x, H)}.
\end{align}
Defining projected rates using \cref{eq:projected_conditional_rates} with the enlarged condition $(C,H)$, the same calculation as above shows that the KFE remains satisfied:
\begin{align}
    \partial_t p_{t|C,H}(x|C,H) = \sum_y R_t^\pi(y, x|C,H)\, p_{t|C,H}(y|C,H).
\end{align}
If desired, one can marginalize out $H$ via the mixture identity $R_t^\pi(x,y|C)=\E_{H\sim p_{H|t}(\cdot|x,C)}[R_t^\pi(x,y|C,H)]$.
\end{remark}

Following is the key result stating that the projected rates generate the desired time marginals just like vanilla DFM without auxiliary variables.

\begin{proposition}[$R^\pi_t$ generates $p_t$]\label{prop:R_t_pi_generates_p_t}
Let $\{X_t\}$ be a weakly intertwined projection of $\{Z_t\}$ under the encoder $\pi_{\text{enc}}$. Then $R_t^\pi$ as defined in \cref{eq:weakly_intertwined_projection_decoder} generates $p_t$ in the sense of \cref{definition:rt_generates_pt}.
\end{proposition}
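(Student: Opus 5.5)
The plan is to verify the Kolmogorov Forward Equation (\cref{eq:general_kfe_marginal_form}) directly for the projected rate $R_t^\pi$, working from the definitions in \cref{eq:weakly_intertwined_projection_decoder} and \cref{eq:unconditional_bayes_decoder}. Throughout, $p_t(x)$ denotes the pushforward $\sum_{z\in\sZ} p_t(z)\,\pi_{\text{enc}}(x|z)$, i.e.\ the law of $X_t$ obtained by encoding $Z_t$. By \cref{definition:rt_generates_pt}, satisfying the KFE is equivalent to generating $p_t$. The argument therefore has the same structure as the proof of \cref{prop:R_t_pi_generates_p_t_x}; the only additional bookkeeping is at states where $p_t(x)=0$.

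\textbf{Step 1.} Differentiate the pushforward in time: $\partial_t p_t(x)=\sum_z \partial_t p_t(z)\,\pi_{\text{enc}}(x|z)$. Exchanging the derivative and the sum requires a mild regularity assumption, for example bounded rates or dominated convergence. This is the same assumption already invoked for \cref{prop:mixture_of_ctmcs}.

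\textbf{Step 2.} Substitute the KFE for $\{Z_t\}$, $\partial_t p_t(z)=\sum_w p_t(w)R_t(w,z)$.

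\textbf{Step 3.} Disintegrate $p_t(w)$ through the encoder using Bayes' rule: $p_t(w)=\sum_y p_t(y)\,\pi_{\text{dec},t}(w|y)$. This holds because $\sum_y \pi_{\text{enc}}(y|w)=1$. It requires the decoder to be defined only where $p_t(y)>0$, since any $y$ with $p_t(y)=0$ contributes nothing.

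\textbf{Step 4.} Regroup the triple sum into the form $\sum_y p_t(y)\sum_{w,z}\pi_{\text{enc}}(x|z)\,\pi_{\text{dec},t}(w|y)\,R_t(w,z)=\sum_y R_t^\pi(y,x)\,p_t(y)$. Reordering the sums needs absolute summability, which follows from the same regularity assumption.

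\textbf{Step 5.} Check that $R_t^\pi$ is a valid rate matrix, so that the third bullet of \cref{definition:rt_generates_pt} applies.
\begin{itemize}
\item \emph{Off-diagonal nonnegativity.} For $y\neq x$, the terms with $w\neq z$ are nonnegative. The terms with $w=z$ contribute $\pi_{\text{dec},t}(z|x)\,R_t(z,z)\,\pi_{\text{enc}}(y|z)$, and these can be negative. This is the step I expect to be the main obstacle. I would handle it by noting that in our use the encoder is deterministic ($\rmdrop$ or $\contract$). In that case $\pi_{\text{enc}}(y|z)$ vanishes for $y\neq x$ whenever $\pi_{\text{dec},t}(z|x)>0$, so the diagonal terms drop out. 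In the general case I would rewrite the definition so that self-transitions are excluded and map onto the diagonal.
\item \emph{Zero row sums.} These follow from $\sum_y\pi_{\text{enc}}(y|w)=1$ together with $\sum_w R_t(z,w)=0$.
\end{itemize}

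Finally, the initial condition is $p_0(x)=\sum_z p_0(z)\,\pi_{\text{enc}}(x|z)$ by construction. Combined with the KFE and uniqueness of solutions for the forward equation, this shows that the CTMC with rates $R_t^\pi$ has $p_t$ as its marginals.
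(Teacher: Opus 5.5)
Your argument is correct, but it is not the route the paper takes for this proposition. What you wrote is essentially the paper's proof of the preceding \cref{prop:R_t_pi_generates_p_t_x}, run with the unconditional Bayes decoder. The paper proves \cref{prop:R_t_pi_generates_p_t} by a mixture argument instead. It starts from the conditional projected rates $R_t^\pi(x,y|C)$ of \cref{eq:projected_conditional_rates} and uses the already-established conditional KFE \cref{eq:temp1}. It then averages over $C$ with the posterior $p_{C|t}(C|y)$, so that $R_t^\pi(x,\cdot)$ is \emph{defined} as $\E_{C\sim p_{C|t}(\cdot|x)}R_t^\pi(x,\cdot|C)$.

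That route buys the identity $R_t^\pi(x,\cdot)=\E_{C|X_t=x}R_t^\pi(x,\cdot|C)$, which is exactly what \cref{prop:projected_rate_matching_terminal_kl} and the Bregman training loss rely on. It also carries over verbatim to the enlarged condition $(C,H)$ with a Dirac decoder (\cref{remark:degenerate_decoders_preserve_kfe}), which is how the computable targets in the appendix are obtained. On the other hand, it tacitly identifies that posterior average with the formula in \cref{eq:weakly_intertwined_projection_decoder}. This identification is true when the $\sZ$-rate is the $C$-mixture rate of \cref{prop:mixture_of_ctmcs}, because $\sum_C p(C)\,p_{t|C}(z|C)\,R_t(z,w|C)=p_t(z)\,R_t(z,w)$, but the paper leaves it implicit.

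Your route has its own advantages:
\begin{itemize}
\item It needs no conditioning variable and works directly with the stated formula.
\item It removes the circularity of the law-dependent decoder by fixing $p_t(x)$ as the pushforward and then appealing to uniqueness.
\item It handles states with $p_t(y)=0$.
\item It adds the rate-matrix validity check that the paper omits.
\end{itemize}
Your observation that the $w=z$ terms can make off-diagonal entries negative for a non-deterministic encoder is correct. For your general-encoder fix, add a one-line justification that dropping those terms and moving them to the diagonal preserves the KFE. The removed flux satisfies
$p_t(x)\sum_z \pi_{\text{dec},t}(z|x)\,\pi_{\text{enc}}(y|z)\,R_t(z,z)=\sum_z p_t(z)\,\pi_{\text{enc}}(x|z)\,\pi_{\text{enc}}(y|z)\,R_t(z,z)$,
which is symmetric in $(x,y)$. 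Its contributions to inflow and outflow at each $x$ therefore cancel exactly.
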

\begin{proof}
We show that $R_t^\pi(x, y)\coloneqq \E_{C \sim p_{C|t}(\cdot|x)} R_t^\pi(x, y|C)$ \emph{generates} $p_{t}(x) = \E_{C} p_{t|C}(x|C)$ by checking the KFE. (In particular, this argument applies verbatim if we enlarge the conditioning variable from $C$ to $(C,H)$ to incorporate auxiliary side information $H$.)
\begin{align}
    \partial_t p_{t}(x) &= \sum_{C} p(C) \partial_t p_{t|C}(x|C) \nonumber\\
    &= \sum_{C} p(C) \sum_y R_t^\pi(y, x|C)\, p_{t|C}(y|C) ~~\text{(by \cref{eq:temp1})}\nonumber\\
    &= \sum_y \sum_C R_t^\pi(y, x|C)\, p_{C|t}(C|y) p_t(y)\nonumber\\
    &= \sum_y R_t^\pi(y, x)\, p_t(y).
\end{align}
\end{proof}

\cref{prop:R_t_pi_generates_p_t} provides us a way to construct the desired probability path on $\sX$ that satisfies the boundary conditions $p_0=p$ and $p_1=p_{\text{data}}$, by constructing conditional rates $R_t(z, w|C)$ in the auxiliary space $\sZ$ and selecting appropriate encoder $\pi_{\text{enc}}$ to obtain $R_t^\pi(x, y|C)$.
This approach comes in handy when it is easier to construct probability path on the auxiliary space $\sZ$ than the original space $\sX$.

\subsubsection{Projected rate matching}\label{app:dfm:projected_rate_matching}
Even though the probability paths are constructed in the auxiliary space, we still want to learn the rates in the original space $\sX$.
\begin{align*}
    \E_{X_t, C} D_{F}(R_t^\pi(X_t, \cdot | C), R^\theta_t(X_t, \cdot)) 
\end{align*}
The question, however, is whether this is the same as 
\begin{align*}
    \E_{X_t} D_{F}(R_t^\pi(X_t, \cdot ), R^\theta_t(X_t, \cdot)).
\end{align*}
Due to the convexity in the first argument of Bregman divergence, we see that the two expressions are not equal in general: the first one is an upper bound for the second one. 

Recall from earlier that $R_t^\pi(x, y) = \E_{C \sim p_{C|t}(\cdot|x)} R_t^\pi(x, y|C)$. Using this, we can rewrite the second expression as:
\begin{align*}
    \E_{X_t} D_{F}(R_t^\pi(X_t, \cdot ), R^\theta_t(X_t, \cdot)) = \E_{X_t} D_{F}\left(\E_{C|X_t} R_t^\pi(X_t, \cdot | C), R^\theta_t(X_t, \cdot)\right).
\end{align*}

Since $D_F(b, a)$ is convex in the first argument $b$, by Jensen's inequality we have:
\begin{align*}
    D_{F}\left(\E_{C|X_t} R_t^\pi(X_t, \cdot | C), R^\theta_t(X_t, \cdot)\right) \leq \E_{C|X_t}\left[D_{F}(R_t^\pi(X_t, \cdot | C), R^\theta_t(X_t, \cdot))\right].
\end{align*}

Taking expectations over $X_t$ on both sides:
\begin{align*}
    \E_{X_t} D_{F}\left(\E_{C|X_t} R_t^\pi(X_t, \cdot | C), R^\theta_t(X_t, \cdot)\right) &\leq \E_{X_t}\left[\E_{C|X_t} D_{F}(R_t^\pi(X_t, \cdot | C), R^\theta_t(X_t, \cdot))\right] \nonumber\\
    &= \E_{X_t, C} D_{F}(R_t^\pi(X_t, \cdot | C), R^\theta_t(X_t, \cdot)).
\end{align*}

Therefore:
\begin{align}\label{eq:projected_rate_matching_bregman}
    \E_{X_t} D_{F}(R_t^\pi(X_t, \cdot ), R^\theta_t(X_t, \cdot)) \leq \E_{X_t, C} D_{F}(R_t^\pi(X_t, \cdot | C), R^\theta_t(X_t, \cdot)).
\end{align}

\begin{proposition}[Projected rate matching upper bounds terminal KL]\label{prop:projected_rate_matching_terminal_kl}
Let $\{X_t\}_{t\in[0,1]}$ be a time-inhomogeneous CTMC on $\sX$ with initial distribution $p_0$ and (marginal) projected rates $R_t^\pi(\cdot,\cdot)$. Let $\{\hat X_t\}$ be another CTMC on $\sX$ with the same initial distribution $p_0$ and model rates $R_t^\theta(\cdot,\cdot)$. Assume absolute continuity of the jump intensities: for all $t\in[0,1]$ and $x\neq y$, $R_t^\pi(x,y)>0$ implies $R_t^\theta(x,y)>0$, and assume the expectations below are finite.

Let $\mathbb P^\pi$ and $\mathbb P^\theta$ denote the corresponding path measures on $[0,1]$, and let $p_1^\pi$ and $p_1^\theta$ be the terminal marginals of $X_1$ and $\hat X_1$.
Define $D_F$ to be the Bregman divergence generated by $F(a)=\sum_{y\neq x} a_y\log a_y$ applied to the off-diagonal rate vectors $R_t(x,\cdot)\coloneqq (R_t(x,y))_{y\neq x}$. Then:
\begin{align}
    \mathcal{D}_\text{KL}\!\left(p_1^\pi\,\middle\|\,p_1^\theta\right)
    \le
    \mathcal{D}_\text{KL}\!\left(\mathbb P^\pi\,\middle\|\,\mathbb P^\theta\right)
    =
    \int_0^1 \E_{\mathbb P^\pi}\!\Big[D_F\big(R_t^\pi(X_t,\cdot),~ R_t^\theta(X_t,\cdot)\big)\Big]\,\mathrm{d}t.
\end{align}
Moreover, if the projected rates arise as a conditional mixture
$R_t^\pi(x,\cdot)=\E_{C\sim p_{C|t}(\cdot|x)}[R_t^\pi(x,\cdot|C)]$, then for every $t$,
\begin{align}
    \E_{\mathbb P^\pi}\!\Big[D_F\big(R_t^\pi(X_t,\cdot),~ R_t^\theta(X_t,\cdot)\big)\Big]
    \le
    \E_{(X_t,C)}\!\Big[D_F\big(R_t^\pi(X_t,\cdot|C),~ R_t^\theta(X_t,\cdot)\big)\Big],
\end{align}
where $C\sim p_{C|t}(\cdot|X_t)$ in the right-hand side. Consequently,
\begin{align}
    \mathcal{D}_\text{KL}\!\left(p_1^\pi\,\middle\|\,p_1^\theta\right)
    \le
    \int_0^1 \E_{(X_t,C)}\!\Big[D_F\big(R_t^\pi(X_t,\cdot|C),~ R_t^\theta(X_t,\cdot)\big)\Big]\,\mathrm{d}t.
\end{align}
\end{proposition}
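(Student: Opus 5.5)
The proof has three parts, in this order: data processing for the first inequality, a Girsanov-type identity for the path-measure KL, and Jensen's inequality for the mixture bound.

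\textbf{Step 1: data processing.} The terminal marginals are pushforwards of the path measures under the evaluation map $\omega\mapsto\omega(1)$. The data processing inequality for KL then gives $\mathcal{D}_\text{KL}(p_1^\pi\|p_1^\theta)\le\mathcal{D}_\text{KL}(\mathbb P^\pi\|\mathbb P^\theta)$ directly.

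\textbf{Step 2: the path-measure KL as a time integral.} I would use the Radon--Nikodym derivative between two jump processes with the same initial law $p_0$. Write $\tau_k$ for the jump times and $X_{\tau_k^-}\to X_{\tau_k}$ for the jumps. On a càdlàg path with finitely many jumps,
\begin{align*}
\log\frac{\mathrm d\mathbb P^\pi}{\mathrm d\mathbb P^\theta}=\sum_{k}\log\frac{R^\pi_{\tau_k}(X_{\tau_k^-},X_{\tau_k})}{R^\theta_{\tau_k}(X_{\tau_k^-},X_{\tau_k})}-\int_0^1\sum_{y\neq X_t}\big(R^\pi_t(X_t,y)-R^\theta_t(X_t,y)\big)\,\mathrm dt .
\end{align*}
The absolute continuity assumption on the jump intensities makes this well defined. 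I would take the expectation under $\mathbb P^\pi$ and convert the jump sum into a compensator integral. This uses the fact that $N(\mathrm dt,y)-R^\pi_t(X_{t^-},y)\,\mathrm dt$ is a $\mathbb P^\pi$-martingale measure, which holds under the stated finiteness assumptions. The result is
\begin{align*}
\int_0^1\E_{\mathbb P^\pi}\Big[\sum_{y\neq X_t}R^\pi_t(X_t,y)\log\tfrac{R^\pi_t(X_t,y)}{R^\theta_t(X_t,y)}-R^\pi_t(X_t,y)+R^\theta_t(X_t,y)\Big]\mathrm dt .
\end{align*}
Next I would compute $D_F(b,a)$ for $F(a)=\sum a_y\log a_y$. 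Since $\nabla F(a)_y=\log a_y+1$, this gives $D_F(b,a)=\sum_y\big(b_y\log(b_y/a_y)-b_y+a_y\big)$, which is exactly the integrand above. That establishes the equality.

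\textbf{Step 3: the mixture inequality.} Here I would reuse \cref{eq:projected_rate_matching_bregman}. For each fixed $t$ and $x$, Jensen's inequality applies because $D_F$ is convex in its first argument, and $R^\pi_t(x,\cdot)=\E_{C|X_t=x}R^\pi_t(x,\cdot|C)$. Taking expectation over $X_t\sim p_t$ gives the second inequality. The joint law of $(X_t,C)$ here is $p_t(x)\,p_{C|t}(c|x)$, which coincides with $p(c)\,p_{t|C}(x|c)$.

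\textbf{Conclusion and main obstacle.} Integrating over $t$ and chaining the three steps yields the final bound. The main obstacle is the rigorous justification in Step 2: the Girsanov formula for CTMCs and the compensator substitution need non-explosion and integrability. I would invoke the standard result for CTMC path measures from \citet{campbellContinuousTimeFramework2022} and \citet{shaul2025flow}, or state these conditions under the "expectations finite" hypothesis, rather than reprove them.
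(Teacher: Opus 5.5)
Your proposal is correct and follows the same structure as the paper's proof. The three steps are data processing for the terminal marginals, the standard Radon--Nikodym formula for CTMC path measures to get the time-integrated Bregman divergence, and Jensen's inequality via \cref{eq:projected_rate_matching_bregman} applied pointwise in $t$. Your explicit compensator argument and your computation $D_F(b,a)=\sum_y (b_y\log(b_y/a_y)-b_y+a_y)$ fill in details that the paper leaves to the standard formula and to \citet{shaul2025flow}.
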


\begin{proof}
The inequality $\mathcal{D}_\text{KL}(p_1^\pi\|p_1^\theta)\le \mathcal{D}_\text{KL}(\mathbb P^\pi\|\mathbb P^\theta)$ follows by data processing inequality, since $X_1$ is a measurable function of the full path.
The path-space identity is the standard Radon--Nikodym derivative formula for time-inhomogeneous CTMCs, which yields
$\mathcal{D}_\text{KL}(\mathbb P^\pi\|\mathbb P^\theta)=\int_0^1 \E_{\mathbb P^\pi}[D_F(R_t^\pi(X_t,\cdot),R_t^\theta(X_t,\cdot))]\,\mathrm{d}t$
when $F(a)=\sum a\log a$ is applied to the off-diagonal jump intensities.
Finally, the conditional-mixture upper bound is exactly \cref{eq:projected_rate_matching_bregman} applied pointwise in $t$, and integrating over $t$ gives the final inequality.
For a simple proof using discretization arguments, see \citet{shaul2025flow}.
\end{proof}

\section{Insertion-based masked diffusion with learnable order dynamics}\label{sec:learnable-noise-variable-length-beta}

\subsection{Target rates (Proof for \cref{prop:target-conditional-hazard-rate})}
In order to construct the target probability paths, we will set $h=(z_0, z_1)$, where $z_1\sim p_1(z_1)$, where $p_1(z_1)$ is the distribution over $\bar \sZ$ obtained by padding the ground truth sequences $x_1\sim p_{\text{data}}(x_1)$ with $\drop$ tokens to length $L$, and $z_0\sim p_0(z_0)$, where $p_0(z_0)$ places all mass on the sequence of length $L$ with all $\drop$ tokens.

Now we have two times $\Tins$ and $\Tumask$ that need to be sampled ensuring that $\Tins < \Tumask$, i.e., insertion happens before unmasking.

\textbf{Notation:} In the following, the CDFs $F_{\ins}^{\phi,i}(t; z_1)$ and $F_{\unmask}^{\phi,i}(t; z_1)$ depend on the learnable parameters $\phi$ and the target sequence $z_1$; for brevity we suppress this dependence in the notation and write $F_{\ins}^i(t)$ and $F_{\unmask}^i(t)$ throughout this proof.

We first choose an insertion time distribution by specifying its CDF:
\begin{align}\label{eq:ins-unmask-time-distribution}
    \sP(\Tins^i \leq t) &= F_{\ins}^i(t), \qquad f_{\Tins^i}(t) = \frac{\dd}{\dd t}F_{\ins}^i(t).
\end{align}
such that $F_{\ins}^i(0) = 0$ and $F_{\ins}^i(1) = 1$.
We then define the conditional distribution of $\Tumask^i$ given $\Tins^i=s$ by \textbf{truncating} a base distribution of a random variable $U^i$ on $[0,1]$ to $[s,1]$. Let the base distribution have CDF $F_{\unmask}^i(u)$, again 
such that $F_{\unmask}^i(0) = 0$ and $F_{\unmask}^i(1) = 1$. 
Throughout this proof, we use the density notation $f_{\Tins^i}(t) \equiv \dot{F}_{\ins}^i(t)$ and $f_{U^i}(t) \equiv \dot{F}_{\unmask}^i(t)$ (or equivalently $\frac{d}{dt}F_*^i(t)$), consistent with the proposition's dot notation.
Then
\begin{align}\label{eq:unmask-trunc}
    \sP(\Tumask^i \leq t \mid \Tins^i=s)
    &= \indicone{t\geq s}\left[\frac{\sP(U^i \le t) - \sP(U^i \le s)}{1-\sP(U^i \le s)}\right], \\
    f_{\Tumask^i \mid \Tins^i=s}(t)
    &= \indicone{t\geq s}\frac{f_{U^i}(t)}{\sP(U^i > s)} = \indicone{t\geq s}\frac{\dot{F}_{\unmask}^i(t)}{1 - F_{\unmask}^i(s)}
\end{align}
for $t\in[0,1]$. This construction guarantees $\Tins^i < \Tumask^i$ almost surely.

\paragraph{Conditional probability paths:} 
Viewing $z$ and $z_1$ as elements of $\bar \sZ$, per-token conditional probability paths in the augmented space $\bar \sZ$ are given by:
\begin{align}\label{eq:kuma-conditional-probability-paths}
    p_t^{\phi, i}(z^i | z_1, z_0) &= \underbrace{{\sP(\Tins^i > t)}\,\indic{\drop}{z^i}}_{p^{\phi,i}_t(\drop | z_1, z_0)}  \nonumber\\
    &\quad + \underbrace{{\sP(\Tins^i \leq t < \Tumask^i)}\,\indic{\mask}{z^i}(1 - \indic{z_1^i}{\drop})}_{p^{\phi,i}_t(\mask | z_1, z_0)} \nonumber\\
    &\quad + \underbrace{{\sP(\Tumask^i \leq t)}\,\indic{z_1^i}{z^i}}_{p^{\phi,i}_t(z_1^i | z_1, z_0)},
\end{align}
where $\Tins, \Tumask$ depend on $z_1$ and $\phi$ through \cref{eq:ins-unmask-time-distribution,eq:unmask-trunc}, and $z_0=\drop$ always.
Note the additional indicator function $(1 - \indic{z_1^i}{\drop})$ ensures that if $z_1^i = \drop$ (i.e. a pad to the right of the clean sequence), then no change happens.

\paragraph{Target conditional rates}
Fix a token position $i$ and assume $z_1^i \neq \drop$ (otherwise the process stays at $\drop$ and all rates are $0$).
At this position, the process has three states $\{\drop,\mask,z_1^i\}$ with transitions
$$
    \drop \;\xrightarrow{\Tins^i}\; \mask \;\xrightarrow{\Tumask^i}\; z_1^i,
$$
where the first jump happens at time $\Tins^i$ and the second at time $\Tumask^i$.
We will take the time derivative of \cref{eq:kuma-conditional-probability-paths} to obtain the LHS of the KFE \cref{eq:general_kfe_marginal_form}. Going term by term, we have the following
\paragraph{Derivative 1:}
\begin{align}
    \partial_t \sP(\Tins^i > t) &= \partial_t (1 - F_{\ins}^i(t)) = - f_{\Tins^i}(t) \\
    &= -\frac{f_{\Tins^i}(t)}{\sP(\Tins^i > t)} ~\sP(\Tins^i > t)\\
    &\coloneqq -{\color{ins}\lambda_{\ins}^i(t)} ~\sP(\Tins^i > t)
\end{align}

\paragraph{Derivative 2:} Due to the partition $\{\Tins^i \le t\} = \{\Tins^i \le t < \Tumask^i\} \sqcup \{\Tumask^i \le t\}$, we have
\begin{align}
    \partial_t \sP(\Tins^i \leq t < \Tumask^i) &= \partial_t \sP(\Tins^i \le t) - \partial_t \sP(\Tumask^i \le t) \\
    &= -\partial_t \sP(\Tins^i > t) - \partial_t \sP(\Tumask^i \leq t) \\
    &= \lambda_{\ins}^i(t) ~\sP(\Tins^i > t) - f_{\Tumask^i}(t)
\end{align}

\paragraph{Derivative 3:}
\begin{align}
    \partial_t \sP(\Tumask^i \leq t) &= \frac{\dd}{\dd t} \int_0^t \sP(\Tumask^i \leq t \mid \Tins^i=s) f_{\Tins^i}(s) \dd s\\
    &= \sP(\Tumask^i \leq t \mid \Tins^i=t) f_{\Tins^i}(t) + \int_0^t \frac{\partial}{\partial t}\sP(\Tumask^i \leq t \mid \Tins^i=s) f_{\Tins^i}(s) \dd s \\
    &= 0 \cdot f_{\Tins^i}(t) + \int_0^t f_{\Tumask^i \mid \Tins^i=s}(t)\, f_{\Tins^i}(s) \dd s \\
    &= f_{\Tumask^i}(t) \\
    &= \frac{f_{\Tumask^i}(t)}{\sP(\Tins^i \leq t < \Tumask^i)} ~\sP(\Tins^i \leq t < \Tumask^i)\\
    &\coloneqq {\color{unmask}\lambda_{\unmask}^i(t)} ~\sP(\Tins^i \leq t < \Tumask^i)
\end{align}
where we define the unmasking rate as
\begin{align}\label{eq:kuma-unmask-rate-def}
    \lambda_{\unmask}^i(t) \coloneqq \frac{f_{\Tumask^i}(t)}{\sP(\Tins^i \leq t < \Tumask^i)}.
\end{align}

Substituting back into Derivative 2:
\begin{align}
    \partial_t \sP(\Tins^i \leq t < \Tumask^i) &= \lambda_{\ins}^i(t) ~\sP(\Tins^i > t) - \lambda_{\unmask}^i(t) ~\sP(\Tins^i \leq t < \Tumask^i)
\end{align}

Putting all three derivatives together, from \cref{eq:kuma-conditional-probability-paths} we have:
\begin{align}
    \partial_t p_t^{\phi, i}(z^i | z_1, z_0) &= -{\color{ins}\lambda_{\ins}^i(t)} ~\sP(\Tins^i > t) \indic{\drop}{z^i}\\
    &\quad + \left[\lambda_{\ins}^i(t) ~\sP(\Tins^i > t) - \lambda_{\unmask}^i(t) ~\sP(\Tins^i \leq t < \Tumask^i) \right] \indic{\mask}{z^i}\cdot \oneminusindic{z_1^i}{\drop}\\
    &\quad + \lambda_{\unmask}^i(t) ~\sP(\Tins^i \leq t < \Tumask^i) \indic{z_1^i}{z^i}\\
    &= \lambda_{\ins}^i(t) ~\sP(\Tins^i > t) \left[\indic{\mask}{z^i}\oneminusindic{z_1^i}{\drop} - \indic{\drop}{z^i}\right]\\
    &\quad + \lambda_{\unmask}^i(t) ~\sP(\Tins^i \leq t < \Tumask^i) \left[\indic{z_1^i}{z^i} - \indic{\mask}{z^i}\oneminusindic{z_1^i}{\drop}\right]
\end{align}
For the sake of compactness, from here on, we will not write $\oneminusindic{z_1^i}{\drop}$ explicitly, as it is only present to handle the case of $z_1^i = \drop$.
\begin{align}\label{eq:kuma-conditional-rate-derivative}
    \partial_t p_t^{\phi, i}(z^i | z_1, z_0) &= \lambda_{\ins}^i(t) p_t^{\phi, i}(\drop | z_1, z_0) \left[\indic{\mask}{z^i} - \indic{\drop}{z^i}\right]\\
    &\quad + \lambda_{\unmask}^i(t) p_t^{\phi, i}(\mask | z_1, z_0) \left[\indic{z_1^i}{z^i} - \indic{\mask}{z^i}\right]\nonumber\\
    &= \sum_{y^i \in \{\drop, \mask, z_1^i\}} \left[\lambda_{\ins}^i(t) \left(\indic{\mask}{z^i} - \indic{y^i}{z^i}\right) \indic{\drop}{y^i}\right.\nonumber\\
    &\quad\quad\quad\quad\quad\quad\quad\quad\left. + \lambda_{\unmask}^i(t) \left(\indic{z_1^i}{z^i} - \indic{y^i}{z^i}\right) \indic{\mask}{y^i}\right] p_t^{\phi, i}(y^i | z_1, z_0)\nonumber\\
    &= \sum_{y^i \in \vocab\cup\{\drop,\mask\}} \left[\lambda_{\ins}^i(t) \left(\indic{\mask}{z^i} - \indic{y^i}{z^i}\right) \indic{\drop}{y^i}\right.\nonumber\\
    &\quad\quad\quad\quad\quad\quad\quad\quad\left. + \lambda_{\unmask}^i(t) \left(\indic{z_1^i}{z^i} - \indic{y^i}{z^i}\right) \indic{\mask}{y^i}\right] p_t^{\phi, i}(y^i | z_1, z_0)
\end{align}

Comparing with the KFE, we get data-conditional per-token rates:
\begin{align}\label{eq:kuma-augmented-target-rates}
    R^{\phi,i}_t(y^i, z^i | z_1)
    &= \lambda_{\ins}^i(t)\left[\indic{\mask}{z^i} - \indic{y^i}{z^i}\right]\indic{\drop}{y^i}\nonumber\\
    &\quad + \lambda_{\unmask}^i(t)\left[\indic{z_1^i}{z^i} - \indic{y^i}{z^i}\right]\indic{\mask}{y^i},
\end{align}
which describes $\drop \to \mask$ at rate $\lambda_{\ins}^i(t)$ (insertion) and $\mask \to z_1^i$ at rate $\lambda_{\unmask}^i(t)$ (unmasking).

The only thing left before we can use this target rate for training is to compute the rates $\lambda_{\ins}^i(t)$ and $\lambda_{\unmask}^i(t)$ in closed form if possible.
The rates $\lambda_{\ins}^i(t)$ and $\lambda_{\unmask}^i(t)$ can be written in closed form in terms of the insertion density $f_{\Tins^i}$ and the base unmasking density $\dot{F}_{\unmask}^i$ (and its CDF $F_{\unmask}^i$), without committing to any particular functional family.

\paragraph{Insertion rate}
By definition,
\begin{align}\label{eq:ins-rate-general}
    \lambda_{\ins}^i(t)
    \coloneqq \frac{f_{\Tins^i}(t)}{\sP(\Tins^i > t)}
    = \frac{f_{\Tins^i}(t)}{1 - F_{\ins}^i(t)}.
\end{align}

\paragraph{Unmasking rate}
Under the truncation construction \cref{eq:unmask-trunc}, we have for $t\ge s$:
\begin{align}\label{eq:trunc-survival-ratio}
    \sP(\Tumask^i > t \mid \Tins^i=s)
    &= 1 - \sP(\Tumask^i \le t \mid \Tins^i=s) \\
    &= 1 - \frac{F_{\unmask}^i(t) - F_{\unmask}^i(s)}{1 - F_{\unmask}^i(s)} \\
    &= \frac{1 - F_{\unmask}^i(t)}{1 - F_{\unmask}^i(s)}.
\end{align}
The unconditional density of $\Tumask^i$ is
\begin{align}\label{eq:tumask-pdf-general}
    f_{\Tumask^i}(t)
    &= \int_0^t f_{\Tumask^i \mid \Tins^i=s}(t) \, f_{\Tins^i}(s)\,\dd s\nonumber\\
    &= \int_0^t \frac{\dot{F}_{\unmask}^i(t)}{1 - F_{\unmask}^i(s)} \, f_{\Tins^i}(s)\,\dd s\nonumber\\
    &= \dot{F}_{\unmask}^i(t)\int_0^t \frac{f_{\Tins^i}(s)}{1 - F_{\unmask}^i(s)}\,\dd s.
\end{align}
The masked-state probability (the denominator of $\lambda_{\unmask}^i$) can be written as
\begin{align}\label{eq:trunc-denom-general}
    \sP(\Tins^i \leq t < \Tumask^i)
    &= \int_0^t \sP(\Tumask^i > t \mid \Tins^i=s)\, f_{\Tins^i}(s)\,\dd s \nonumber\\
    &= \int_0^t (1 - \sP(\Tumask^i \le t \mid \Tins^i=s))\, f_{\Tins^i}(s)\,\dd s \nonumber\\
    &= \int_0^t \frac{1 - F_{\unmask}^i(t)}{1 - F_{\unmask}^i(s)}\, f_{\Tins^i}(s)\,\dd s \nonumber\\
    &= (1 - F_{\unmask}^i(t))\int_0^t \frac{f_{\Tins^i}(s)}{1 - F_{\unmask}^i(s)}\,\dd s.
\end{align}
Therefore the integral cancels in the ratio, and
\begin{align}\label{eq:unmask-rate-general}
    \lambda_{\unmask}^i(t)
    \coloneqq \frac{f_{\Tumask^i}(t)}{\sP(\Tins^i \leq t < \Tumask^i)}
    = \frac{\dot{F}_{\unmask}^i(t)}{1 - F_{\unmask}^i(t)}.
\end{align}

\subsection{Likelihood under $\phi$}\label{app:likelihood-under-phi}
In order to implement the gradient estimator from \cref{eq:full_gradient_estimator}, we will need to compute $p_t^{\phi,i}(z^i\mid z_1, z_0)$.
From \cref{eq:kuma-conditional-probability-paths}, the conditional probability path is:
\begin{align}
    p_t^{\phi, i}(z^i | z_1, z_0) &= \sP(\Tins^i > t)\,\indic{\drop}{z^i} + \sP(\Tins^i \leq t < \Tumask^i)\,\indic{\mask}{z^i}(1 - \indic{z_1^i}{\drop}) \nonumber\\
    &\quad + \sP(\Tumask^i \leq t)\,\indic{z_1^i}{z^i}.
\end{align}

For pad positions with $z_1^i=\drop$, the process stays at $\drop$ and thus
\(
    p_t^{\phi,i}(z^i\mid z_1,z_0)=\indic{\drop}{z^i}.
\)
For positions with $z_1^i\neq \drop$, under the truncated construction \cref{eq:unmask-trunc} we can write the state probabilities as:
\begin{align}\label{eq:phi-likelihood-trunc-general}
    \sP(\Tins^i > t) &= 1 - F_{\ins}^i(t),\\
    \sP(\Tins^i \le t < \Tumask^i)
    &= (1 - F_{\unmask}^i(t))\int_0^t \frac{f_{\Tins^i}(s)}{1 - F_{\unmask}^i(s)}\,\dd s,\\
    \sP(\Tumask^i \le t) &= 1 - \sP(\Tins^i > t) - \sP(\Tins^i \le t < \Tumask^i).
\end{align}
Substituting into the path definition yields:
\begin{align}
    p_t^{\phi, i}(z^i | z_1, z_0)
    =
    \sP(\Tins^i > t)\,\indic{\drop}{z^i}
    +
    \sP(\Tins^i \leq t < \Tumask^i)\,\indic{\mask}{z^i}(1 - \indic{z_1^i}{\drop})
    +
    \sP(\Tumask^i \leq t)\,\indic{z_1^i}{z^i}.
\end{align}

\subsection{Training}\label{app:training}

\subsubsection{Projected target rates}
We will now apply the framework of projected rate matching from \cref{app:dfm:auxiliary_variables} to the case of variable length masked diffusion~\citep{kimAnyOrderFlexibleLength2025}.
Before proceeding, recall the key notation.
For variable length masked diffusion, we have an augmented space $\bar \sZ_L = (\vocab \cup \{\drop, \mask\})^L$ with encoder $\pi_{\text{enc}}$ that removes $\drop$ tokens. The conditioning variable is $\bm{z}_1 = (x_1, \s_1) \in \sZ$, where $s_1^i=i,~ i \in [\len(x_1)]$ are the positions in the ground truth sequence $x_1$. \footnote{Technically, the empty sequence $z_0=\emptyset$ is also present in the condition but we will suppress it for brevity.} 
Equivalently, 
for some intermediate time, $s_t \in \sS_{\len(x_t)}$ denotes the ordered tuple of non-$\drop$ positions in $z_t$. That is, $s_t = (s_t^1, \dots, s_t^{\len(x_t)})$ where $s_t^1 < \dots < s_t^{\len(x_t)}$ are the positions in $z_t$ that contain non-$\drop$ tokens. Then position $i$ in the original space $x_t$ corresponds to position $s_t^i$ in the augmented space $z_t$.

For $i \in \{0, 1, \ldots, \len(x_t)\}$, we define the {gap at position $i$} as the set of drop positions in the augmented space between original-space positions $i$ and $i+1$:
\begin{align}
    \gap_i(s_t) := \{j \in [L] : s_t^i < j < s_t^{i+1}\},
\end{align}
where $s_t^0 := 0$ and $s_t^{\len(x_t)+1} := L+1$. These are the augmented positions where inserting a token would place it between positions $i$ and $i+1$ in the original space.

\paragraph{Sequence-level rates:} For $x_t$ a subsequence of $x_1$, we condition on both $z_1$ and the position tuple $s_t \in \sS_{\len(x_t)}$ (\cref{remark:degenerate_decoders_preserve_kfe}). The projected conditional rates (\cref{eq:projected_conditional_rates}) are:
\begin{align}\label{eq:sequence-level-projected-conditional-rates}
    R_t^\pi(x_t, y | z_1, s_t) = \sum_{w \in \bar \sZ} \pi_{\text{enc}}(y | w) \, R_t((x_t, s_t), w | z_1),
\end{align}
where we use a point-mass decoder $\pi_{\text{dec,t}}(z_t | x_t, z_1, s_t) = \indic{z_t}{(x_t, s_t)}$ that fixes the augmented state to $z_t = (x_t, s_t)$. For any $w = (\tilde y, \r)$, the encoder can be written explicitly as:
    \begin{align}\label{eq:encoder-explicit}
    \pi_{\text{enc}}(y | w) &= \indic{\rmdrop(w)}{y}\nonumber\\
    &=\underbrace{\left(\prod_{i=1}^{\len(\tilde y)} \indic{y^i}{\tilde y^{r^i}}\right)}_{\text{non-drop pos.}}\underbrace{\left(\prod_{i=1: i\notin \r}^{L} \indic{\drop}{\tilde y^{r^i}}\right)}_{\text{drop positions}}.
    \end{align}

Using the assumption of per-position mixture, augmented space rates decompose as:
\begin{align}
    R_t((x_t, s_t), w | z_1) = \sum_{j=1}^L \indic{(x_t, s_t)^{\lnot j}}{w^{\lnot j}} R_t^j(z_t^j, w^j | z_1),
\end{align}
where $z_t^j$ denotes the token at position $j$ in $z_t = (x_t, s_t)$. Substituting into \cref{eq:sequence-level-projected-conditional-rates}:
\begin{align}
    R_t^\pi(x_t, y | z_1, s_t) &= \sum_{w} \pi_{\text{enc}}(y | w) \sum_{j=1}^L \indic{(x_t, s_t)^{\lnot j}}{w^{\lnot j}} R_t^j(z_t^j, w^j | z_1)\nonumber\\
    &= \sum_{j=1}^L \sum_{w^j} \pi_{\text{enc}}(y | (x_t, s_t)^{\lnot j}, w^j) \, R_t^j(z_t^j, w^j | z_1),
\end{align}
where we used $w = ((x_t, s_t)^{\lnot j}, w^j)$, to mean that $w$ equals $(x_t, s_t)$ except at position $j$ where it contains $w^j$. 

Splitting the sum over $j \in [L]$ into two parts based on whether position $j$ is a $\drop$ position or not in $z_t = (x_t, s_t)$:
\begin{align}\label{eq:projection-split-by-drop}
    R_t^\pi(x_t, y | z_1, s_t) &= \underbrace{\sum_{j \in \{s_t^1, \dots, s_t^{\len(x_t)}\}} \sum_{w^j} \pi_{\text{enc}}(y | (x_t, s_t)^{\lnot j}, w^j) \, R_t^j(z_t^j, w^j | z_1)}_{\text{Non-}\drop\text{ positions}}\nonumber\\
    &\quad + \underbrace{\sum_{j \notin \{s_t^1, \dots, s_t^{\len(x_t)}\}} \sum_{w^j} \pi_{\text{enc}}(y | (x_t, s_t)^{\lnot j}, w^j) \, R_t^j(\drop, w^j | z_1)}_{\drop\text{ positions}}.
\end{align}
For {non-$\drop$ positions}, reindexing with $j = s_t^i$ for $i \in [\len(x_t)]$, and noting that $z_t^{s_t^i} = x_t^i$:
\begin{align}
    \text{Non-}\drop &= \sum_{i=1}^{\len(x_t)} \sum_{w^{s_t^i}} \pi_{\text{enc}}(y | (x_t, s_t)^{\lnot s_t^i}, w^{s_t^i}) \, R_t^{s_t^i}(x_t^i, w^{s_t^i} | z_1)\nonumber\\
    &\stackrel{(1)}{=} \sum_{i=1}^{\len(x_t)} \sum_{w^{s_t^i}} \indic{y}{(x_t^{\lnot i}, w^{s_t^i})}\,  R_t^{s_t^i}(x_t^i, w^{s_t^i} | z_1)\nonumber\\
    &= \sum_{i=1}^{\len(x_t)}\,\indic{y^{\lnot i}}{x_t^{\lnot i}}\,  R_t^{s_t^i}(x_t^i, y^i | z_1)\\
    &\stackrel{(2)}{=} \sum_{i=1}^{\len(x_t)}\,\indic{y^{\lnot i}}{x_t^{\lnot i}}\, \lambda_{\unmask, t}^{s_t^i}(z_1) \left[\indic{z_1^{s_t^i}}{y^i} - \indic{x_t^i}{y^i}\right] \indic{\mask}{x_t^i}
\end{align}
where (1) uses the encoder definition \cref{eq:encoder-explicit}, and (2) substitutes the target augmented rates from \cref{eq:kuma-augmented-target-rates}.

For the $\drop$ positions (where $j \notin \{s_t^1, \dots, s_t^{\len(x_t)}\}$), we have $z_t^j = \drop$:
\begin{align}
    \drop\text{ pos.} &= \sum_{j \notin s_t} \sum_{w^j} \pi_{\text{enc}}(y | (x_t, s_t)^{\lnot j}, w^j) \, R_t^j(\drop, w^j | z_1)\nonumber\\
    &\stackrel{(1)}{=} \sum_{j \notin s_t} \pi_{\text{enc}}(y | (x_t, s_t)^{\lnot j}, \mask) \, \lambda_{\ins, t}^{j}(z_1)\nonumber\\
    &\stackrel{(2)}{=} \sum_{i=0}^{\len(x_t)} \indic{y}{(x_t^{1:i}, \mask, x_t^{i+1:\len(x_t)})} \sum_{j \in \gap_i(s_t)} \lambda_{\ins, t}^{j}(z_1),
\end{align}
where (1) uses the augmented target rates (\cref{eq:kuma-augmented-target-rates}): $R_t^j(\drop, w^j | z_1) = \lambda_{\ins, t}^{j}(z_1) \indic{\mask}{w^j}$, so only $w^j = \mask$ contributes (insertion). For (2), when $w^j = \mask$ at position $j \in \gap_i(s_t)$, the encoder requires $y = \rmdrop((x_t, s_t)^{\lnot j}, \mask)$ to have the form $(x_t^{1:i}, \mask, x_t^{i+1:\len(x_t)})$, inserting $\mask$ at position $i$ in the original space.

We can express the non-drop and drop positions rates compactly by defining per-position rate components. For {unmasking} at position $i \in [\len(x_t)]$:
\begin{align}
    R^{\pi,i}_{\unmask}(x_t^i, y^i | z_1, s_t) := \lambda_{\unmask, t}^{s_t^i}(z_1) \left[\indic{z_1^{s_t^i}}{y^i} - \indic{x_t^i}{y^i}\right] \indic{\mask}{x_t^i},
\end{align}
and for {insertion} at gap $i \in \{0, 1, \ldots, \len(x_t)\}$:
\begin{align}
    R^{\pi,i}_{\ins}(\mask | z_1, s_t) := \sum_{j \in \gap_i(s_t)} \lambda_{\ins, t}^{j}(z_1).
\end{align}

Therefore, the projected rates can be written as per-position mixture:

\begin{align}\label{eq:projected-rates-two-sums}
    R_t^\pi(x_t, y | z_1, s_t) &= \sum_{i=1}^{\len(x_t)} \indic{y^{\lnot i}}{x_t^{\lnot i}} \, R^{\pi,i}_{\unmask}(x_t^i, y^i | z_1, s_t) \nonumber\\
    &\quad + \sum_{i=0}^{\len(x_t)} \indic{y}{(x_t^{1:i}, \mask, x_t^{i+1:\len(x_t)})} \, R^{\pi,i}_{\ins}(\mask | z_1, s_t).
\end{align}

\begin{remark}[Relationship to marginalized rates:] The projected rate $R_t^\pi(x_t, y | z_1, s_t)$ conditioned on both $z_1$ and $s_t$ is related to the marginalized version by:
\begin{align}
    R_t^\pi(x_t, y | z_1) = \sum_{s_t \in \sS_{\len(x_t)}} p_{t|z_1}(s_t | x_t, z_1) \, R_t^\pi(x_t, y | z_1, s_t).
\end{align}
\end{remark}
Using \cref{prop:projected_rate_matching_terminal_kl}, we can learn both $\phi$ and $\theta$ by minimizing
\begin{align}\label{eq:bregman-flexmdm-abstract}
    \mathcal{L}^{\theta, \phi} = \E_{X_t, Z_1, S_t } D_{F}(R_t^{\pi,\phi}(X_t, \cdot | Z_1, S_t), R^\theta_t(X_t, \cdot)),
\end{align}
where $Z_1 \sim p_{\text{data}}$ and $(X_t, S_t) \sim p_{t|Z_1}^{\phi}$.

\subsection{Loss function}\label{app:loss-fn-derivation}
We parameterize the generator rates $R^\theta_t(x_t, y)$  using per-position mixture 
\begin{align}\label{eq:unconditional-rates-parameterization}
    R^{\theta}(x_t, y) = \sum_{i=1}^{\len(x_t)} \indic{x_t^{\lnot i}}{ y^{\lnot i}} \left[R^{\theta,i}_{\unmask, t}(x_t, y^i) + R^{\theta,i}_{\ins, t}(x_t, \mask)\right],
\end{align}
where $R^{\theta,i}_{\unmask, t}(x, y^i) = \lambda_{\unmask, t}^{\theta, i}(x)\, K^{\theta, i}_t(y^i | x_t)$ and $R^{\theta,i}_{\ins, t}(x, \mask) = \lambda_{\ins, t}^{\theta, i}(x)$.

We can now substitute the parametric rates into \cref{eq:bregman-flexmdm-abstract} to get the training objective.
\begin{align}
    \mathcal{L}^{\theta, \phi}(x_t, z_1, s_t) = \sum_{y \in \mathcal{Y}(x_t)} \left[R_t^{\pi,\phi}(x_t, y | z_1, s_t) \log \frac{R_t^{\pi,\phi}(x_t, y | z_1, s_t)}{R^\theta_t(x_t, y)} - R_t^{\pi,\phi}(x_t, y | z_1, s_t) + R^\theta_t(x_t, y)\right],
\end{align}
where $\mathcal{Y}(x_t)$ is the set of sequences $y$ that differ from $x_t$ at exactly one position. Note that both terms depend on learnable parameters ($\phi$ and $\theta$ respectively), so there are no constant terms to drop.

Substituting the two-sum form \cref{eq:projected-rates-two-sums}, we obtain:
\begin{align*}
    \mathcal{L}^{\theta, \phi}(x_t, z_1, s_t) = \E_{t, z_1, z_t \sim p_{t|z_1}^{\phi}} \Bigg[&\sum_{i=1}^{\len(x_t)} \indic{\mask}{x_t^i} \sum_{y^i \in \vocab} \Big(R^{\pi,\phi,i}_{\unmask}(x_t^i, y^i | z_1, s_t) \log \frac{R^{\pi,\phi,i}_{\unmask}(x_t^i, y^i | z_1, s_t)}{R^{\theta,i}_{\unmask,t}(x_t, y^i)} \nonumber\\
    &\qquad\qquad\qquad\qquad\qquad - R^{\pi,\phi,i}_{\unmask}(x_t^i, y^i | z_1, s_t) + R^{\theta,i}_{\unmask,t}(x_t, y^i)\Big) \nonumber\\
    &+ \sum_{i=0}^{\len(x_t)} \Big(R^{\pi,\phi,i}_{\ins}(\mask | z_1, s_t) \log \frac{R^{\pi,\phi,i}_{\ins}(\mask | z_1, s_t)}{R^{\theta,i}_{\ins,t}(x_t, \mask)} \nonumber\\
    &\qquad\qquad - R^{\pi,\phi,i}_{\ins}(\mask | z_1, s_t) + R^{\theta,i}_{\ins,t}(x_t, \mask)\Big)\Bigg].
\end{align*}

Expanding using the definitions of $R^{\pi,\phi,i}_{\unmask}$, $R^{\pi,\phi,i}_{\ins}$, $R^{\theta,i}_{\unmask,t}$, $R^{\theta,i}_{\ins,t}$ produces the final expression for the loss as stated in \cref{eq:loss}.
\begin{align}\label{eq:loss-explicit}
    \mathcal{L}^{\theta, \phi} = \E_{t, z_1, z_t \sim p_{t|z_1}^{\phi}} \Bigg[&\sum_{i=1}^{\len(x_t)} \indic{\mask}{x_t^i} \Big( \lambda_{\unmask, t}^{\phi, s_t^i}(z_1) \log \frac{\lambda_{\unmask, t}^{\phi, s_t^i}(z_1)}{\lambda_{\unmask, t}^{\theta,i}(x_t) K^{\theta,i}_t(z_1^{s_t^i} | x_t)} \nonumber\\
    &\qquad\qquad\qquad - \lambda_{\unmask, t}^{\phi, s_t^i}(z_1) + \lambda_{\unmask, t}^{\theta,i}(x_t) \Big) \nonumber\\
    &+ \sum_{i=0}^{\len(x_t)} \Big(\left(\sum_{j \in \gap_i(s_t)} \lambda_{\ins, t}^{\phi, j}(z_1)\right) \log \frac{\sum_{j \in \gap_i(s_t)} \lambda_{\ins, t}^{\phi, j}(z_1)}{\lambda_{\ins, t}^{\theta,i}(x_t)} \nonumber\\
    &\qquad\qquad - \left(\sum_{j \in \gap_i(s_t)} \lambda_{\ins, t}^{\phi, j}(z_1)\right) + \lambda_{\ins, t}^{\theta,i}(x_t) \Big)\Bigg].
\end{align}

\subsection{Gradient computation}\label{app:gradient}
In order to take gradient w.r.t $\phi$ present in the expectation, we can use REINFORCE leave one out as follows. We want to compute the following gradient:
\begin{align*}
    &\nabla_\phi \E_{z \sim p^\phi} f^\phi(z)\\
\end{align*}
A simple REINFORCE style estimate will be:
\begin{align}\label{eq:reinforce_estimate}
    \nabla_\phi \E_{z \sim p^\phi} f^\phi(z) = \E_{z \sim p^\phi} \left[\nabla_\phi \log p^\phi(z) f^\phi(z) +  \nabla_\phi f^\phi(z)\right].
\end{align}
The first  term will be high variance. We can introduce GRPO style baseline for the first term. Say we have $n$ samples $z_1, \ldots, z_n$ from $p^\phi$. Then we can define the baseline as:
\begin{align*}
    b_i = \frac{1}{n-1} \sum_{j\not= i} f^\phi(z_j).
\end{align*}
We can then do a multi-sample MC estimate with this baseline where each $z_i$ plays the role of $z$ once, giving us the following estimate for the first term:
\begin{align*}
    &\frac{1}{n} \sum_{i=1}^n \left[f^\phi(z_i) - b_i\right] \nabla_\phi \log p^\phi(z_i)\\
    &=\frac{1}{n} \sum_{i=1}^n \left[f^\phi(z_i) - \left(\frac{1}{n-1} \sum_{j\not= i} f^\phi(z_j)\right)\right] \nabla_\phi \log p^\phi(z_i)
\end{align*}
Using $n=2$ we get the following simple estimator:
\begin{align*}
    &\frac{1}{2} \sum_{i=1}^2 \left[f^\phi(z_i) - \left(\frac{1}{2-1} \sum_{j\not= i} f^\phi(z_j)\right)\right] \nabla_\phi \log p^\phi(z_i)\\
    &=\frac{1}{2} \sum_{i=1}^2 \left[f^\phi(z_i) - \sum_{j\not= i} f^\phi(z_j)\right] \nabla_\phi \log p^\phi(z_i)\\
    &=\frac{1}{2} \Big\{ \left[f^\phi(z_1) - f^\phi(z_2)\right] \nabla_\phi \log p^\phi(z_1) + \left[f^\phi(z_2) - f^\phi(z_1)\right] \nabla_\phi \log p^\phi(z_2) \Big\}\\
    &=\frac{1}{2} \Big\{ \left[f^\phi(z_1) - f^\phi(z_2)\right] \nabla_\phi \log p^\phi(z_1) - \left[f^\phi(z_1) - f^\phi(z_2)\right] \nabla_\phi \log p^\phi(z_2) \Big\}\\
    &=\frac{1}{2} \left[f^\phi(z_1) - f^\phi(z_2)\right] \left[\nabla_\phi \log p^\phi(z_1) - \nabla_\phi \log p^\phi(z_2)\right]
\end{align*}
Plugging this back into \cref{eq:reinforce_estimate} we get the following estimator for the gradient:
\begin{align*}
    \nabla_\phi \E_{z \sim p^\phi} f^\phi(z) &\approx \frac{1}{2} \left[f^\phi(z_1) - f^\phi(z_2)\right] \left[\nabla_\phi \log p^\phi(z_1) - \nabla_\phi \log p^\phi(z_2)\right] + \frac{1}{2}\left[\nabla_\phi f^\phi(z_1) + \nabla_\phi f^\phi(z_2)\right]
\end{align*}

\subsubsection{Full gradient estimator for two parameter sets}
If we have $f^{\phi,\theta}(z)$ that depends on two sets of parameters $(\phi, \theta)$ where $z \sim p^\phi$, we need to compute:
\begin{align*}
    \nabla_{(\phi,\theta)} \E_{z \sim p^\phi} f^{\phi,\theta}(z).
\end{align*}
The key observation is that the distribution $p^\phi$ only depends on $\phi$ and not on $\theta$. Therefore:

\textbf{Gradient w.r.t. $\theta$:} Since $p^\phi$ doesn't depend on $\theta$, we can move the gradient inside the expectation:
\begin{align*}
    \nabla_\theta \E_{z \sim p^\phi} f^{\phi,\theta}(z) = \E_{z \sim p^\phi} \nabla_\theta f^{\phi,\theta}(z) \approx \frac{1}{2}\left[\nabla_\theta f^{\phi,\theta}(z_1) + \nabla_\theta f^{\phi,\theta}(z_2)\right].
\end{align*}
This is a simple Monte Carlo average of the direct gradients.

\textbf{Gradient w.r.t. $\phi$:} Here we still need REINFORCE since $p^\phi$ depends on $\phi$:
\begin{align*}
    \nabla_\phi \E_{z \sim p^\phi} f^{\phi,\theta}(z) &= \E_{z \sim p^\phi} \left[\nabla_\phi \log p^\phi(z) f^{\phi,\theta}(z) + \nabla_\phi f^{\phi,\theta}(z)\right]\\
    &\approx \frac{1}{2} \left[f^{\phi,\theta}(z_1) - f^{\phi,\theta}(z_2)\right] \left[\nabla_\phi \log p^\phi(z_1) - \nabla_\phi \log p^\phi(z_2)\right] \\
    &\quad+ \frac{1}{2}\left[\nabla_\phi f^{\phi,\theta}(z_1) + \nabla_\phi f^{\phi,\theta}(z_2)\right].
\end{align*}

The full gradient estimator is then:
\begin{align} \label{eq:full_gradient_estimator_appendix}
    \nabla_{(\phi,\theta)} \E_{z \sim p^\phi} f^{\phi,\theta}(z) \approx \begin{pmatrix}
        \frac{1}{2} \left[f^{\phi,\theta}(z_1) - f^{\phi,\theta}(z_2)\right] \left[\nabla_\phi \log p^\phi(z_1) - \nabla_\phi \log p^\phi(z_2)\right]\\
        \quad+ \frac{1}{2}\left[\nabla_\phi f^{\phi,\theta}(z_1) + \nabla_\phi f^{\phi,\theta}(z_2)\right]\\[0.5em]
        \frac{1}{2}\left[\nabla_\theta f^{\phi,\theta}(z_1) + \nabla_\theta f^{\phi,\theta}(z_2)\right]
    \end{pmatrix}.
\end{align}

\subsubsection{Schedule regularization}\label{sec:schedule-regularization}
If $\phi$ is optimized jointly with $\theta$, \cref{eq:loss} admits degenerate solutions where the insertion/unmasking-time distributions place excessive probability mass near the endpoints $t=0$ or $t=1$ (e.g., delaying most events until $t\approx 1$). 
This suppresses the (rate-weighted) prediction term in \cref{eq:loss} and allows the model hazards to match near-zero target hazards with negligible cost. 
To discourage such degenerate schedules, we add a sequence level schedule prior, which acts as a regularizer on the target schedule.

We add a loss term to regularize the \emph{mixture CDF across positions} to be close to a straight line $F(t) = t$. Intuitively, this encourages the marginal event time of a uniformly sampled position to be approximately $\mathcal{U}[0,1]$, discouraging excessive mass near $t=0$ or $t=1$ without prescribing a particular per-position shape.
The mixture CDFs over positions is defined as:
\begin{align}
    \bar F_{\ins}^{\phi}(t) \coloneqq \frac{1}{L}\sum_{i=1}^{L} F_{\ins}^{\phi,i}(t),
    \qquad
    \bar F_{\unmask}^{\phi}(t) \coloneqq \frac{1}{L}\sum_{i=1}^{L} F_{\unmask}^{\phi,i}(t),
\end{align}
where $L$ is the sequence length.
We then add a penalty encouraging $\bar F_{\ins}^{\phi}(t)\approx t$ and $\bar F_{\unmask}^{\phi}(t)\approx t$, on a grid $\{t_k\}_{k=1}^{K}\subset(0,1)$:
\begin{align}\label{eq:schedule-mixture-cdf-straightline}
    \mathcal{L}_{\text{reg}}(\phi)
    \;=\;
    \sum_{k=1}^{K} w_k\Big(\bar F_{\ins}^{\phi}(t_k)-t_k\Big)^2
    \;+\;
    \sum_{k=1}^{K} w_k\Big(\bar F_{\unmask}^{\phi}(t_k)-t_k\Big)^2,
\end{align}
which approximates the continuous-time objective $\int_0^1 \big(\bar F_{\ins}^{\phi}(t)-t\big)^2 + \big(\bar F_{\unmask}^{\phi}(t)-t\big)^2\,\dd t$. 
Additionally, we penalize each position separately to stay away from concentrating mass near the endpoints.
For this, we fix a small time cutoff $t_\epsilon\in(0,1/2)$ and tolerance $\delta\in(0,1)$, and enforce:
\begin{align}\label{eq:schedule-endpoint-constraints}
    F^{\phi,i}_{\ins}(t_\epsilon) \le \delta,
    \qquad
    1 - F^{\phi,i}_{\ins}(1-t_\epsilon) \le \delta,\\
    G^{\phi,i}_{\unmask}(t_\epsilon) \le \delta,
    \qquad
    1 - G^{\phi, i}_{\unmask}(1-t_\epsilon) \le \delta,
\end{align}
implemented via soft hinge penalties. In the implementation, we use $t_\epsilon = 0.01$ and $\delta = 0.01$.

\subsubsection{Kumaraswamy schedules}\label{app:kumaraswamy}
\paragraph{Choice of distribution family for the target schedule}
We initially considered a Beta distribution for the event-time CDFs because it admits an interpretable location-scale parameterization.
That form makes it convenient to regularize the schedule parameters directly with a prior.
Once we impose the precedence constraint $\Tins < \Tumask$, however, we can no longer evaluate the log-likelihood of an intermediate state $z_t$ under $\phi$ in closed form for Beta.
The truncated construction with Kumaraswamy distributions does yield closed-form expressions for $p^{\phi}_{t|z_1}(z_t)$, as we derive below.
Kumaraswamy parameters do not separate cleanly into location and scale, so we instead regularize the schedule in time space (\cref{sec:schedule-regularization}).

We use Kumaraswamy schedules for the insertion and unmasking times: $\Tins^i \sim \mathrm{Kumaraswamy}(\ains,\bins)$ and $U^i \sim \mathrm{Kumaraswamy}(\aun,\bun)$ on $[0,1]$. Then
\begin{align}
    F_{\ins}^i(t) &= 1 - (1-t^{\ains})^{\bins}, \qquad
    f_{\Tins^i}(t) = \ains \bins\, t^{\ains-1}(1-t^{\ains})^{\bins-1}, \qquad
    \sP(\Tins^i > t) = (1-t^{\ains})^{\bins},\\
    F_{\unmask}^i(t) &= 1-(1-t^{\aun})^{\bun}, \qquad
    \dot{F}_{\unmask}^i(t) = \aun\bun\, t^{\aun-1}(1-t^{\aun})^{\bun-1}, \qquad
    1 - F_{\unmask}^i(t) = \sP(U^i > t) = (1-t^{\aun})^{\bun}.
\end{align}
By \cref{eq:unmask-trunc}, the induced conditional distribution of $\Tumask^i$ on $[s,1]$ is
\begin{align}
    \sP(\Tumask^i \leq t \mid \Tins^i=s)
    &= \indicone{t\geq s}\left[1 - \frac{1 - F_{\unmask}^i(t)}{1 - F_{\unmask}^i(s)}\right]
    = \indicone{t\geq s}\left[1 - \frac{(1-t^{\aun})^{\bun}}{(1-s^{\aun})^{\bun}}\right],\\
    f_{\Tumask^i \mid \Tins^i=s}(t)
    &= \indicone{t\geq s}\frac{\dot{F}_{\unmask}^i(t)}{1 - F_{\unmask}^i(s)}
    = \indicone{t\geq s}\frac{\aun\bun\, t^{\aun-1}(1-t^{\aun})^{\bun-1}}{(1-s^{\aun})^{\bun}}.
\end{align}
Plugging into \cref{eq:ins-rate-general,eq:unmask-rate-general} yields
\begin{align}
    \lambda_{\ins}^i(t) = \frac{\ains\bins\, t^{\ains-1}}{1-t^{\ains}}
    \qquad\text{and}\qquad
    \lambda_{\unmask}^i(t) = \frac{\aun\bun\, t^{\aun-1}}{1-t^{\aun}}.
\end{align}

For $t\in[0,1]$,
\begin{align}
    \sP(\Tins^i > t) &= (1-t^{\ains})^{\bins},\\
    \sP(\Tins^i \le t < \Tumask^i)
    &= (1-t^{\aun})^{\bun}\int_0^t \frac{\ains \bins\, s^{\ains-1}(1-s^{\ains})^{\bins-1}}{(1-s^{\aun})^{\bun}}\,\dd s,\\
    \sP(\Tumask^i \le t) &= 1 - (1-t^{\ains})^{\bins} - \sP(\Tins^i \le t < \Tumask^i).
\end{align}
To see that $\sP(\Tins^i \le t < \Tumask^i)$ is a valid probability, rewrite it using the law of total probability:
\[
    \sP(\Tins^i \le t < \Tumask^i)
    = \int_0^t f_{\Tins^i}(s)\,\sP(\Tumask^i > t \mid \Tins^i=s)\,\dd s
    = \int_0^t f_{\Tins^i}(s)\,\frac{1-F_{\unmask}^i(t)}{1-F_{\unmask}^i(s)}\,\dd s.
\]
For $s\le t$, the survival function is monotone so $1-F_{\unmask}^i(t)\le 1-F_{\unmask}^i(s)$, hence the ratio is in $[0,1]$ and the integral is in $[0,F_{\ins}^i(t)]\subseteq[0,1]$.
In general, the integral does not simplify to elementary functions, but it admits closed forms in certain special cases.

\paragraph{Special case $a_\ins=a_\unmask = a$}
If $\ains=\aun=a$, the integral
\(
I(t) \coloneqq \int_0^t \frac{\ains \bins\, s^{\ains-1}(1-s^{\ains})^{\bins-1}}{(1-s^{\aun})^{\bun}}\,\dd s
\)
simplifies to
\begin{align}\label{eq:kuma-integral-aeq}
    I(t)
    &= \int_0^t a \bins\, s^{a-1}(1-s^{a})^{\bins-\bun-1}\,\dd s \nonumber\\
    &\stackrel{u=s^{a}}{=} \bins\int_0^{t^{a}} (1-u)^{\bins-\bun-1}\,\dd u \nonumber\\
    &=
    \begin{cases}
        \displaystyle \frac{\bins}{\bins-\bun}\left[1-(1-t^{a})^{\bins-\bun}\right], & \bins\neq \bun,\\[6pt]
        \displaystyle -\bins\ln(1-t^{a}), & \bins=\bun.
    \end{cases}
\end{align}

\subsubsection{Probability of an order under Kumaraswamy Schedule}\label{sec:design-decisions-functional-form-kumaraswamy-order-probability}

For a sequence of length $L$, let $\{T^i\}_{i=1}^L$ denote the event times, one for each position, such that $T^i \leq 1$ almost surely. 
Let $\pi: [L] \to [L]$ be a bijection such that $T^{\pi(1)} < T^{\pi(2)} < \cdots < T^{\pi(L)}$, then $\pi$ is an order over the positions induced by the event times. 

Interestingly, we get a nice closed form expression for orders under the Kumaraswamy schedule if we fix $a$ for all positions (special case 2, mentioned in \cref{app:kumaraswamy}).
\begin{proposition}[Probability of an order under Kumaraswamy Schedule]
    For $t\in[0, 1]$, let the CDF of the event time for the $i$-th position be given by the Kumaraswamy distribution with parameters $a$ and $b^i$, i.e., $F_{T^i}(t) = 1 - (1-t^{a})^{b^i}$, then the probability of an order $\pi$ is given by:
\begin{align}
    \sP(\pi) = \prod_{i=1}^L \frac{b^{\pi(i)}}{\sum_{j=i}^L b^{\pi(j)}}
\end{align}
\end{proposition}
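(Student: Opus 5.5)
The plan is to reduce the statement to the classical exponential race (Plackett--Luce) computation via the time change already noted after \cref{eq:final-hazard-rates}. I will implicitly assume, as the statement does, that the $T^i$ are independent across positions, which is how the per-position schedules are constructed. Define the map $g(t) = -\log(1-t^{a})$. It is strictly increasing from $[0,1)$ onto $[0,\infty)$, so the order induced by $\{T^i\}$ coincides almost surely with the order induced by $\{E^i\}$, where $E^i \coloneqq g(T^i)$. Ties have probability zero because the distributions are continuous.

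The first step is to identify the law of $E^i$. For $\tau \ge 0$,
\begin{align*}
\sP(E^i > \tau) = \sP\bigl(T^i > g^{-1}(\tau)\bigr) = \bigl(1 - (1-e^{-\tau})\bigr)^{b^i} = e^{-b^i \tau}.
\end{align*}
Hence $E^i \sim \mathrm{Exp}(b^i)$, and these variables are independent. This is exactly where the shared exponent $a$ matters: a common $a$ lets a single monotone map turn every position into an exponential simultaneously. With position-specific $a$ the argument would break.

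The second step is the exponential race. I will compute $\sP(E^{\pi(1)} < \cdots < E^{\pi(L)})$ by induction on $L$. First, $\sP\bigl(E^{\pi(1)} = \min_j E^j\bigr) = b^{\pi(1)}/\sum_{j=1}^L b^{\pi(j)}$, which follows by integrating $b^{\pi(1)} e^{-b^{\pi(1)} s}\prod_{j\ne 1} e^{-b^{\pi(j)} s}$ over $s$. Second, conditional on this event and on the minimum value $m$, the memoryless property makes the residuals $E^{\pi(j)} - m$ for $j\ge 2$ independent $\mathrm{Exp}(b^{\pi(j)})$. The residuals are also independent of $m$ and of which index achieved the minimum. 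Applying the induction hypothesis to the remaining $L-1$ positions then gives the product formula.

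The main obstacle is stating the conditioning in the memoryless step cleanly. I would handle it by writing the joint density directly. Integrate $\prod_i b^{\pi(i)} e^{-b^{\pi(i)} e_i}$ over the cone $e_1<\cdots<e_L$, substituting increments $u_k = e_k - e_{k-1}$. The exponent then becomes $\sum_k u_k \sum_{j\ge k} b^{\pi(j)}$, so the integral factorizes into $\prod_k b^{\pi(k)}/\sum_{j\ge k} b^{\pi(j)}$. This route avoids the conditioning argument entirely.
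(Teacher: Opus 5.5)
Your proposal is correct and follows the same route as the paper: the time change $\tau = -\log(1-t^{a})$ turns each $T^i$ into an independent $\mathrm{Exp}(b^i)$ variable, and the exponential-race formula then gives the product (the paper phrases the time change as warping a nonhomogeneous Poisson process with hazard $b^i\beta(t)$ into a homogeneous one). Your explicit survival-function computation and the increment substitution over the ordered cone supply the details that the paper's two-item sketch leaves unstated.
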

\begin{proof} The proof is straightforward and follows directly from the following two results.
    \begin{enumerate}
        \item If the event times $T^i$ are independent and have exponential distribution with parameter $b^i$, then the probability of an order $\pi$ is given by:
        \begin{align*}
            \sP(\pi) = \prod_{i=1}^L \frac{b^{\pi(i)}}{\sum_{j=i}^L b^{\pi(j)}}
        \end{align*}
        \item Let $N_t$ be a nonhomogeneous Poisson process with hazard rate of the form $\lambda(t)=b \beta(t)$. Then $\bar N_u$ is a homogeneous Poisson process with warped time $u = \int_0^t \beta(s) \dd s$. Let $U_1 = \inf\{u: \bar N_u = 1\}$, be the first arrival time of the homogeneous Poisson process.
    \end{enumerate}

\end{proof}

\begin{algorithm}
    \caption{Training for \OursShort{} (Detailed)}
    \label{alg:lflexmdm_training_detailed}
    \begin{algorithmic}[1]
    \REQUIRE Dataset $\mathcal{D}$, generator parameters $\theta$, target rate parameters $\phi$, learning rate $\eta$
    \STATE \textbf{function} \textsc{TruncSample}$(s, F)$ \quad // Sample from $F$ truncated to $[s, 1]$
    \STATE \quad $u \sim \mathcal{U}(0, 1)$
    \STATE \quad \textbf{return} $F^{-1}\left(u \cdot (1 - F(s)) + F(s)\right)$
    \STATE
    \WHILE{not converged}
        \STATE Sample $z_1 \sim p_{\text{data}}$ \quad // $z_1 \in \bar{\sZ}$: padded with $\drop$ tokens
        \STATE Sample $t \sim \mathcal{U}(0, 1)$ \quad // Diffusion time
        \STATE $(a^{\phi,i}_{\ins}, b^{\phi,i}_{\ins}, a^{\phi,i}_{\unmask}, b^{\phi,i}_{\unmask}) \gets \phi(z_1)$ \quad // Kumaraswamy params per position
        \FOR{$k=1$ to $2$} %
            \FOR{$i=1$ to $L$}
                \STATE $\Tins^{i,(k)} \sim F^{\phi,i}_{\ins}$ \quad // Insertion time 
                \STATE $\Tumask^{i,(k)} \gets \textsc{TruncSample}(\Tins^{i,(k)}, F^{\phi,i}_{\unmask})$ \quad // Unmasking after insertion
                \STATE $z_t^{i,(k)} \gets \begin{cases} \drop & \Tins^{i,(k)} > t \\ \mask & \Tins^{i,(k)} \leq t < \Tumask^{i,(k)} \\ z_1^i & \Tumask^{i,(k)} \leq t \end{cases}$
            \ENDFOR
            \STATE $x_t^{(k)}, \s_t^{(k)} \gets \contract(z_t^{(k)})$ \quad // Use the bijection to get positions
            \STATE $\mathcal{L}_k \gets \mathcal{L}^{\phi,\theta}_{\unmask} + \mathcal{L}^{\phi,\theta}_{\ins}$ \quad // Compute loss (\cref{eq:loss})
        \ENDFOR
        \STATE // Combine REINFORCE (with stop-grad) and pathwise gradients:
        \STATE $\mathcal{L} \gets \frac{1}{2}\left[(\text{sg}(\mathcal{L}_1) - \text{sg}(\mathcal{L}_2))\log \frac{p^{\phi}_{t|z_1}(z_t^{(1)})}{p^{\phi}_{t|z_1}(z_t^{(2)})} + \mathcal{L}_1 + \mathcal{L}_2\right]$
        \STATE $(\theta, \phi) \gets (\theta, \phi) - \eta \nabla_{(\theta,\phi)} \mathcal{L}$ \quad // Single backward pass
    \ENDWHILE
    \end{algorithmic}
\end{algorithm}

\subsection{Sampling}
The generator network predicts learnable hazard rates $\lambda_{\ins,t}^{\theta,i}(x_t)$ for insertion at gap position $i$ and $\lambda_{\unmask,t}^{\theta,i}(x_t)$ for unmasking at masked position $i$, along with the token distribution $K^{\theta,i}(\cdot | x_t)$.

We use a simple $\tau$-leaping sampler that allows for larger time steps by sampling the number of events (insertions/unmaskings) in each interval $[t_n, t_{n+1})$ from a Poisson distribution. For small $\tau = t_{n+1} - t_n$, the first-order approximation gives Poisson intensities $\int_{t_n}^{t_{n+1}} \lambda_s \dd s \approx \lambda_{t_n} \cdot \tau$.

\begin{algorithm}[H]
\caption{Tau-Leaping Sampling for \OursShort{}}
\begin{algorithmic}[1]
\STATE \textbf{Input:} Time steps $0 = t_0 < t_1 < \cdots < t_N = 1$, nucleus $p$, confidence function $c$
\STATE Initialize $x_0 = \emptyset$ (empty sequence)
\FOR{$n = 0$ to $N-1$}
    \STATE $\tau \gets t_{n+1} - t_n$
    \STATE Predict $\lambda_{\ins,t_n}^{\theta,i}(x_n)$, $\lambda_{\unmask,t_n}^{\theta,i}(x_n)$, and $K^{\theta,i}(\cdot | x_n)$ from network $\theta$
    \STATE $x_{n+1} \gets x_n$
    \FOR{each gap position $i = 0$ to $\len(x_n)$}
        \STATE $N_{\ins}^i \sim \text{Poisson}(\lambda_{\ins,t_n}^{\theta,i}(x_n) \cdot \tau)$
        \IF{$N_{\ins}^i > 0$}
            \STATE Insert $N_{\ins}^i$ masks at gap position $i$ in $x_{n+1}$
        \ENDIF
    \ENDFOR
    \FOR{each masked position $i$ in $x_{n+1}$}
        \FOR{each token $y \in \vocab$}
            \STATE $N_{\unmask}^{i,y} \sim \text{Poisson}(\lambda_{\unmask,t_n}^{\theta,i}(x_n) \cdot K^{\theta,i}(y | x_n) \cdot \tau)$
        \ENDFOR
        \STATE //candidate unmask locations
        \STATE $u_i \gets \indicone{\sum_{y \in \vocab} N_{\unmask}^{i,y} > 0}$
    \ENDFOR
    \IF{\texttt{confidence} enabled}
    \STATE // {select $|\{i:u_i{=}1\}|$ locations with largest confidence $c_i$}
        \STATE $u \gets \textsc{SELECT}(\sum_i u_i;\, c(K^\theta, \lambda^\theta))$ 
    \ENDIF
    \FOR{each masked position $i$ in $x_{n+1}$ with $u_i=1$}
        \STATE $x_{n+1}^i \overset{\text{nucleus}(p)}{\sim} K^{\theta,i}(\cdot | x_n)$ 
    \ENDFOR
\ENDFOR
\RETURN $x_N$
\end{algorithmic}
\end{algorithm}

\subsection{Truncation \textit{vs.} Rescaling and other choices for target schedule parameterization}\label{sec:design-decisions-truncation-vs-rescaling}
We need to sample two event times $\Tins$ and $\Tumask$ with a constrant $\Tins < \Tumask$. For constructing a learnable schedule for this we need to parameterize
\begin{align*}
    \sP(\Tins \leq t), \qquad \text{and} \qquad
    \sP(\Tumask \leq t \mid \Tins = s).
\end{align*}
It is straightforward to parameterize $\sP(\Tins \leq t)$ using any monotonic function $F_{\ins}(t)$ that satisfies $F_{\ins}(0) = 0$ and $F_{\ins}(1) = 1$. However, for $\sP(\Tumask \leq t \mid \Tins = s)$, we have two easy choices. We can take a base monotonic function $F_{\unmask}(t)$ and either \textbf{truncate} or \textbf{rescale} it to $[s,1]$, where,
\begin{align*}
    &\text{truncate:} \qquad \sP(\Tumask \leq t \mid \Tins = s) = \frac{F_{\unmask}(t) - F_{\unmask}(s)}{1 - F_{\unmask}(s)}, \\
    &\text{rescale:} \qquad \sP(\Tumask \leq t \mid \Tins = s) = F_{\unmask}\left(\frac{t - s}{1 - s}\right).
\end{align*}
However, the unmasking hazard rate in \cref{eq:unmask-rate-general} simplifies to a closed form expression only for the truncated case.
Figure \ref{fig:kumaraswamy-truncated-vs-rescaled} shows a comparison of truncated and rescaled Kumaraswamy CDFs.
\begin{figure}[h]
    \centering
    \includegraphics[width=0.8\textwidth]{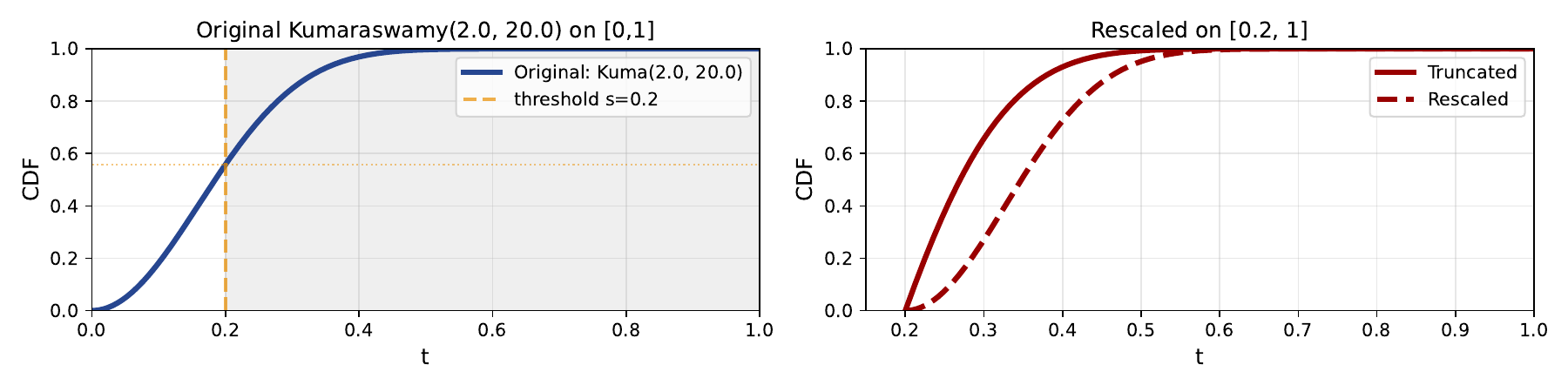}
    \caption{Comparison of truncated and rescaled Kumaraswamy CDFs for different parameter values.}
    \label{fig:kumaraswamy-truncated-vs-rescaled}
\end{figure}

\section{Details of the experiment setup}
We use the xLM library~\citep{patel-etal-2026-xlm} to implement the training and evaluation harness for all our experiments. \footnote{\url{https://github.com/dhruvdcoder/xlm-core}.}
\subsection{Graph Traversal}\label{sec:experiments:star:details}
\paragraph{Dataset.} We use the star graphs dataset from \citet{patel2025insertionlanguagemodelssequence} obtained from \href{https://huggingface.co/collections/dhruveshpatel/insertion-based-generation}{Hugging Face Hub}.
\paragraph{Model.} Following the setup in \citet{kimAnyOrderFlexibleLength2025}, we use a linear layer for modeling the insertion lengths in each gap for FlexMDM.
For LMFlexMDM, we use two MLP layers for modeling the parameters of the learned target rates.

\TODO{Can't locate the training runs. So I will re-run the training and update the runs here.}

\paragraph{Hyperparameters.} \Cref{tab:star_hyperparameters} shows the hyperparameters for the star graphs experiments.
\begin{table}[H]
    \centering
    \begin{tabular}{l|ccccc}
        \toprule
        \textbf{Model} & \textbf{lr} & \textbf{wt decay} & \textbf{train\_steps} & \textbf{sampling\_steps} & \textbf{confidence} \\
        \midrule
        FlexMDM & $10^{-4}$  & $10^{-2}$ & 80,000 & 500 & \texttimes \\
        \OursShort{} & $10^{-4}$  & $10^{-2}$ & 80,000 & 500 & \texttimes \\
        \bottomrule
    \end{tabular}
    \caption{Hyperparameters for star graphs experiments.}
    \label{tab:star_hyperparameters}
\end{table}

\subsection{Molecule Generation}\label{app:experiments:molgen}

\subsubsection{Dataset}
We use the bracketed version of the SAFE molecule strings provided by \citet{noutahi2024gotta}.\footnote{\href{https://huggingface.co/datasets/datamol-io/safe-gpt}{https://huggingface.co/datasets/datamol-io/safe-gpt}}
The dataset contains around 1.17 billion unique molecules consisting of:
\begin{itemize}
    \item ZINC20 \citep{irwinZINCFreeDatabase2005}: $\sim$ 1 billion commerically available compounds
    \item UniChem \citep{chambersUniChemUnifiedChemical2013}: $\sim$ 188 million compounds from public databases
\end{itemize}

\subsubsection{Training}
\paragraph{Hyperparameters.} \Cref{tab:star_hyperparameters} shows the hyperparameters for the molecule generation experiments.
\begin{table}[H]
    \centering
    \begin{tabular}{l|ccccc}
        \toprule
        \textbf{Model} & \textbf{lr} & \textbf{wt decay} & \textbf{train\_steps} & \textbf{sampling\_steps} & \textbf{confidence} \\
        \midrule
        FlexMDM & $10^{-4}$  & $10^{-2}$ & 50,000 & 1024 & \texttimes \\
        LoFlexMDM & $10^{-4}$  & $10^{-2}$ & 50,000 & 1024 & \checkmark \\
        \bottomrule
    \end{tabular}
    \caption{Hyperparameters for star graphs experiments.}
    \label{tab:star_hyperparameters}
\end{table}

\subsubsection{Evaluation Metrics}\label{app:experiments:molgen:evaluation-metrics}
We use the same metrics as defined in \citet{noutahi2024gotta} and \citet{lee2025genmol}.
Each generated sample is a bracket SAFE string, which is then converted to standard SAFE string as done in \url{https://github.com/NVIDIA-Digital-Bio/genmol}.
Next the standard SAFE string is decoded to canonical SMILES, optionally dropping fragments that fail to decode. Salts and counter-ions are also dropped.
A sample counts as \emph{invalid} if this pipeline yields no SMILES.

\paragraph{Validity.}
{Validity} is the fraction of generated strings that decode to a valid molecule, i.e.\ the number of successful decodes divided by the total number of generations.

\paragraph{Uniqueness.}
Among valid decodes, {uniqueness} is the fraction of distinct canonical SMILES: the number of unique SMILES divided by the number of valid molecules (with multiplicity).

\paragraph{Diversity.}
{Diversity} is the average pairwise Tanimoto distance between Morgan fingerprints of the generated molecules, as in \citet{noutahi2024gotta, lee2025genmol}.
In code, we first deduplicate valid SMILES, require at least two unique structures, and then evaluate diversity with PyTDC's \texttt{Evaluator} for the \texttt{diversity} task, which implements this pairwise fingerprint-distance objective.

\paragraph{Quality.}
{Quality} follows \citet{lee2025genmol}: we count molecules that are drug-like and synthesizable under the thresholds from \citet{jin2020multi}---quantitative estimate of drug-likeness (QED)~\citep{bickerton2012quantifying} $\ge 0.6$ and synthetic accessibility (SA)~\citep{ertl2009estimation} $\le 4$.
For each decoded SMILES, QED and SA are computed with PyTDC \texttt{Oracle} instances for the \texttt{qed} and \texttt{sa} tasks.
\citet{lee2025genmol} define quality as the fraction of generated molecules that are simultaneously valid, unique, drug-like, and synthesizable.
In our implementation, \textbf{quality} is the number of valid, unique generations satisfying both property thresholds, divided by the total number of generations.

\paragraph{FCD.}
The Fr\'{e}chet ChemNet Distance (FCD)~\citep{preuerFrechetChemNetDistance2018} measures distributional similarity between generated and reference molecules using penultimate-layer activations of ChemNet, which operates on canonical SMILES.
However, the mapping from a molecule to its SAFE (or bracket SAFE) representation is a randomized one-to-many mapping: a single molecule is sliced into fragments via a stochastic fragmentation algorithm such as BRICS~\citep{degen2008art}, and the resulting fragments can be ordered arbitrarily.
The model therefore learns a distribution over the space of SAFE strings, not directly over canonical SMILES.
Computing FCD requires mapping generated SAFE strings back to canonical SMILES, but this collapses the fragment-level structure that the model has learned to generate---the generalization patterns acquired in SAFE's fragment space need not align with proximity in ChemNet's SMILES-based feature space.
As a result, naively converting generated SAFE strings to SMILES and computing FCD is not a faithful measure of distributional match for SAFE-based generative models.
This is the main reason that previous works such as SAFE-GPT \citep{noutahi2024gotta} and GenMol \citep{lee2025genmol}, which also operate on SAFE strings, do not compute FCD.
We therefore omit FCD from our evaluation and focus on the metrics that are comparable across all methods that operate on SAFE strings.

\subsubsection{Evaluation Tasks}\label{app:experiments:molgen:evaluation-tasks}

\paragraph{\textit{De novo} Generation}\label{app:experiments:molgen:denovo}

For \denovo{} generation, the model samples complete molecules unconditionally.
FlexMDM and \OursShort{} start from an empty bracket SAFE sequence and grow it through alternating insertions and unmaskings, without conditioning on a fragment prefix.
Following \citet{lee2025genmol}, we generate 1000 molecules per run and report means and standard deviations over 5 independent eval runs (\cref{tab:molgen_denovo_results}) with different random seeds.
Metrics are computed as described in \cref{app:experiments:molgen:evaluation-metrics}.
We vary the sampling hyperparameters along two axes: budget and diversity.
The budget is controlled by the maximum number of sampling steps, and the diversity is controlled by the nucleus sampling truncation threshold $p$, and confidence-based position selection~\citep{Holtzman2020The}.
The \cref{tab:molgen_denovo_results} reports the results under maximum budget of 1024 steps and varying the diversity parameters.
FlexMDM selects unmasking positions uniformly among eligible sites.
\OursShort{} additionally supports confidence-based position selection during decoding and nucleus sampling~\citep{Holtzman2020The} with truncation threshold $p$ on the token distribution.
\Cref{tab:molgen_denovo_results} reports both models under several decoding choices ($p \in \{0.5, 1.0\}$, with and without confidence-based position selection).
The order-analysis in \cref{sec:experiments:molecules-generation-order} uses \OursShort{} with trainable $b_\ins^\phi$, confidence-based position selection, and $p{=}0.5$.

SAFE-GPT ($\dagger$) and MDM ($\ddagger$) implemented following \citet{noutahi2024gotta} and \citet{lee2025genmol}, respectively.
For the fixed-length MDM baseline, \citet{lee2025genmol} sample mask-chunk lengths from the ZINC250k length distribution.
Variable-length models do not use a fixed mask canvas and therefore do not need to sample masked chunk lengths from a predefined distribution.

\paragraph{Constrained Generation}\label{app:experiments:molgen:constrained}

We use the fragment-constrained benchmark of \citet{noutahi2024gotta}, with evaluation settings aligned to \citet{lee2025genmol}.
Conditional inputs come from ten reference drugs (Cyclothiazide, Maribavir, Spirapril, Baricitinib, Eliglustat, Erlotinib, Futibatinib, Lesinurad, Liothyronine, and Lovastatin) via the fragment decompositions in the SAFE-GPT data release.
We report four completion tasks and omit scaffold morphing, which \citet{lee2025genmol} treat as equivalent to linker design.

\emph{Linker design} asks the model to generate a fragment that connects two given side chains.
\emph{Motif extension} fixes a starting motif with attachment points and asks the model to generate the remaining fragment that completes the molecule.
\emph{Scaffold decoration} conditions on a scaffold with attachment sites and asks the model to fill in substituents.
Following \citet{noutahi2024gotta}, scaffolds are built by placing random attachment points on a core substructure before generation.
\emph{Superstructure generation} provides a substructure constraint and asks the model to sample a full molecule that contains it.

Each task is cast as prefix completion on bracket SAFE strings: the provided fragments are tokenized and held fixed, and the model generates the rest of the sequence.
For both FlexMDM and \OursShort{}, we use the \denovo{} checkpoints above without task-specific finetuning, with the same sampling budgets as in \cref{sec:experiments:small-molecule-generation}, i.e., 1024 steps and with respective best sampling parameters (confidence-based position selection for \OursShort{}, and uniform sampling for FlexMDM).
Following \citet{lee2025genmol}, we draw one completion per conditional input ($N{=}1$) and report means and standard deviations over three independent runs.
ARM ($\dagger$) and MDM ($\ddagger$) rows in \cref{tab:frag:nodist} are taken from \citet{noutahi2024gotta} and \citet{lee2025genmol}.
For the fixed-length MDM baseline, \citet{lee2025genmol} sample mask-chunk lengths from the ZINC250k distribution and tune task-specific decoding hyperparameters ($\tau{=}1.2$; $r{=}3$ for linker design, $r{=}1.2$ for motif extension, and $r{=}2$ for scaffold decoration and superstructure generation).
Variable-length models instead grow the sequence from the fragment prefix rather than decoding within a fixed mask canvas.

\begin{table*}[t!]
    \centering
    \renewcommand{\arraystretch}{0.7}
    \caption{\small \textbf{Fragment-constrained molecule generation results (full).} Means and standard deviations over 3 runs. The best results are highlighted in bold; the second-best mean in each column is underlined. {\small ($\dagger$ \citet{noutahi2024gotta} $\ddagger$; \citet{lee2025genmol})}}
    \label{app:tab:frag:full}
    \vspace{-2mm}
    {\setlength{\tabcolsep}{4pt}%
    \begin{tabular}{llccc>{\columncolor{softgray}}c}
    \toprule
    \textbf{Model} & \textbf{Task} & \textbf{Validity (\%)} & \textbf{Diversity} & \textbf{Uniqueness (\%)} & \textbf{Quality (\%)} \\
    \midrule
    $\text{ARM}^{\dagger}$ & Linker design & 76.6 $\scriptstyle{\pm 5.1}$ & 0.545 $\scriptstyle{\pm 0.007}$ & {82.5 $\scriptstyle{\pm 1.9}$} & 21.7 $\scriptstyle{\pm 1.1}$ \\
    & Motif extension & 96.1 $\scriptstyle{\pm 1.9}$ & 0.562 $\scriptstyle{\pm 0.003}$ & 66.8 $\scriptstyle{\pm 1.2}$ & 18.6 $\scriptstyle{\pm 2.1}$ \\
    & Scaffold decoration & 97.7 $\scriptstyle{\pm 0.3}$ & 0.575 $\scriptstyle{\pm 0.008}$ & 74.7 $\scriptstyle{\pm 2.5}$ & 10.0 $\scriptstyle{\pm 1.4}$ \\
    & Superstructure generation & 95.7 $\scriptstyle{\pm 2.0}$ & 0.573 $\scriptstyle{\pm 0.028}$ & {83.0 $\scriptstyle{\pm 5.9}$} & 14.3 $\scriptstyle{\pm 3.7}$ \\
    \midrule
    $\text{MDM}^{\ddagger}$ & Linker design & \textbf{100.0} $\scriptstyle{\pm 0.0}$ & 0.547 $\scriptstyle{\pm 0.002}$ & \textbf{83.7} $\scriptstyle{\pm 0.5}$ & 21.9 $\scriptstyle{\pm 0.4}$ \\
    & Motif extension & 82.9 $\scriptstyle{\pm 0.1}$ & {0.617 $\scriptstyle{\pm 0.002}$} & 77.5 $\scriptstyle{\pm 0.1}$ & 30.1 $\scriptstyle{\pm 0.4}$ \\
    & Scaffold decoration & 96.6 $\scriptstyle{\pm 0.8}$ & 0.591 $\scriptstyle{\pm 0.001}$ & {82.7 $\scriptstyle{\pm 1.8}$} & 31.8 $\scriptstyle{\pm 0.5}$ \\
    & Superstructure generation & 97.5 $\scriptstyle{\pm 0.9}$ & {0.599 $\scriptstyle{\pm 0.009}$} & \textbf{83.6} $\scriptstyle{\pm 1.0}$ & 34.8 $\scriptstyle{\pm 1.0}$ \\
    \midrule
    FlexMDM & Linker design & {99.7 $\scriptstyle{\pm 0.1}$} & \textbf{0.599} $\scriptstyle{\pm 0.005}$ & 63.7 $\scriptstyle{\pm 0.3}$ & {50.8 $\scriptstyle{\pm 0.6}$} \\
    & Motif extension & {99.7 $\scriptstyle{\pm 0.1}$} & \textbf{0.623} $\scriptstyle{\pm 0.001}$ & \textbf{79.9} $\scriptstyle{\pm 0.4}$ & {46.9 $\scriptstyle{\pm 0.9}$} \\
    & Scaffold decoration & {99.6 $\scriptstyle{\pm 0.1}$} & \textbf{0.615} $\scriptstyle{\pm 0.002}$ & \textbf{84.3} $\scriptstyle{\pm 0.5}$ & {39.0 $\scriptstyle{\pm 0.7}$} \\
    & Superstructure generation & {99.7 $\scriptstyle{\pm 0.03}$} & \textbf{0.616} $\scriptstyle{\pm 0.002}$ & 74.5 $\scriptstyle{\pm 1.5}$ & {35.8 $\scriptstyle{\pm 1.4}$} \\
    \midrule
    \OursShort{} & Linker design & 99.6 $\scriptstyle{\pm 0.1}$ & {0.576 $\scriptstyle{\pm 0.003}$} & 64.4 $\scriptstyle{\pm 0.6}$ & \textbf{51.7} $\scriptstyle{\pm 0.9}$ \\
    & Motif extension & \textbf{99.9} $\scriptstyle{\pm 0.1}$ & 0.608 $\scriptstyle{\pm 0.001}$ & {79.2 $\scriptstyle{\pm 0.8}$} & \textbf{53.6} $\scriptstyle{\pm 0.7}$ \\
    & Scaffold decoration & \textbf{99.8} $\scriptstyle{\pm 0.1}$ & {0.601 $\scriptstyle{\pm 0.000}$} & 82.6 $\scriptstyle{\pm 0.6}$ & \textbf{40.5} $\scriptstyle{\pm 0.7}$ \\
    & Superstructure generation & \textbf{100.0} $\scriptstyle{\pm 0.03}$ & 0.593 $\scriptstyle{\pm 0.002}$ & 72.6 $\scriptstyle{\pm 0.4}$ & \textbf{37.0} $\scriptstyle{\pm 0.6}$ \\
    \bottomrule
    \end{tabular}}
    \vspace{-0.1in}
\end{table*}

\section{Additional Results and Analyses}\label{sec:additional-results}

\subsection{Star Graph Traversal}\label{sec:additional-results:star-graph-traversal}

\cref{tab:star_results_detailed} shows the token accuracy along with exact match for the star graph traversal task.

\subsubsection{Flexibility \textit{vs} Stability}\label{sec:additional-results:flexibility-vs-stability}

\begin{wrapfigure}{r}{0.42\linewidth}
  \centering
  \includegraphics[width=\linewidth]{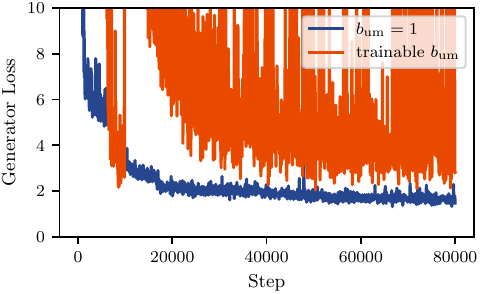}
  \caption{Loss comparison for the hard star graph traversal task with and without fixing target unmasking rate ($\bun=1$). The erratic dynamics of a fully unconstrained set of target rates motivates freezing $b_\unmask$.}
  \label{fig:fix_b_um_loss_comparison}
\end{wrapfigure}
When both insertion and unmasking target rates are trainable, optimization can become unstable.
We do not attribute this failure mode to the precedence constraint $\Tins < \Tumask$ itself, nor to the Kumaraswamy parameterization in particular.
The main issue is the freedom to concentrate transition probability mass into very narrow time intervals.
That concentration can sharpen the implied generation order, but it also makes training brittle.
If the density becomes highly peaked around an incorrect time $t \in [0,1]$, the resulting gradients can become extremely large.

We expect this behavior for any sufficiently expressive schedule family.
Truncation together with fixing $\bun=1$ provides enough regularization to prevent excessive concentration while still preserving a rich family of orderings.
\Cref{fig:fix_b_um_loss_comparison} compares training loss on the hard star graph traversal task with trainable and frozen target unmasking rates.
Fixing $\bun=1$ stabilizes training significantly.
As shown in \cref{tab:star_results_detailed}, it also improves the overall performance of the final generator.

\subsubsection{Generation Order Correlation}\label{sec:additional-results:star-graph-correlation}
\Cref{fig:order_vs_distance_correlation} in the main text analyzes generation order using only trajectories that achieve 100\% exact match.
We reproduce that figure at two-column width in \cref{fig:order_vs_distance_correlation_large} for clarity and better visibility.
\Cref{fig:order_vs_distance_correlation_unfiltered} repeats the analysis without the exact-match filter.
The points spread more over generation order, but the pattern in correlation strength is unchanged: FlexMDM $(0.06, -0.21)$ is much weaker than \OursShort{} $(-0.07, -0.42)$.
To compare the exact-match filter and the unfiltered analysis directly, \Cref{fig:order_vs_distance_correlation_facet} plots filtered and unfiltered results in separate panels for each sampling setting.
The facet layout presents the same comparison setting by setting, which makes it easier to see how much the unfiltered clouds broaden and shift right relative to the filtered case, without changing the relative ordering between models.
This analysis establishes strong correlation rather than causation, but the rightward shift for incorrect predictions is consistent with less favorable order accompanying degraded prediction quality.

\begin{figure}[h]
    \centering
    \includegraphics[width=\linewidth]{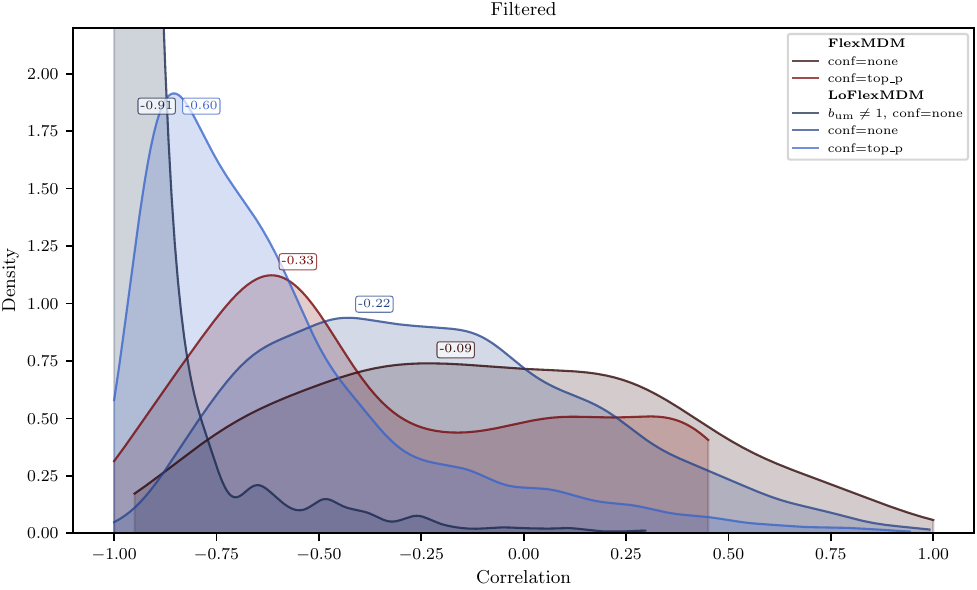}
    \caption{Correlation between normalized generation order and distance from the junction node (\cref{fig:order_vs_distance_correlation}), shown at two-column width for clarity. Only trajectories with 100\% exact match are included.}
    \label{fig:order_vs_distance_correlation_large}
\end{figure}

\begin{figure}[h]
    \centering
    \includegraphics[width=\linewidth]{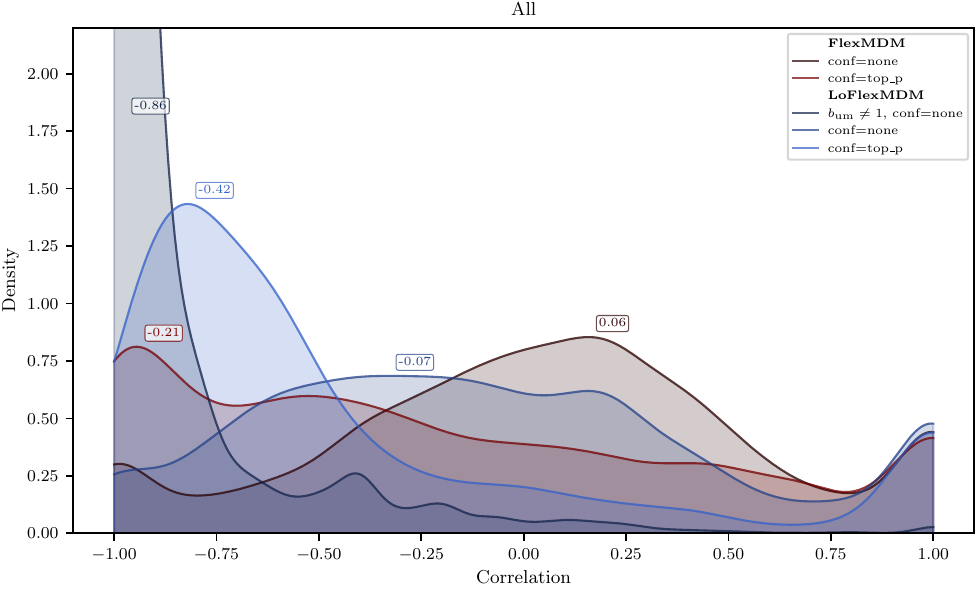}
    \caption{Correlation between normalized generation order and distance from the junction node, without filtering to 100\% exact-match trajectories.}
    \label{fig:order_vs_distance_correlation_unfiltered}
\end{figure}

\begin{figure}[h]
    \centering
    \includegraphics[width=\linewidth]{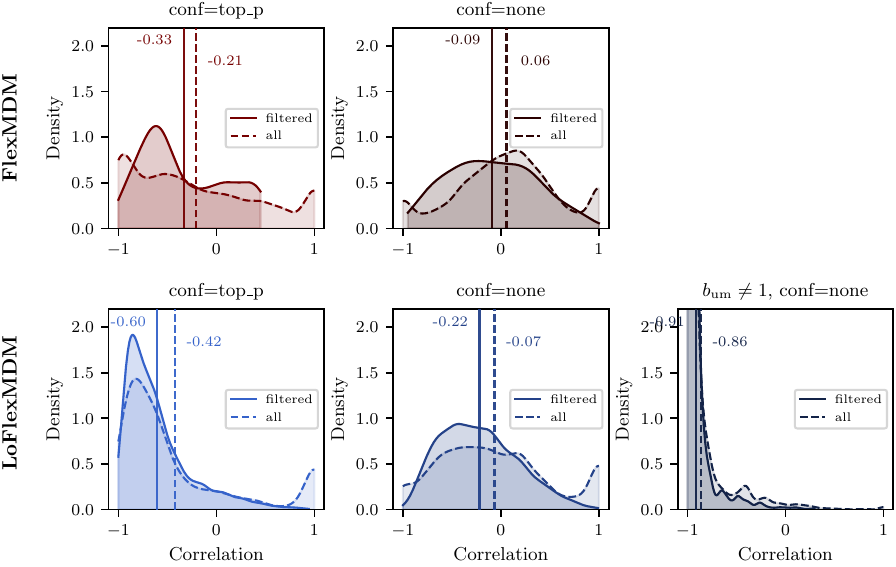}
    \caption{Filtered (100\% exact match) vs.\ unfiltered generation-order correlation for each sampling setting.}
    \label{fig:order_vs_distance_correlation_facet}
\end{figure}

\subsubsection{Examples: Generation Order}
\Cref{fig:lflexmdm_generation_order,fig:flexmdm_generation_order_correlation} show representative generation trajectories for \OursShort{} and FlexMDM on the hard star graph task.
Each figure pairs the traversal on the query graph with the step-by-step unmasking sequence.

\begin{figure}[h]
  \centering
  \begin{subfigure}[b]{0.49\linewidth}
    \begin{minipage}[b]{0.48\linewidth}
      \centering
      \includegraphics[width=\linewidth]{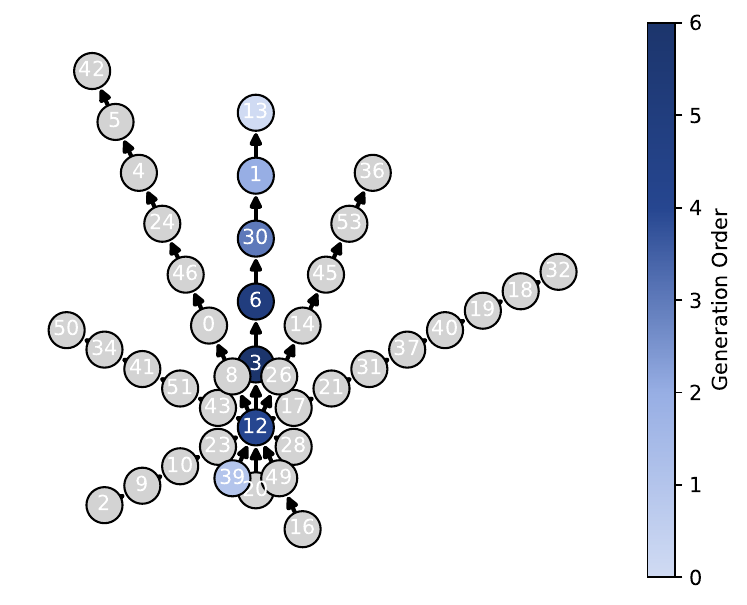}
    \end{minipage}\hfill
    \begin{minipage}[b]{0.48\linewidth}
      \tiny
      \begin{verbatim}
[M]
[M] [M]
[M] [M] [M]
[M] [M] [M] [M]
[M] [M] [M] 13
39 [M] [M] 13
39 [M] [M] [M] 13
39 [M] [M] 1 13
39 [M] 1 1 13
39 [M] [M] 1 1 13
39 [M] [M] [M] 1 1 13
39 [M] [M] 30 1 1 13
39 [M] 30 30 1 1 13
39 [M] [M] 30 30 1 1 13
39 [M] 6 30 30 1 1 13
39 [M] [M] 6 30 30 1 1 13
39 [M] [M] [M] 6 30 30 1 1 13
39 12 [M] [M] 6 30 30 1 1 13
39 12 [M] 6 6 30 30 1 1 13
39 12 3 6 6 30 30 1 1 13
39 [M] 12 3 6 6 30 30 1 1 13
39 12 12 3 6 6 30 30 1 1 13
39 12 12 [M] 3 6 6 30 30 1 1 13
39 12 12 3 3 6 6 30 30 1 1 13
      \end{verbatim}
    \end{minipage}
    \caption{Example of generation order for \OursShort{}.}
    \label{fig:lflexmdm_generation_order}
  \end{subfigure}\hfill
  \begin{subfigure}[b]{0.49\linewidth}
    \begin{minipage}[b]{0.48\linewidth}
      \centering
      \includegraphics[width=\linewidth]{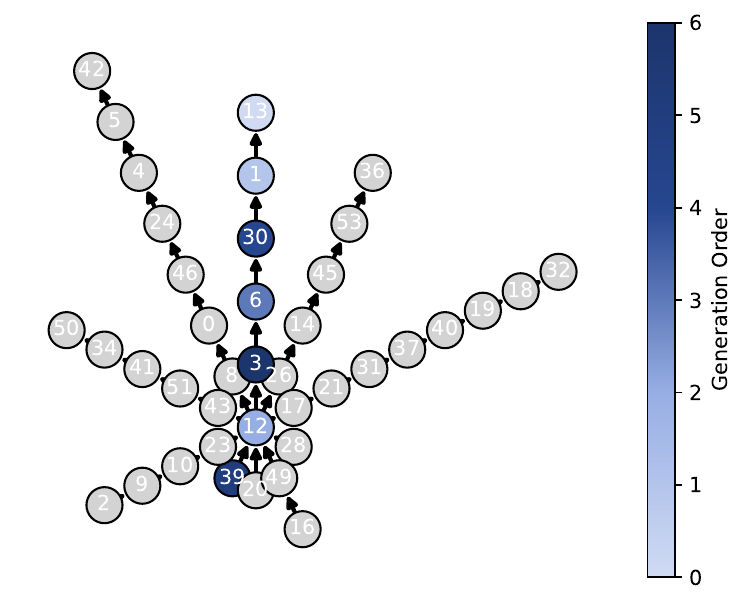}
    \end{minipage}\hfill
    \begin{minipage}[b]{0.48\linewidth}
      \tiny
      \begin{verbatim}
[M]
[M] [M]
[M] [M] [M]
[M] [M] [M] [M]
[M] [M] [M] [M] [M]
[M] [M] [M] [M] [M] [M]
[M] [M] [M] [M] [M] 13
[M] [M] [M] [M] [M] [M] 13
[M] [M] [M] [M] [M] [M] [M] 13
[M] [M] [M] [M] [M] [M] 1 13
[M] [M] [M] [M] [M] 30 1 13
[M] [M] [M] [M] [M] [M] 30 1 13
[M] 12 [M] [M] [M] [M] 30 1 13
[M] 12 [M] [M] 6 [M] 30 1 13
[M] 12 [M] [M] 6 [M] 30 [M] 1 13
[M] 12 [M] [M] 6 [M] 30 [M] 1 [M] 13
[M] [M] 12 [M] [M] 6 [M] 30 [M] 1 [M] 13
39 [M] 12 [M] [M] 6 [M] 30 [M] 1 [M] 13
39 12 12 3 3 6 6 30 30 1 1 13
      \end{verbatim}
    \end{minipage}
    \caption{Example of generation order for FlexMDM.}
    \label{fig:flexmdm_generation_order_correlation}
  \end{subfigure}
\end{figure}

\begin{table*}
    \centering
\begin{tabular}{llccc}
\toprule
Task & Model & Exact Match (\%) & Token Accuracy (\%) \\
\midrule
\multirow{2}{*}{\textbf{star\_medium}} & {ARM} & 75.0 & 81.4 \\
& {MDM} & 36.5 & 90.6 \\
\cmidrule{2-4}
& {FlexMDM} & & \\
& \quad {\small (conf=null)} & 89.6 & 95.7 \\
& \quad {\small (conf=top\_prob)} & 91.3 & 97.0 \\
\cmidrule{2-4}
& {\OursShort{}} (Aux. Size=small) & & \\
& \quad {\small (b\_um=1, conf=null)} & 93.2 & 97.8 \\
& \quad {\small (b\_um=1, conf=top\_prob)} & 93.0 & 97.7 \\
\cmidrule{2-4}
& {\OursShort{}} (Shared=True) & & \\
& \quad {\small (b\_um=1, conf=null)} & 93.6 & 97.7 \\
& \quad {\small (b\_um=1, conf=top\_prob)} & 93.2 & 97.6 \\
\midrule
\multirow{2}{*}{\textbf{star\_hard}} & {ARM} & 23.0 & 43.2 \\
& {MDM} & 21.0 & 54.9 \\
\cmidrule{2-4}
& {FlexMDM} & & \\
& \quad {\small (conf=null)} & 6.0 & 48.5 \\
& \quad {\small (conf=top\_prob)} & 7.4 & 49.8 \\
\cmidrule{2-4}
& {\OursShort{}} (Aux. Size=small) & & \\
& \quad {\small (conf=null)} & 38.0 & 64.5 \\
& \quad {\small (b\_um=1, conf=null)} & 87.9 & 96.3 \\
& \quad {\small (b\_um=1, conf=top\_prob)} & 88.1 & 96.7 \\
\cmidrule{2-4}
& {\OursShort{}} (Shared=True) & & \\
& \quad {\small (b\_um=1, conf=null)} & 34.2 & 66.5 \\
& \quad {\small (b\_um=1, conf=top\_prob)} & 69.7 & 86.9 \\
\bottomrule
\end{tabular}
\caption{Raw results for star graph traversal tasks.}
\label{tab:star_results_detailed}
\end{table*}

\subsection{Small Molecule Generation}\label{app:additional-results:small-molecule-generation}
\begin{wraptable}{r}{0.40\linewidth}
  \centering
  \small
  \vspace{-8mm}
  \caption{\small Summary of mean normalised generation time per token category ($\pm$ std).
  $\Delta\mu$ is the difference (FlexMDM $-$ \OursShort{}), where positive values indicate FlexMDM generates that category later.}
  \label{tab:safe-gen-order-timing}
  \vspace{-2mm}
  \begin{tabular}{@{}lccc@{}}
    \toprule
    Category & FlexMDM & \OursShort{} & $\Delta\mu$ \\
    \midrule
    Ring closure       & $.71 \pm .26$ & $.48 \pm .28$ & $+.23$ \\
    Frag sep (\texttt{.})        & $.67 \pm .27$ & $.55 \pm .18$ & $+.12$ \\
    Aromatic atom      & $.65 \pm .26$ & $.55 \pm .24$ & $+.10$ \\
    Attach bracket     & $.63 \pm .27$ & $.61 \pm .19$ & $+.02$ \\
    Attach label       & $.68 \pm .27$ & $.80 \pm .24$ & $-.12$ \\
    Aliphatic atom     & $.68 \pm .27$ & $.80 \pm .16$ & $-.13$ \\
    Bond / branch      & $.70 \pm .26$ & $.84 \pm .15$ & $-.14$ \\
    \bottomrule
  \end{tabular}
  \vspace{-12mm}
\end{wraptable}

\Cref{tab:full_molgen_denovo_results} shows the results for \denovo{} small molecule generation with additional nucleus-sampling $p$ values, as well as a shared-backbone implementation.

\subsubsection{Generation Order Examples}\label{sec:additional-results:molecules-generation-order-examples}
Here we continue the discussion of generation order from \cref{sec:experiments:molecules-generation-order}.
In addition to the \cref{fig:safe-gen-order-timing}, we provide the complete table of per-category mean generation times in \cref{tab:safe-gen-order-timing}.
All category-level differences are significant ($p < 10^{-13}$).
\OursShort{} also separates the existence of an attachment point ($\mu{\approx}0.61$) from its identity ($\mu{\approx}0.80$), committing to where fragments connect before deciding which fragments connect.
\Cref{fig:safe-gen-trajectories-pred10} shows another \denovo{} generation trajectory for FlexMDM and \OursShort{}.
\begin{figure*}
  \centering

  \smallskip\noindent
  \clswatch{colM}{mask}~
  \clswatch{colF}{frag sep}~
  \clswatch{colAB}{attach bracket}~
  \clswatch{colAL}{attach label}~
  \clswatch{colRC}{ring closure}~
  \clswatch{colAr}{aromatic atom}~
  \clswatch{colAa}{aliphatic atom}~
  \clswatch{colBB}{bond/branch}~
  \clswatch{colSp}{special}
  \medskip

  \begin{subfigure}[t]{0.48\textwidth}
    \centering
    {\small\textbf{FlexMDM}}\par\vspace{2pt}
    {\tiny\ttfamily\raggedright\input{drafts/icml2026/snippets/traj_flexmdm_pred10.tex}\par}
    \vspace{2pt}
    \includegraphics[width=0.72\linewidth,trim=8 55 8 95,clip]{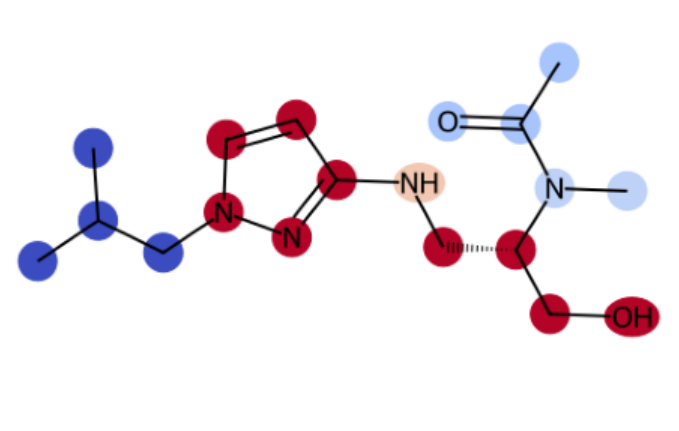}
  \end{subfigure}\hfill
  \begin{subfigure}[t]{0.48\textwidth}
    \centering
    {\small\textbf{\OursShort{}}}\par\vspace{2pt}
    {\tiny\ttfamily\raggedright\input{drafts/icml2026/snippets/traj_lflexmdm_pred10.tex}\par}
    \vspace{2pt}
    \includegraphics[width=0.72\linewidth,trim=8 55 8 95,clip]{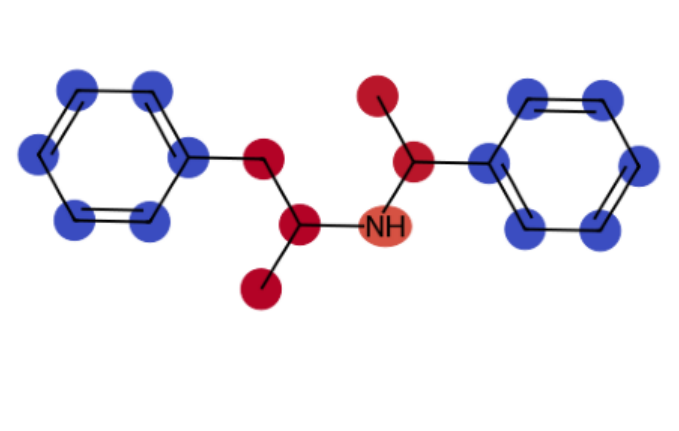}
  \end{subfigure}

  \caption{%
  \textbf{Additional \denovo{} generation trajectory.}
  FlexMDM (left) and \OursShort{} (right) on the same molecule.
  Every line is a subsampled generation step; grey {\color{colM}\texttt{\_}} marks a still-masked position.
  Tokens are coloured by semantic category (legend above).
  The molecule graph below each trajectory colours atoms by normalised generation time
  (\textcolor{blue}{blue} = early, \textcolor{red}{red} = late).
  \OursShort{} first resolves ring closures and fragment separators, then places attachment brackets and aromatic atoms, and fills in aliphatic atoms, attachment labels, and bond/branch tokens last.
  FlexMDM shows no such systematic pattern.}
  \label{fig:safe-gen-trajectories-pred10}
\end{figure*}

\begin{table*}
    \centering
    \renewcommand{\arraystretch}{0.9} 
    \caption{Full set of results for \denovo{} small molecule generation. For both FlexMDM and \OursShort{}, we use 1024 sampling steps.\\ {\small\textcolor{lflexmdm}{\faSnowflake} indicates frozen, \textcolor{flexmdm_orange}{\faFire} indicates trainable; \checkmark/\texttimes\ indicates whether confidence-based position selection for unmasking is enabled, and $p$ indicates the truncation threshold for nucleus sampling~\cite{Holtzman2020The}. $\dagger$ \citet{noutahi2024gotta} $\ddagger$ \citet{lee2025genmol}}}
    \label{tab:full_molgen_denovo_results}
    \begin{tabular}{lccccccc>{\columncolor{softgray}}c}
    \toprule
    \multicolumn{5}{c}{\textbf{Model}} & \textbf{Validity (\%)} & \textbf{Diversity} & \textbf{Uniqueness (\%)} & \textbf{Quality (\%)} \\
    \cmidrule(lr){1-5}
    & {\small$b_\unmask$} & {\small$b_{\text{in}}$} & {\small conf.} & {\small$p$} & & & & \\
    \midrule
    \multicolumn{4}{c}{{$\text{SAFE-GPT}^{\dagger}$}} & & 94.0 $\scriptstyle{\pm 0.4}$ & 0.879 $\scriptstyle{\pm 0.001}$ & \textbf{100.0} $\scriptstyle{\pm 0.0}$ & 54.7 $\scriptstyle{\pm 0.3}$ \\
    \midrule
    \multicolumn{4}{c}{{$\text{MDM}^{\ddagger}$}} & & 96.7 $\scriptstyle{\pm 0.3}$ & 0.896 $\scriptstyle{\pm 0.001}$ & \underline{99.3} $\scriptstyle{\pm 0.2}$ & 53.8 $\scriptstyle{\pm 1.7}$ \\
    \midrule
    \multicolumn{8}{l}{{FlexMDM}} &\\
    & {\small\textcolor{lflexmdm}{\faSnowflake}} & {\small\textcolor{lflexmdm}{\faSnowflake}} & {\small\checkmark} & 1.0 & 67.8 $\scriptstyle{\pm 0.3}$ & \textbf{0.940} $\scriptstyle{\pm 0.000}$ & 61.7 $\scriptstyle{\pm 0.7}$ & 5.5 $\scriptstyle{\pm 0.4}$ \\
    & {\small\textcolor{lflexmdm}{\faSnowflake}} & {\small\textcolor{lflexmdm}{\faSnowflake}} & {\small\checkmark} & 0.9 & 67.3 $\scriptstyle{\pm 0.9}$ & \underline{0.930} $\scriptstyle{\pm 0.000}$ & 56.9 $\scriptstyle{\pm 0.8}$ & 5.3 $\scriptstyle{\pm 0.2}$ \\
    & {\small\textcolor{lflexmdm}{\faSnowflake}} & {\small\textcolor{lflexmdm}{\faSnowflake}} & {\small\checkmark} & 0.5 & 71.4 $\scriptstyle{\pm 0.9}$ & 0.920 $\scriptstyle{\pm 0.000}$ & 41.4 $\scriptstyle{\pm 0.7}$ & 3.1 $\scriptstyle{\pm 0.2}$ \\
    & {\small\textcolor{lflexmdm}{\faSnowflake}} & {\small\textcolor{lflexmdm}{\faSnowflake}} & {\small\texttimes} & 1.0 & 98.4 $\scriptstyle{\pm 0.1}$ & 0.900 $\scriptstyle{\pm 0.000}$ & 99.6 $\scriptstyle{\pm 0.1}$ & 47.0 $\scriptstyle{\pm 0.7}$ \\
    & {\small\textcolor{lflexmdm}{\faSnowflake}} & {\small\textcolor{lflexmdm}{\faSnowflake}} & {\small\texttimes} & 0.9 & 98.9 $\scriptstyle{\pm 0.1}$ & 0.890 $\scriptstyle{\pm 0.000}$ & 99.6 $\scriptstyle{\pm 0.1}$ & 62.0 $\scriptstyle{\pm 0.7}$ \\
    & {\small\textcolor{lflexmdm}{\faSnowflake}} & {\small\textcolor{lflexmdm}{\faSnowflake}} & {\small\texttimes} & 0.5 & 99.5 $\scriptstyle{\pm 0.2}$ & 0.870 $\scriptstyle{\pm 0.000}$ & 99.7 $\scriptstyle{\pm 0.1}$ & 63.2 $\scriptstyle{\pm 0.5}$ \\
    \midrule
    \multicolumn{8}{l}{{\OursShort{}}\quad \quad \quad \quad  \quad \quad \quad {\small (Aux. Size=medium)}} &\\
    & {\small\textcolor{lflexmdm}{\faSnowflake}} & {\small\textcolor{flexmdm_orange}{\faFire}} & {\small\checkmark} & 1.0 & 99.0 $\scriptstyle{\pm 0.0}$ & 0.900 $\scriptstyle{\pm 0.000}$ & 99.3 $\scriptstyle{\pm 0.2}$ & 55.4 $\scriptstyle{\pm 0.3}$ \\
    & {\small\textcolor{lflexmdm}{\faSnowflake}} & {\small\textcolor{flexmdm_orange}{\faFire}} & {\small\checkmark} & 0.9 & 99.5 $\scriptstyle{\pm 0.1}$ & 0.880 $\scriptstyle{\pm 0.000}$ & 99.5 $\scriptstyle{\pm 0.0}$ & 71.4 $\scriptstyle{\pm 0.8}$ \\
    & {\small\textcolor{lflexmdm}{\faSnowflake}} & {\small\textcolor{flexmdm_orange}{\faFire}} & {\small\checkmark} & 0.5 & \textbf{99.9} $\scriptstyle{\pm 0.0}$ & 0.830 $\scriptstyle{\pm 0.000}$ & 93.2 $\scriptstyle{\pm 0.6}$ & 69.3 $\scriptstyle{\pm 0.7}$ \\
    & {\small\textcolor{lflexmdm}{\faSnowflake}} & {\small\textcolor{flexmdm_orange}{\faFire}} & {\small\texttimes} & 1.0 & 99.1 $\scriptstyle{\pm 0.1}$ & 0.900 $\scriptstyle{\pm 0.000}$ & 99.6 $\scriptstyle{\pm 0.1}$ & 55.1 $\scriptstyle{\pm 0.6}$ \\
    & {\small\textcolor{lflexmdm}{\faSnowflake}} & {\small\textcolor{flexmdm_orange}{\faFire}} & {\small\texttimes} & 0.9 & 99.5 $\scriptstyle{\pm 0.1}$ & 0.880 $\scriptstyle{\pm 0.000}$ & 99.6 $\scriptstyle{\pm 0.1}$ & 72.1 $\scriptstyle{\pm 0.5}$ \\
    & {\small\textcolor{lflexmdm}{\faSnowflake}} & {\small\textcolor{flexmdm_orange}{\faFire}} & {\small\texttimes} & 0.5 & 99.7 $\scriptstyle{\pm 0.1}$ & 0.850 $\scriptstyle{\pm 0.000}$ & 99.5 $\scriptstyle{\pm 0.1}$ & \textbf{79.5} $\scriptstyle{\pm 0.7}$ \\
    \midrule
    \multicolumn{8}{l}{{\OursShort{}}\quad \quad \quad \quad  \quad \quad \quad {\small (Aux. Size=small)}} &\\
    & {\small\textcolor{lflexmdm}{\faSnowflake}} & {\small\textcolor{flexmdm_orange}{\faFire}} & {\small\checkmark} & 1.0 & 99.1 $\scriptstyle{\pm 0.1}$ & 0.900 $\scriptstyle{\pm 0.000}$ & 99.4 $\scriptstyle{\pm 0.1}$ & 55.5 $\scriptstyle{\pm 0.7}$ \\
    & {\small\textcolor{lflexmdm}{\faSnowflake}} & {\small\textcolor{flexmdm_orange}{\faFire}} & {\small\checkmark} & 0.9 & 99.7 $\scriptstyle{\pm 0.1}$ & 0.880 $\scriptstyle{\pm 0.000}$ & 99.3 $\scriptstyle{\pm 0.1}$ & 72.1 $\scriptstyle{\pm 1.1}$ \\
    & {\small\textcolor{lflexmdm}{\faSnowflake}} & {\small\textcolor{flexmdm_orange}{\faFire}} & {\small\checkmark} & 0.5 & 99.7 $\scriptstyle{\pm 0.1}$ & 0.820 $\scriptstyle{\pm 0.000}$ & 93.6 $\scriptstyle{\pm 0.5}$ & 73.3 $\scriptstyle{\pm 0.6}$ \\
    & {\small\textcolor{lflexmdm}{\faSnowflake}} & {\small\textcolor{flexmdm_orange}{\faFire}} & {\small\texttimes} & 1.0 & 99.2 $\scriptstyle{\pm 0.1}$ & 0.900 $\scriptstyle{\pm 0.000}$ & 99.6 $\scriptstyle{\pm 0.2}$ & 55.4 $\scriptstyle{\pm 0.7}$ \\
    & {\small\textcolor{lflexmdm}{\faSnowflake}} & {\small\textcolor{flexmdm_orange}{\faFire}} & {\small\texttimes} & 0.9 & 99.4 $\scriptstyle{\pm 0.1}$ & 0.880 $\scriptstyle{\pm 0.000}$ & 99.3 $\scriptstyle{\pm 0.1}$ & 71.0 $\scriptstyle{\pm 0.5}$ \\
    & {\small\textcolor{lflexmdm}{\faSnowflake}} & {\small\textcolor{flexmdm_orange}{\faFire}} & {\small\texttimes} & 0.5 & 99.7 $\scriptstyle{\pm 0.1}$ & 0.850 $\scriptstyle{\pm 0.000}$ & 99.4 $\scriptstyle{\pm 0.1}$ & 79.3 $\scriptstyle{\pm 0.5}$ \\
    \midrule
    \multicolumn{8}{l}{{\OursShort{}}\quad \quad \quad \quad  \quad \quad \quad {\small (Aux. Size=xtiny)}} &\\
    & {\small\textcolor{lflexmdm}{\faSnowflake}} & {\small\textcolor{flexmdm_orange}{\faFire}} & {\small\checkmark} & 1.0 & 99.0 $\scriptstyle{\pm 0.1}$ & 0.900 $\scriptstyle{\pm 0.000}$ & 99.7 $\scriptstyle{\pm 0.0}$ & 55.8 $\scriptstyle{\pm 0.7}$ \\
    & {\small\textcolor{lflexmdm}{\faSnowflake}} & {\small\textcolor{flexmdm_orange}{\faFire}} & {\small\checkmark} & 0.9 & 99.6 $\scriptstyle{\pm 0.1}$ & 0.880 $\scriptstyle{\pm 0.000}$ & 99.6 $\scriptstyle{\pm 0.1}$ & 71.1 $\scriptstyle{\pm 0.5}$ \\
    & {\small\textcolor{lflexmdm}{\faSnowflake}} & {\small\textcolor{flexmdm_orange}{\faFire}} & {\small\checkmark} & 0.5 & 99.7 $\scriptstyle{\pm 0.1}$ & 0.830 $\scriptstyle{\pm 0.000}$ & 93.4 $\scriptstyle{\pm 0.3}$ & 72.2 $\scriptstyle{\pm 0.6}$ \\
    & {\small\textcolor{lflexmdm}{\faSnowflake}} & {\small\textcolor{flexmdm_orange}{\faFire}} & {\small\texttimes} & 1.0 & 99.0 $\scriptstyle{\pm 0.1}$ & 0.900 $\scriptstyle{\pm 0.000}$ & \underline{99.8} $\scriptstyle{\pm 0.1}$ & 55.1 $\scriptstyle{\pm 0.7}$ \\
    & {\small\textcolor{lflexmdm}{\faSnowflake}} & {\small\textcolor{flexmdm_orange}{\faFire}} & {\small\texttimes} & 0.9 & 99.4 $\scriptstyle{\pm 0.1}$ & 0.880 $\scriptstyle{\pm 0.000}$ & 99.6 $\scriptstyle{\pm 0.1}$ & 70.7 $\scriptstyle{\pm 0.6}$ \\
    & {\small\textcolor{lflexmdm}{\faSnowflake}} & {\small\textcolor{flexmdm_orange}{\faFire}} & {\small\texttimes} & 0.5 & 99.7 $\scriptstyle{\pm 0.1}$ & 0.850 $\scriptstyle{\pm 0.000}$ & 99.6 $\scriptstyle{\pm 0.2}$ & \underline{79.4} $\scriptstyle{\pm 0.6}$ \\
    \midrule
    \multicolumn{8}{l}{{\OursShort{}}\quad \quad \quad \quad  \quad \quad \quad {\small (Shared Backbone)}} &\\
    & {\small\textcolor{flexmdm_orange}{\faFire}} & {\small\textcolor{flexmdm_orange}{\faFire}} & {\small\checkmark} & 1.0 & 98.3 $\scriptstyle{\pm 0.1}$ & 0.900 $\scriptstyle{\pm 0.000}$ & 99.4 $\scriptstyle{\pm 0.2}$ & 51.5 $\scriptstyle{\pm 0.6}$ \\
    & {\small\textcolor{flexmdm_orange}{\faFire}} & {\small\textcolor{flexmdm_orange}{\faFire}} & {\small\checkmark} & 0.9 & 99.2 $\scriptstyle{\pm 0.1}$ & 0.890 $\scriptstyle{\pm 0.000}$ & 99.6 $\scriptstyle{\pm 0.1}$ & 69.2 $\scriptstyle{\pm 1.1}$ \\
    & {\small\textcolor{flexmdm_orange}{\faFire}} & {\small\textcolor{flexmdm_orange}{\faFire}} & {\small\checkmark} & 0.5 & \underline{99.8} $\scriptstyle{\pm 0.0}$ & 0.850 $\scriptstyle{\pm 0.000}$ & 96.9 $\scriptstyle{\pm 0.2}$ & 70.8 $\scriptstyle{\pm 1.0}$ \\
    & {\small\textcolor{flexmdm_orange}{\faFire}} & {\small\textcolor{flexmdm_orange}{\faFire}} & {\small\texttimes} & 1.0 & 98.4 $\scriptstyle{\pm 0.1}$ & 0.900 $\scriptstyle{\pm 0.000}$ & 99.2 $\scriptstyle{\pm 0.1}$ & 52.8 $\scriptstyle{\pm 1.1}$ \\
    & {\small\textcolor{flexmdm_orange}{\faFire}} & {\small\textcolor{flexmdm_orange}{\faFire}} & {\small\texttimes} & 0.9 & 99.2 $\scriptstyle{\pm 0.1}$ & 0.890 $\scriptstyle{\pm 0.000}$ & 99.3 $\scriptstyle{\pm 0.1}$ & 66.8 $\scriptstyle{\pm 0.9}$ \\
    & {\small\textcolor{flexmdm_orange}{\faFire}} & {\small\textcolor{flexmdm_orange}{\faFire}} & {\small\texttimes} & 0.5 & 99.5 $\scriptstyle{\pm 0.0}$ & 0.860 $\scriptstyle{\pm 0.000}$ & 99.5 $\scriptstyle{\pm 0.1}$ & 71.2 $\scriptstyle{\pm 0.6}$ \\
    \bottomrule
    \end{tabular}
\end{table*}

\section{Extended Related Work}\label{sec:extended related work}

\looseness=-1
\xhdr{Discrete Flow Matching.}
Flow matching~\citep{lipman2022flow} and equivalently stochastic interpolants~\citep{albergo2023stochastic} learn a probability path $p_t$ to transport samples from a source distribution $p_0$ to a target distribution $p_1$ in bounded continuous time.
The probability path is represented by a time-dependent velocity vector field which together with the initial condition and the continuity equation uniquely determines the probability path.
Discrete flow matching~\citep{campbellGenerativeFlowsDiscrete2024,gatDiscreteFlowMatching2024} extends the generative framework of continuous flow matching to discrete probability spaces where the velocity vector field is replaced by a time-dependent rate matrix of a Continuous Time Markov Chain (CTMC), and the continuity equation is replaced by the Kolmogorov Forward Equation.

\looseness=-1
\xhdr{Variable Length Discrete Flow Matching.}
In domains like text, using fixed length model requires predicting padding tokens to generate variable length sequences. 
Since the pad tokens become extremely frequent in the dataset, the model typically predicts excessive pad tokens~\citep{kim2026rainbow,patel2025improved}.
This has motivated several works to move away from fixed length models towards insertion-based models.
The desiderata of \textit{i)} modeling distributions over variable-length sequences and \textit{ii)} the ability to insert tokens have inspired a number of recent works to move away from fixed-window text diffusion towards insertion- and substitution- based discrete flow models in which positions during the generation process acquire a relative rather than absolute interpretation. 
In particular, FlexMDM \citep{kimAnyOrderFlexibleLength2025} grows a sequence by parameterizing an action space which allows the insertion of a mask token and the unmasking of already-inserted masks. 
\citet{patel2026ctmc} formulate the generative process as a CTMC that inserts tokens at any available gaps between existing tokens, which allows the direct insertions of unmasked tokens, as opposed to the two-step insertion/unmasking approach of FlexMDM.
EditFlows \citep{havasiEditFlowsFlow2025} defines a discrete flow by means of insertion, deletion and substitution operations to support variable-length sequence generation. 
Our approach is general and can be applied to any of these models.

\looseness=-1
\xhdr{Learnable generation order.}
There has been some work highlighting the importance of generation order for fixed-length unmasking models. \citet{kimTrainWorstPlan2025} show that even inference time heuristics can improve the generation quality of masked diffusion models significantly.
However, the work on learning the generation order along with the generator parameters at training time has been limited.
\citet{wangLearningOrderAutoregressiveModels2025} propose a method for learning the generation order for autoregressive models that work with absolute position information.
For insertion models, most of the exiting works focus on models that insert one token at a time and require simulating the generation trajectory for learning the generation order~\citep{guInsertionbasedDecodingAutomatically2019,brantley-etal-2019-non}.
Our approach is general and can be applied to all three settings: fixed-length unmasking, variable-length unmasking, and insertion only models, and combines well with multi-token prediction.

\end{document}